\documentclass{article} 
\usepackage{iclr2027_conference,times}

\usepackage[utf8]{inputenc}
\usepackage[T1]{fontenc}

\usepackage{hyperref}
\usepackage{url}
\usepackage{amsmath}
\usepackage{amssymb}
\usepackage{amsthm}
\usepackage{booktabs}
\usepackage{graphicx}
\usepackage{adjustbox}
\usepackage{wrapfig}
\usepackage{float}
\usepackage{multirow}
\usepackage{multicol}
\usepackage{color}
\usepackage{colortbl}
\usepackage{tabularx}
\usepackage{array}
\usepackage{caption}
\usepackage{subcaption}
\usepackage{algorithm,algpseudocode}
\usepackage{xspace}
\usepackage{cleveref}
\usepackage{pifont}
\usepackage[inline]{enumitem}

\usepackage{todonotes}

\usepackage{siunitx}
\usepackage{amsfonts}
\usepackage{mathtools}
\usepackage{tikz}
\usetikzlibrary{calc,arrows.meta,positioning,backgrounds}
\usetikzlibrary{calc,arrows.meta,positioning,backgrounds,shadings}

\usepackage{makecell,ragged2e,rotating}
\usepackage[dvipsnames]{xcolor}

\def\eqref#1{equation~\ref{#1}}

\def\1{\bm{1}}

\DeclareMathAlphabet{\mathsfit}{\encodingdefault}{\sfdefault}{m}{sl}
\SetMathAlphabet{\mathsfit}{bold}{\encodingdefault}{\sfdefault}{bx}{n}

\newcommand{\methodname}{SFPRM\xspace}
\newcommand{\methodlong}{Spherical Fixed-Point Reasoning Models\xspace}

\newcommand{\reals}{\mathbb{R}}
\newcommand{\sph}{\mathbb{S}}

\DeclareMathOperator{\Norm}{Norm}
\DeclareMathOperator{\Slerp}{Slerp}
\DeclareMathOperator{\gSlerp}{gSlerp}
\DeclareMathOperator{\nlerp}{nlerp}
\DeclareMathOperator{\Attn}{Attn}
\DeclareMathOperator{\MLP}{MLP}
\DeclareMathOperator{\Exp}{Exp}
\DeclareMathOperator{\Log}{Log}
\DeclareMathOperator{\arccosop}{arccos}
\DeclareMathOperator{\Real}{Re}

\newcommand{\inner}[2]{\langle #1,\, #2\rangle}
\newcommand{\norm}[1]{\lVert #1\rVert}

\theoremstyle{plain}
\newtheorem{proposition}{Proposition}
\newtheorem{lemma}{Lemma}
\theoremstyle{definition}

\theoremstyle{remark}
\newtheorem{remark}{Remark}
\theoremstyle{plain}
\newtheorem{theorem}{Theorem}

\definecolor{fprmcol}{RGB}{198,86,20}    
\definecolor{slfprcol}{RGB}{27,94,196}   
\newcommand{\dfp}[1]{\textcolor{fprmcol}{#1}}
\newcommand{\dsl}[1]{\textcolor{slfprcol}{#1}}

\definecolor{tablehead}{HTML}{26384A}
\definecolor{lightrow}{HTML}{F3F6F8}
\definecolor{fprmorange}{RGB}{198,86,20}
\definecolor{slfprblue}{RGB}{27,94,196}
\definecolor{yesgreen}{RGB}{0,125,80}
\definecolor{approxorange}{RGB}{190,100,0}

\newcommand{\keybox}[1]{%
  \par\noindent
  {\setlength{\fboxsep}{4pt}\setlength{\fboxrule}{0.7pt}%
   \fcolorbox{black!30}{lightrow}{\begin{minipage}{\dimexpr\linewidth-2\fboxsep-2\fboxrule\relax}#1\end{minipage}}}%
  \par
}

\newcommand{\filmrow}[4][]{%
  \rotatebox[origin=c]{90}{\footnotesize puzzle $#2$} &
  \raisebox{-.5\height}{\includegraphics[width=0.465\linewidth,#1]{#3/filmstrip_p#2.pdf}} &
  \raisebox{-.5\height}{\includegraphics[width=0.465\linewidth,#1]{#4/filmstrip_p#2.pdf}} \\[50pt]}

\newcolumntype{Y}{>{\RaggedRight\arraybackslash}X}

\newcommand{\fpyes}{\textcolor{yesgreen}{\textbf{Yes}}}
\newcommand{\fpno}{\textcolor{red!70!black}{\textbf{No}}}
\newcommand{\fpapprox}{%
  \textcolor{approxorange}{\textbf{Appr.}}%
}

\newcommand{\sudokuFdir}{figures/sudoku/frames/fprm_hf_sudoku}
\newcommand{\sudokuSdir}{figures/sudoku/frames/slfpr_ep15k_sudoku}
\newcommand{\mazeFdir}{figures/maze/maze_official_fprm_ckpt/filmstrip/fprm_hf_maze}
\newcommand{\mazeSdir}{figures/maze/maze_official_fprm_ckpt/filmstrip/slfpr_lamvdw_maze}
\newcommand{\arcFdir}{figures/miniarc/miniarc_filmstrip/fprm}
\newcommand{\arcSdir}{figures/miniarc/miniarc_filmstrip/sfprm}
\newcommand{\arceFdir}{figures/miniarc/filmstrips_more/fprm}
\newcommand{\arceSdir}{figures/miniarc/filmstrips_more/sfprm}
\newcommand{\sudokutop}[1]{\filmrow[trim={0 50pt 0 0},clip]{#1}{\sudokuFdir}{\sudokuSdir}}
\newcommand{\sudokumid}[1]{\filmrow[trim={0 50pt 0 9pt},clip]{#1}{\sudokuFdir}{\sudokuSdir}}
\newcommand{\sudokubot}[1]{\filmrow[trim={0 0 0 9pt},clip]{#1}{\sudokuFdir}{\sudokuSdir}}
\newcommand{\mazetop}[1]{\filmrow[trim={0 50pt 0 0},clip]{#1}{\mazeFdir}{\mazeSdir}}
\newcommand{\mazemid}[1]{\filmrow[trim={0 50pt 0 9pt},clip]{#1}{\mazeFdir}{\mazeSdir}}
\newcommand{\mazebot}[1]{\filmrow[trim={0 0 0 9pt},clip]{#1}{\mazeFdir}{\mazeSdir}}
\newcommand{\arctop}[1]{\filmrow[trim={0 50pt 0 0},clip]{#1}{\arcFdir}{\arcSdir}}
\newcommand{\arcmid}[1]{\filmrow[trim={0 50pt 0 9pt},clip]{#1}{\arcFdir}{\arcSdir}}
\newcommand{\arcbot}[1]{\filmrow[trim={0 0 0 9pt},clip]{#1}{\arcFdir}{\arcSdir}}
\newcommand{\arcetop}[1]{\filmrow[trim={0 50pt 0 0},clip]{#1}{\arceFdir}{\arceSdir}}
\newcommand{\arcemid}[1]{\filmrow[trim={0 50pt 0 9pt},clip]{#1}{\arceFdir}{\arceSdir}}
\newcommand{\arcebot}[1]{\filmrow[trim={0 0 0 9pt},clip]{#1}{\arceFdir}{\arceSdir}}
\providecommand{\cmark}{\ding{51}}
\providecommand{\xmark}{\ding{55}}

\algblockdefx[NoGrad]{NoGrad}{EndNoGrad}{\textbf{with} \textsc{no\_grad}\textbf{:}}{}

\providecolor{fprmorange}{RGB}{213,94,0}
\providecolor{slfprblue}{RGB}{0,114,178}
\providecommand{\dfp}[1]{\textcolor{fprmorange}{#1}}
\providecommand{\dsl}[1]{\textcolor{slfprblue}{#1}}
\newcommand{\ca}{\textcolor{red}{\mathbf{\alpha}}}
\newcommand{\cb}{\textcolor{ForestGreen}{\mathbf{\beta}}}
\newcommand{\cc}{\textcolor{blue}{\mathbf{\gamma}}}
\newcommand{\cd}{\textcolor{magenta}{\mathbf{\delta}}}

\title{Spherical Fixed-Point Reasoners: \\Adaptive Loop Halting on the Hypersphere}

\author{%
Arber Zela$^{\ca}$ \quad
Artavazd Maranjyan$^{\ca}$ \quad
Pol Puigdemont$^{\ca}$ \quad
David Berghaus$^{\cb,\cc}$ \quad \\[2pt]
\textbf{Valentyn Boreiko$^{\cd}$ \quad Volkan Cevher$^{\ca}$} \\[3pt]
$^{\ca}$EPFL \quad
$^{\cb}$Lamarr Institute \quad
$^{\cc}$Fraunhofer IAIS \quad
$^{\cd}$Amazon AGI
}

\iclrfinalcopy 
\begin{document}

\maketitle

\begin{abstract}
Looped (weight-tied) transformers enable adaptive computation by allocating more iterations to harder inputs. Fixed-Point Reasoners (FPRM) achieve this with a damped fixed-point solver, but operate in an unbounded Euclidean residual space with coupled rescaling and relative-norm halting. We introduce \emph{Spherical Fixed-Point Reasoners} (SFPRM), which performs the same computation on the unit hypersphere of normalized transformers (nGPT). Input injection, damping, and halting become geodesic operations, yielding boundedness by construction and eliminating coefficient coupling. We prove fixed-point existence, local convergence under an explicit step-size condition, and global convergence under strong injection. 
On Sudoku-Extreme, Maze-Hard and Mini-ARC, SFPRM achieves $96.6\%$, $87.9\%$ and $59.1\%$ test accuracy, respectively, outperforming FPRM and other baselines while allocating test-time compute more effectively. Its converged geometry is also interpretable: token–input angles capture puzzle structure and predict correctness.
\end{abstract}

\section{Introduction}
\label{sec:intro}

Scaling test-time computation has become the main lever for improving the reasoning ability of sequence models, most visibly through chain-of-thought traces \citep{wei2022chain,wang2023self,cobbe2021training,yao2023tree,snell2025scaling}. Looped transformers offer a latent alternative \citep{dehghani2019universal,Bai2019DEQ,geiping2025recurrent}: a single weight-tied block is applied repeatedly, so depth becomes an iteration count chosen at inference time, and compute can be allocated per input according to its difficulty. On algorithmic puzzles such as Sudoku, mazes, and ARC, this recipe lets small recursive reasoners compete with far larger models \citep{Wang2025HRM,JolicoeurMartineau2025TRM,Movahedi2026FPRM}.

A looped transformer raises two questions: \emph{when should it stop}, and \emph{will it stay well-behaved as it runs}? The fixed-point view \citep{Bai2019DEQ}, answers the first by treating the loop as a solver for $\mathbf z^\star=F_\theta(\mathbf z^\star;\mathbf x)$ and halting once the iterate stops moving. Fixed-Point Reasoners (FPRM) \citep{Movahedi2026FPRM} turn this into a practical recipe: a damped iteration whose step size decays on a patience rule, a relative-norm residual as the halting test, and a boundedness theorem that keeps the recurrent state from diverging. The second question is where the Euclidean formulation shows its cost. Because the residual stream is free to grow, a boundedness theorem must couple four residual-rescaling coefficients, and the halting residual needs a normalization factor to be scale-invariant. 

We introduce \emph{\methodlong} (\methodname), which run FPRM's damped solver on nGPT's \citep{Loshchilov2024nGPT} unit sphere (\Cref{fig:arch}). 
The normalized Transformer (nGPT) obtains stable dynamics by changing the \emph{geometry} rather than adding control: every hidden state lives on the unit hypersphere $\sph^{d-1}$ and each sub-layer is a learned angular move followed by renormalization, so the divergence FPRM must guard against cannot occur.
Every control quantity of the loop becomes an angle: the input is re-injected by moving a fraction $\lambda$ of the great-circle arc toward it, the damped update is a spherical linear interpolation ($\Slerp$) whose coefficient $\eta_i$ is an angular step size that FPRM's patience-based decay literally shrinks, and the halting residual is the worst-token angle between an iterate and its proposal. Our contributions are:
\begin{enumerate}[label=(\roman*), leftmargin=2.5em]
  \item \textbf{Simplicity}: because the state cannot leave the sphere, boundedness is structural, and FPRM's coupled residual-rescaling coefficients, its boundedness theorem, and its relative-norm residual disappear rather than being re-proved;
  \item \textbf{guarantees}: a fixed point exists from continuity and input injection alone, with no contraction or Lipschitz assumption; angular damping stabilizes any fixed point whose tangent Jacobian has real parts below one, with an explicit step-size window; and sufficiently strong injection makes it a unique global attractor;
  \item \textbf{performance}: on Sudoku-Extreme and Maze-Hard, \methodname halts up to $2.7\times$ earlier than FPRM at equal or higher accuracy; and
  \item \textbf{interpretability}: the angle between each token state and its input anchor separates token classes during the recursion, converges to a readable encoding of the task's structure, and, on Sudoku, predicts whether the answer is correct.
\end{enumerate}

\begin{figure}[t]
\centering
\resizebox{0.99\linewidth}{!}{%
\begin{tikzpicture}[
  font=\normalsize,
  >={Latex[length=2.4mm, width=1.9mm]},
  box/.style={draw=black!60, line width=1.0pt, rounded corners=2pt, fill=black!3,
              minimum height=0.85cm,
              minimum width=2.4cm, inner sep=2pt, align=center, font=\small},
  minibox/.style={draw=slfprblue!70, line width=0.9pt, rounded corners=2pt, fill=slfprblue!6,
              minimum height=0.72cm, minimum width=1.55cm, inner sep=1.5pt,
              align=center, font=\footnotesize},
  bigblock/.style={draw=slfprblue, line width=1.3pt, rounded corners=5pt, fill=slfprblue!12,
                   minimum width=2.2cm, minimum height=1.3cm, align=center, font=\small},
  opnode/.style={circle, draw=slfprblue, line width=1.2pt, fill=slfprblue!15,
                 inner sep=1.3pt, font=\small},
  miniop/.style={circle, draw=slfprblue!85, line width=0.9pt, fill=slfprblue!12,
                 inner sep=1.0pt, font=\footnotesize},
  haltnode/.style={draw=yesgreen, line width=1.3pt, rounded corners=3pt, fill=yesgreen!12,
                   inner sep=3pt, align=center, font=\small,
                   minimum width=1.8cm, minimum height=1.1cm},
  arr/.style={->, line width=1.0pt, black!75},
  state/.style={font=\small, text=slfprblue!80!black},
  note/.style={font=\small, text=black!75, align=center},
  ptitle/.style={font=\normalsize\bfseries, text=black!88, align=center},
]

\node[ptitle] at (2.55,6.70) {(a) Outer fixed-point loop and the spherical block};

\node[opnode]   (inject) at (0,4.50) {$\Slerp_\lambda$};
\node[bigblock] (block)  at (2.10,4.50) {\textbf{Spherical}\\[-1pt]\textbf{block} $f_\theta$};
\node[opnode]   (gs)     at (4.30,4.50) {$\Slerp_{\eta_i}$};
\node[haltnode] (halt)   at (7.40,4.50)
      {\textcolor{yesgreen!70!black}{\textbf{Halt?}}\ \ $r_{\mathrm c,i}<\tau_c$\\[2pt]
       {\scriptsize $r_{\mathrm c,i}=\max_t\big(1-\inner{z_{i,t}}{\widetilde z_{i,t}}\big)$}};

\node[box, fill=fprmorange!12, draw=fprmorange, minimum width=1.5cm] (input) at (-2.0,4.50)
      {\footnotesize input $\mathbf x$};
\draw[arr, fprmorange!85!black, line width=1.3pt] (input) -- (inject);

\draw[arr] (inject) -- (block);
\draw[arr] (block)  -- node[above, state]{$\widetilde{\mathbf z}_i$} (gs);
\draw[arr] (gs)     -- node[above, state]{$\mathbf z_{i+1}$} (halt);

\draw[line width=1.0pt, black!75] (halt.north) -- (7.40,6.0) -- (0,6.0);
\draw[arr] (0,6.0) -- (inject);
\draw[arr] (4.30,6.0) -- (gs);
\node[state] at (-0.20,5.55) {$\mathbf z_i$};
\node[state] at (4.10,5.55) {$\mathbf z_i$};
\node[font=\small, text=black!75, anchor=west] at (7.42,5.55) {\textbf{No}};

\node[box, fill=yesgreen!12, draw=yesgreen!80] (read) at (7.40,2.80)
      {\footnotesize$\hat y=s_z\,h_\phi(\mathbf z^\star)$};
\draw[arr, yesgreen!70!black, line width=1.3pt] (halt) --
      node[right, font=\small, text=yesgreen!55!black, pos=0.5]{\textbf{Yes}\ $\mathbf z^\star$} (read);

\begin{scope}[shift={(2.10,2.15)}]
  \node[draw=slfprblue!60, dashed, line width=1.0pt, rounded corners=6pt, fill=slfprblue!4,
        minimum width=7.0cm, minimum height=1.8cm] at (0,0) {};

  \node[minibox] (attn) at (-2.10,-0.15) {Attention\\[-3pt]{\scriptsize (QK-norm)}};
  \node[miniop]  (ra)   at (-0.60,-0.15) {$R_{\alpha_A}$};
  \node[minibox] (mlp)  at ( 0.95,-0.15) {MLP\\[-3pt]{\scriptsize (SwiGLU)}};
  \node[miniop]  (rm)   at ( 2.50,-0.15) {$R_{\alpha_M}$};

  \draw[arr] (-3.30,-0.15) -- node[above, state, font=\footnotesize, pos=0.2]{} (attn);
  \draw[arr] (attn) -- (ra);
  \draw[arr] (ra)   -- (mlp);
  \draw[arr] (mlp)  -- (rm);
  \draw[arr] (rm)   -- (3.3,-0.15);

  \draw[arr, black!55] (-3.10,-0.15) -- (-3.10,0.58) -- (-0.60,0.58) -- (ra.north);
  \draw[arr, black!55] (-0.10,-0.15) -- (-0.10,0.58) -- ( 2.50,0.58) -- (rm.north);
\end{scope}

\draw[dashed, line width=0.9pt, slfprblue!55] (1.00,3.85) -- (-1.40,3.0);
\draw[dashed, line width=0.9pt, slfprblue!55] (3.20,3.85) -- ( 5.60,3.0);

\begin{scope}[shift={(11.90,4.30)}, scale=0.9]
  \node[ptitle] at (0,2.75) {(b) Halting on the sphere};

  \fill[slfprblue!4] (0,0) circle (1.5);
  \draw[black!60, line width=1.1pt] (0,0) circle (1.5);
  \draw[dashed, line width=0.9pt, black!40] (0,0) ellipse (1.5 and 0.44);
  \node[note, font=\footnotesize] at (-1.58,-1.22) {$\sph^{d-1}$};

  \fill[yesgreen!25, opacity=0.85]
       (0,0) -- (66:1.5) arc[start angle=66, end angle=106, radius=1.5] -- cycle;
  \draw[yesgreen!65!black, line width=1.1pt] (0,0) -- (66:1.5);
  \draw[yesgreen!65!black, line width=1.1pt] (0,0) -- (106:1.5);
  \draw[yesgreen!60!black, dotted, line width=0.9pt] (0,0) -- (86:1.5);
  \draw[yesgreen!65!black, line width=0.9pt, -] (86:0.72) arc[start angle=86, end angle=106, radius=0.72];
  \node[font=\scriptsize, text=yesgreen!55!black, fill=white, inner sep=0.8pt,
        rounded corners=1pt] at (112:0.78) {$\tau_{\mathrm{ang}}$};

  \foreach \a in {178, 138, 108, 86}{ \fill[slfprblue] (\a:1.5) circle (1.6pt); }
  \draw[arr, slfprblue, line width=1.2pt] (174:1.5) arc[start angle=174, end angle=142, radius=1.5];
  \draw[arr, slfprblue, line width=1.2pt] (134:1.5) arc[start angle=134, end angle=113, radius=1.5];
  \draw[arr, slfprblue, line width=1.2pt] (104:1.5) arc[start angle=104, end angle= 90, radius=1.5];
  \node[font=\footnotesize, text=slfprblue!90!black] at (178:1.88) {$z_{0,t}$};
  \node[font=\footnotesize, text=slfprblue!90!black] at (140:1.90) {$z_{1,t}$};
  \node[font=\footnotesize, text=slfprblue!90!black] at (115:1.92) {$z_{2,t}$};
  \node[font=\footnotesize, text=slfprblue!90!black] at ( 90:1.92) {$z_{3,t}$};
  \node[font=\footnotesize, text=slfprblue!90!black, fill=white, fill opacity=0.85,
        text opacity=1, inner sep=1pt, rounded corners=1pt] at (150:1.05) {$\eta_i\Omega_{i,t}$};

  \draw[slfprblue, line width=1.2pt, fill=white] (72:1.5) circle (1.7pt);
  \node[font=\footnotesize, text=slfprblue!90!black] at (70:2.02) {$\widetilde z_{3,t}$};
  \draw[fprmorange, line width=1.5pt] (60:1.34) -- (60:1.66);
  \node[font=\footnotesize, text=fprmorange!85!black] at (52:1.82) {$z_t^\star$};

  \node[note, anchor=north, text width=4.9cm] at (0,-1.78)
    {Each step rotates $z_{i,t}$ a fraction $\eta_i$ of the way to its proposal
     $\widetilde z_{i,t}$ along a great circle; the loop halts once
     $\Omega_{i,t}=\angle(z_{i,t},\widetilde z_{i,t})<\tau_{\mathrm{ang}}$ for every token,
     i.e.\ $r_{\mathrm c,i}<\tau_c=1-\cos\tau_{\mathrm{ang}}$.};
\end{scope}
\end{tikzpicture}}
\caption{\textbf{\methodname overview.} \emph{(a)} Each iteration interpolates the current state
toward the normalized input ($\Slerp_\lambda$), applies the weight-tied spherical nGPT block to
obtain the proposal $\widetilde{\mathbf z}_i$, and rotates a fraction $\eta_i$ of the way toward it
($\Slerp_{\eta_i}$); the loop halts once the worst-token chordal residual $r_{\mathrm c,i}$ falls below
$\tau_c$. The eigen-learning-rate retractions $R_\alpha$ inside the block keep every token state on
unit sphere $\sph^{d-1}$. \emph{(b)} One token's iterates rotate along a great circle by shrinking angles
$\eta_i\Omega_{i,t}$; the halting test compares each proposal with the current iterate, so it never
needs the fixed point $z_t^\star$.}
\label{fig:arch}
\end{figure}

\section{Related Work}
\label{sec:related}

\textbf{Looped and weight-tied transformers.}
Tying the weights of a transformer block \citep{vaswani2017attention} and applying it repeatedly turns depth into an iteration count at inference time \citep{dehghani2019universal,lan2020albert}. Looped models match much deeper non-looped ones on synthetic reasoning tasks and can simulate chain-of-thought in latent space \citep{saunshi2025latent}, learn in-context algorithms more sample-efficiently \citep{yang2024looped,gatmiry2024looped}, emulate programmable computation \citep{giannou2023programmable}, generalize in length when the loop matches the algorithm's recursion \citep{fan2025looped}, scale to billions of parameters \citep{geiping2025recurrent,prairie2026parcaescalinglawsstable}, and supply a scaling axis for masked diffusion models \citep{carballo2026recursive,miele2026fixedpoint}. \methodname is a looped transformer whose loop is run as a fixed-point solve on the sphere.

\textbf{Equilibrium models, adaptive computation, and halting.}
Adaptive Computation Time \citep{Graves2016ACT} and PonderNet \citep{banino2021pondernet} learn a halting distribution through an auxiliary loss, so halting is a learned head rather than a property of the dynamics. Recent methods such as HRM \citep{Wang2025HRM}, URM \citep{gao2025urm}, TRM \citep{JolicoeurMartineau2025TRM}, and PTRM \citep{sghaier2026ptrm} inherit this design. Deep equilibrium models \citep{Bai2019DEQ,bai2020mdeq,winston2020monotone,bai2021stabilizing} replace the unrolled iteration by the fixed point of a weight-tied layer and differentiate implicitly through it. Attractor models \citep{FeinAshley2026Attractor} and equilibrium reasoners \citep{Huang2026EqR} make the loop's attractor explicit and halt on a solver tolerance or a residual-based selection, and FPRM \citep{Movahedi2026FPRM} halts on a scale-invariant residual of the tied block itself, at the cost of coupled residual-rescaling coefficients that keep its Euclidean residual stream bounded; iterating past the point where the answer has settled is itself harmful \citep{bansal2022overthinking,schwarzschild2021algorithm}. \Cref{tab:unify} sorts these reasoners by whether a numerical fixed-point solve is performed and what governs halting; \methodname alone combines a genuine solve with an intrinsic, scale-free halting criterion on a curved state space.

\textbf{Normalized and hyperspherical representations.}
Constraining representations to a sphere is a long-standing stabilization device, for parameters \citep{salimans2016weightnorm,miyato2018spectral}, latent variables \citep{davidson2018hyperspherical}, contrastive representations \citep{wang2020alignment}, and flow language models that rotate tokens on $\sph^{d-1}$ \citep{deschenaux2026hyperspherical}. nGPT \citep{Loshchilov2024nGPT} carries the constraint through an entire transformer, with every hidden state and weight vector on $\sph^{d-1}$ and each sub-layer a normalized retraction toward a learned target. \methodname adopts that block, hence every optimizer and solver step becomes an angular update.

\begin{table*}[t]
\centering
\caption{Recent recurrent reasoners: HRM \citep{Wang2025HRM}, TRM
\citep{JolicoeurMartineau2025TRM}, URM \citep{gao2025urm}, Attractor \citep{FeinAshley2026Attractor},
EqR \citep{Huang2026EqR} and FPRM \citep{Movahedi2026FPRM}. ``Appr.'' means no numerical fixed-point solve is performed.}
\label{tab:unify}

\scriptsize
\setlength{\tabcolsep}{3.5pt}
\renewcommand{\arraystretch}{1.0}

\begin{tabularx}{\textwidth}{
  @{}
  >{\RaggedRight\arraybackslash}p{1.15cm}
  >{\RaggedRight\arraybackslash}p{3.4cm}
  >{\centering\arraybackslash}p{0.9cm}
  >{\hsize=1.14\hsize}Y
  >{\hsize=0.86\hsize}Y
  @{}
}
\toprule
\textbf{Method} &
\textbf{Recurrence} &
\textbf{Fix.\ pt.} &
\textbf{State update and input} &
\textbf{Halting / output} \\
\midrule

\textbf{HRM} &
Fast $L$-loop $+$ slow $H$-loop &
\fpapprox &
Euclidean $(z_L,z_H)$; input drives $L$ &
ACT head; read $z_H$ \\

\textbf{TRM} &
Shared net refines $z$, answer $y$ &
\fpno &
Euclidean; input re-read each step &
ACT head; read $y$ \\

\textbf{URM} &
UT; fixed inner $+$ ACT outer loop &
\fpno &
Euclidean; short conv in MLP &
ACT head, ${\le}16$ steps \\

\textbf{EqR} &
Tied loop; $B$ stochastic restarts &
\fpapprox &
Euclidean; random init, path noise &
ACT; lowest-residual run \\

\textbf{Attractor} &
Backbone $+$ equilibrium loop &
\fpyes &
Backbone $\tilde y_0$ seeds each step &
Solver tol.; read $\tilde y^\star$ \\

\textcolor{fprmorange}{\textbf{FPRM}} &
Tied Transformer loop &
\fpyes &
$(\alpha_1,\beta_1)$ mix; $(\alpha_2,\beta_2)$ injection &
Rel.-$\ell_\infty$ residual \\

\midrule

\textcolor{slfprblue}{\textbf{\methodname}} &
Tied nGPT spherical loop &
\fpyes &
Eigen-LR motion; $\lambda$ geodesic injection &
Chordal residual \\

\bottomrule
\end{tabularx}
\end{table*}

\section{\methodlong}
\label{sec:method}


\subsection{Preliminaries}
\label{sec:background}

\textbf{Notation.}
For a sequence of $T$ tokens with hidden dimension $d$, the recurrent state at loop iteration $i$ is $\mathbf z_i=(z_{i,1},\ldots,z_{i,T})$ with token states $z_{i,t}\in\reals^d$. We use $i$ for loop iterations, $t$ for token positions, and $\ell$ for sub-layers. $\sph^{d-1} := \{ z \in\reals^d : \norm{z} = 1  \}$ denotes the unit hypersphere in $\reals^d$. The embedded input is $\mathbf x=(x_1,\ldots,x_T)$. Tuples carry the product inner product  $\inner{\mathbf u}{\mathbf v}:=\sum_{t=1}^{T}\inner{u_t}{v_t}$ and norm $\norm{\mathbf u}^2:=\sum_{t=1}^{T}\norm{u_t}_2^2$. Lowercase $f_\theta$ always denotes the weight-tied network, uppercase $F_\theta(\cdot\,;\mathbf x)$ the complete input-dependent recurrent map, and $h_\phi$ the readout.

\textbf{Looped transformers and fixed-point reasoning.}
A looped Transformer applies a weight-tied network repeatedly to a state that is re-exposed to the same input $\mathbf x$ at every application: $\mathbf z_{i+1}=F_\theta(\mathbf z_i;\mathbf x)$, for $i=0,1,\ldots$, and predicts with a readout $h_\phi(\mathbf z_i)$. A convergent iteration reaches a fixed point $\mathbf z^\star=F_\theta(\mathbf z^\star;\mathbf x)$, the equilibrium view of \citet{Bai2019DEQ}. The prediction is determined by the fixed point, while the number of iterations needed to reach it is chosen per input. 
FPRM \citep{Movahedi2026FPRM} uses a pre-norm block of depth $D$ ($2D$ attention and MLP sub-layers). Because the residual stream is free to grow indefinitely, FPRM uses tied residual coefficients $(\alpha_1,\beta_1)$ and iteration-level coefficients $(\alpha_2,\beta_2)$:
\begin{equation}
  \mathbf z_i^\ell
  =
  \alpha_1 \mathbf z_i^{\ell-1}
  +
  \beta_1 f_{\theta^\ell}^{\ell}\!\left(\Norm_{\mathrm{pre}}(\mathbf z_i^{\ell-1})\right),
  \quad \ell=1,\ldots,2D,
  \qquad
  \mathbf z_{i+1}^{0}=\alpha_2 \mathbf z_i^{2D}+\beta_2\mathbf x .
  \label{eq:fprm-scaling}
\end{equation}
Via a boundedness theorem that couples the four coefficients:,
\begin{equation}
\textstyle
  \beta_2=1-\alpha_2\alpha_1^{2D},
  \qquad
  \beta_1=\frac{\beta_2(1-\alpha_1)}{1-\alpha_1^{2D}},
  \qquad
  0\leq\alpha_1,\alpha_2<1,
  \label{eq:fprm-coeff}
\end{equation}
ensures that the recurrent carry path (gain $\alpha_2\alpha_1^{2D}$) is balanced against the injected input. Given a proposal $\widetilde{\mathbf z}_i=F_\theta(\mathbf z_i;\mathbf x)$, FPRM damps linearly:
\begin{equation}
  \mathbf z_{i+1}=(1-\eta_i)\mathbf z_i+\eta_i\widetilde{\mathbf z}_i,
  \label{eq:fprm-damping}
\end{equation}
with a per-sample $\eta_i\in(0,1]$ that is decayed on a patience rule whenever the residual stalls, and halts on the relative residual:
\begin{equation}
\textstyle
  r_{\mathrm E,i}
  =
  \frac{\norm{\mathbf z_i-\widetilde{\mathbf z}_i}_\infty}{\norm{\widetilde{\mathbf z}_i}_\infty+\epsilon},
  \qquad
  \norm{\mathbf u}_\infty:=\max_{1\le t\le T,\,1\le j\le d}|(u_t)_j|,
  \label{eq:fprm-halt}
\end{equation}
once $r_{\mathrm E,i}<\tau_{\mathrm E}$ or the budget is exhausted. The coefficient coupling and the denominator of the residual exist to \emph{contain} a Euclidean residual stream whose scale is not fixed. Rather than differentiate implicitly through the equilibrium as in \citet{Bai2019DEQ}, FPRM trains with truncated BPTT over windows of $K$ iterations, deep supervision, and state detachment between windows.

\textbf{nGPT and spherical geometry.}
nGPT \citep{Loshchilov2024nGPT} constrains every hidden state to the unit sphere $\sph^{d-1}$ via $\Norm(u)=u/\norm{u}_2$ for a nonzero token vector and folds each sub-layer output into the state through learned, per-coordinate \emph{eigen learning rates} $\alpha_A,\alpha_M\in\reals^d$:
\begin{equation}
\textstyle
  \begin{aligned}
  \mathbf z&\leftarrow\Norm\!\big(\mathbf z+\alpha_A\odot(\Norm(\Attn(\mathbf z))-\mathbf z)\big),\\
  \mathbf z&\leftarrow\Norm\!\big(\mathbf z+\alpha_M\odot(\Norm(\MLP(\mathbf z))-\mathbf z)\big).
  \end{aligned}
  \label{eq:ngpt-retraction}
\end{equation}
$\Norm$ acts token-wise on tuples, and $\alpha_A,\alpha_M$ are shared across tokens. Each sub-layer is a normalized retraction toward a learned target, so unit norm is preserved by construction rather than enforced by a constraint \citep{absil2008optimization}. For unit vectors $u,v$ write $\angle(u,v):=\arccosop(\inner{u}{v})$. For $\Omega=\angle(u,v)\in(0,\pi)$, the spherical linear interpolation:
\begin{equation}
\textstyle
  \label{eq:slerp}
  \Slerp_{\eta}(u,v)
  =
  \frac{\sin((1-\eta)\Omega)}{\sin\Omega}\,u
  +
  \frac{\sin(\eta\Omega)}{\sin\Omega}\,v,
  \qquad \eta\in[0,1],
\end{equation}
moves a fraction $\eta$ along the shortest great-circle arc from $u$ to $v$ and stays on the sphere. For coincident endpoints we set $\Slerp_{\eta}(u,u):=u$, the limit of \cref{eq:slerp} as $v\to u$; at an antipodal pair ($v=-u$) the shortest arc is not unique, so we use \cref{eq:gslerp} (\Cref{sec:app-slerp}). For small $\Omega$, $\Slerp$ agrees with normalized linear interpolation $\Norm((1-\eta)u+\eta v)$ up to $O(\Omega^3)$ (\Cref{lem:gslerp}).

\subsection{\methodname: the recurrent map and its solver}
\label{sec:slfpr-core}

\methodname is a looped Transformer model whose recurrent state lives on the unit hypersphere and whose
loop is run as a damped fixed-point solver. \Cref{alg:slfpr-simple} places it next to FPRM \citep{Movahedi2026FPRM}, its Euclidean
predecessor. Every difference is a Euclidean
operation replaced by its spherical one: the state is confined to $\mathcal S$ by a weight-tied nGPT
block (line~\ref{alg:line-block}, \cref{eq:ngpt-retraction}), affine injection becomes a geodesic
move a fraction $\lambda$ of the arc toward the input (line~\ref{alg:line-inject},
\cref{eq:slfpr-map}), the relative residual becomes the chordal one (line~\ref{alg:line-residual},
\cref{eq:slfpr-residual}), and linear damping becomes a rotation a fraction $\eta_i$ of the way
toward the proposal (line~\ref{alg:line-damp}, \cref{eq:slfpr-step}). 

\begin{wrapfigure}[21]{r}{0.45\linewidth}
\vspace{-5mm}
\centering
\resizebox{\linewidth}{!}{
\begin{tikzpicture}[>=Latex, font=\small,
  lbl/.style={font=\footnotesize}, note/.style={font=\scriptsize, black!65}]

\begin{scope}[shift={(0.45,0)}]
  \draw[black!20, thin] (0,0) circle ({4*0.21});
  \draw[black!20, thin] (0,0) circle (2.520);
  \node[note, anchor=south west, inner sep=1pt] at ({4*0.21*0.7071},{4*0.21*0.7071}) {$4^\circ$};
  \node[note, anchor=south west, inner sep=1pt] at ({2.520*0.7071},{2.520*0.7071}) {$12^\circ$};
  \draw[black!55, densely dashed, thin] (0,0) circle ({7.5*0.21});
  \fill[fprmcol, fill opacity=0.18] plot[smooth cycle] coordinates {(2.036,0.000) (2.199,0.231) (2.376,0.505) (2.056,0.668) (1.794,0.799) (1.572,0.907) (1.377,1.000) (1.201,1.082) (1.042,1.156) (0.890,1.225) (0.746,1.291) (0.603,1.354) (0.460,1.417) (0.315,1.480) (0.162,1.545) (0.000,1.613) (-0.178,1.687) (-0.376,1.767) (-0.604,1.860) (-0.738,1.656) (-0.851,1.474) (-0.953,1.312) (-1.048,1.164) (-1.139,1.025) (-1.227,0.892) (-1.315,0.759) (-1.405,0.625) (-1.499,0.487) (-1.601,0.340) (-1.714,0.180) (-1.841,0.000) (-1.994,-0.210) (-2.056,-0.438) (-1.805,-0.586) (-1.595,-0.710) (-1.416,-0.817) (-1.255,-0.912) (-1.111,-1.000) (-0.974,-1.082) (-0.844,-1.162) (-0.716,-1.240) (-0.587,-1.320) (-0.455,-1.402) (-0.316,-1.487) (-0.166,-1.582) (-0.000,-1.687) (0.190,-1.809) (0.415,-1.954) (0.585,-1.801) (0.729,-1.639) (0.860,-1.490) (0.981,-1.350) (1.094,-1.215) (1.203,-1.083) (1.310,-0.952) (1.417,-0.819) (1.527,-0.680) (1.640,-0.533) (1.761,-0.374) (1.892,-0.199) (2.036,-0.000)};
  \draw[fprmcol, very thick] plot[smooth cycle] coordinates {(2.036,0.000) (2.199,0.231) (2.376,0.505) (2.056,0.668) (1.794,0.799) (1.572,0.907) (1.377,1.000) (1.201,1.082) (1.042,1.156) (0.890,1.225) (0.746,1.291) (0.603,1.354) (0.460,1.417) (0.315,1.480) (0.162,1.545) (0.000,1.613) (-0.178,1.687) (-0.376,1.767) (-0.604,1.860) (-0.738,1.656) (-0.851,1.474) (-0.953,1.312) (-1.048,1.164) (-1.139,1.025) (-1.227,0.892) (-1.315,0.759) (-1.405,0.625) (-1.499,0.487) (-1.601,0.340) (-1.714,0.180) (-1.841,0.000) (-1.994,-0.210) (-2.056,-0.438) (-1.805,-0.586) (-1.595,-0.710) (-1.416,-0.817) (-1.255,-0.912) (-1.111,-1.000) (-0.974,-1.082) (-0.844,-1.162) (-0.716,-1.240) (-0.587,-1.320) (-0.455,-1.402) (-0.316,-1.487) (-0.166,-1.582) (-0.000,-1.687) (0.190,-1.809) (0.415,-1.954) (0.585,-1.801) (0.729,-1.639) (0.860,-1.490) (0.981,-1.350) (1.094,-1.215) (1.203,-1.083) (1.310,-0.952) (1.417,-0.819) (1.527,-0.680) (1.640,-0.533) (1.761,-0.374) (1.892,-0.199) (2.036,-0.000)};
  \fill[slfprcol, fill opacity=0.14] (0,0) circle ({8.110*0.21});
  \draw[slfprcol, very thick] (0,0) circle ({8.110*0.21});
  \node[lbl, slfprcol!85!black, anchor=east, inner sep=1pt] at ({-8.110*0.21-0.04},{0.30}) {disk};
  \node[lbl, fprmcol!85!black, anchor=south east, inner sep=1pt] at (-0.604,1.860) {box};
  \fill[black] (0,0) circle (1.5pt) node[below left=0pt, lbl] {$z_{i,t}$};
  \draw[black!60, thin] (0,0) -- (1.540,0.328);
  \draw[black!60, thin] (0,0) -- (-0.834,-1.336);
  \fill[black] (1.540,0.328) circle (1.6pt);
  \fill[black] (-0.834,-1.336) circle (1.6pt);
  \node[lbl, anchor=west, inner sep=2pt] at (1.540,0.328)
    {$\widetilde z_{i,t}^{\,(1)}$ {\color{slfprcol}$\checkmark$}{\color{fprmcol}$\checkmark$}};
  \node[lbl, anchor=north east, inner sep=2pt] at (-0.834,-1.336)
    {$\widetilde z_{i,t}^{\,(2)}$ {\color{slfprcol}$\checkmark$}{\color{fprmcol}$\times$}};
  \node[note, anchor=north, align=center] at (0,{-2.520-0.10})
    {\footnotesize radius $=$ geodesic angle $\angle(z_{i,t},\widetilde z_{i,t})$};
\end{scope}

\node[anchor=south west, align=left, lbl, inner sep=2pt] at (-1.05,{2.520+0.22}) {
  {\color{slfprcol}\rule[0.25ex]{1.2em}{1.6pt}}\ SFPRM: $r_{\mathrm c,i}=\max_t\rho_{i,t}<\tau_c$\\[1pt]
  {\color{fprmcol}\rule[0.25ex]{1.2em}{1.6pt}}\ FPRM: $\textstyle
 r_{\mathrm E,i}=\frac{\norm{\mathbf z_i-\widetilde{\mathbf z}_i}_\infty}{(\norm{\widetilde{\mathbf z}_i}_\infty{+}\epsilon)}<\tau_{\mathrm E}$};

\begin{scope}[shift={(-2.75,2.65)}]
  \shade[ball color=black!8, opacity=0.9] (0,0) circle (0.72);
  \draw[black!40] (0,0) circle (0.72);
  \draw[black!30, dashed] (-0.72,0) arc[start angle=180, end angle=360, x radius=0.72, y radius=0.23];
  \draw[black!30] (-0.72,0) arc[start angle=180, end angle=0, x radius=0.72, y radius=0.23];
  \coordinate (zp) at (0.30,0.44);
  \fill[black] (zp) circle (1.2pt) node[note, above left=-1pt] {$z_{i,t}$};
  \draw[slfprcol, thick] (zp) ellipse [x radius=0.19, y radius=0.11, rotate=-20];
  \node[note, anchor=north, inner sep=1pt] at (0,-0.78) {$\mathcal S^{d-1}$};
  \draw[black!35, dashed] ($(zp)+(0.18,0.06)$) -- ($(2.75+0.45,-2.65)+(112:2.520)$);
  \draw[black!35, dashed] ($(zp)+(0.02,-0.14)$) -- ($(2.75+0.45,-2.65)+(150:2.520)$);
\end{scope}
\end{tikzpicture}
}
\caption{Proposals accepted by each halting test around a token state $z_{i,t}$. The chordal
test accepts a disk; FPRM's accepts an axis-aligned $\ell_\infty$ section, so
two proposals at the same angle can be judged differently.}
\label{fig:halting-criteria}
\end{wrapfigure}

\textbf{State space and spherical block.}
\methodname's recurrent state is a tuple of unit token vectors, $\mathbf z_i=(z_{i,1},\ldots,z_{i,T})\in\mathcal S:=(\sph^{d-1})^T$, where $\norm{z_{i,t}}_2=1$ and the input $\mathbf x=(x_1,\ldots,x_T)\in\mathcal S$ is the embedded sequence normalized token-wise. The weight-tied network $f_\theta:\mathcal S\to\mathcal S$ is a stack of $D$ nGPT layers, each an attention retraction followed by an MLP retraction as in \cref{eq:ngpt-retraction}, with QK-normalization and eigen learning rates shared across iterations. Attention couples tokens, so $f_\theta$ acts on the product $\mathcal S$ rather than token by token; it is the block applied at every iteration in \cref{eq:slfpr-map}. 
Following nGPT's logit scaling, predictions are read out as $s_z\,h_\phi(\mathbf z)$, where, given the vocabulary $\mathcal V$, $h_\phi$ is a linear map $\reals^d\to\reals^{|\mathcal V|}$ and $s_z$ is a learned scalar. nGPT would normalize the rows of $\phi$ like every other weight vector, however we keep them unnormalized so the head can set its own per-class scale while the state stays on the sphere.

\textbf{Geodesic input injection.}
We lift $\Slerp$ from tokens to sequences by applying it position-wise. Given two unit token vectors $\mathbf u$ and $\mathbf v$, we write:
\begin{equation}
\textstyle
  \label{eq:sequence-slerp}
  \Slerp_\eta(\mathbf u,\mathbf v)
  :=\bigl(\Slerp_\eta(u_t,v_t)\bigr)_{t=1}^{T},\qquad \eta\in[0,1].
\end{equation}
At every iteration $i$, the current state is first pulled toward the input along the shortest arc using $\Slerp_\lambda$ of \cref{eq:sequence-slerp}, and the result is then passed through the nGPT block of \cref{eq:ngpt-retraction}:
\begin{equation}
  \label{eq:slfpr-map}
  F_\theta(\mathbf z;\mathbf x)
  :=f_\theta\!\left(\Slerp_{\lambda}(\mathbf z,\mathbf x)\right),
  \qquad 0<\lambda<1 .
\end{equation}
The injection strength $\lambda$ is a model parameter, fixed or learned, with the same reading in either case: it is the fraction of the arc from each token state to its input token that is traversed before the block is applied. Geodesic injection is the intrinsic form of FPRM's additive mixing $\alpha_2\mathbf z+\beta_2\mathbf x$ (\cref{eq:fprm-scaling}): normalizing any positive combination $\alpha u+\beta v$ lands on the same shortest arc from $\widehat u$ to $\widehat v$, at a fraction set by the coefficients, the norms, and the angle (\Cref{lem:fprm-slerp} in \Cref{app:proof-euclidean-injection}). FPRM's coefficients therefore do two jobs at once: they place the state along this arc, and, through the coupling (\cref{eq:fprm-coeff}), they keep its norm from drifting. On the unit sphere the second part is given by construction, and the first is done by $\lambda$ alone. \methodname thus replaces the implicit, state-dependent fraction by a single parameter and drops $(\alpha_1,\beta_1,\alpha_2,\beta_2)$ together with their coupling.

\textbf{Scale-free halting residual.}
Given the same proposal $\widetilde{\mathbf z}_i=F_\theta(\mathbf z_i;\mathbf x)$ of \cref{eq:slfpr-map}, geodesic
distance on $\mathcal S$ is the angle between unit vectors, so \methodname halts on the worst-token
\emph{chordal} residual, which is smooth where the angle itself is not:
\begin{equation}
\textstyle
  \label{eq:slfpr-residual}
  r_{\mathrm c,i}
  =
  \max_{1\le t\le T}\rho_{i,t},
  \qquad
  \rho_{i,t}
  :=1-\inner{z_{i,t}}{\widetilde z_{i,t}}
  =\tfrac12\norm{z_{i,t}-\widetilde z_{i,t}}_2^2,
\end{equation}
and halts once $r_{\mathrm c,i}<\tau_c$, where $\tau_c=1-\cos\tau_{\mathrm{ang}}$ for an angular tolerance $\tau_{\mathrm{ang}}$, so the test fires exactly when every token's proposal lies within $\tau_{\mathrm{ang}}$ of its iterate. The \emph{worst token} $\max_t$ is the spherical analogue of the maximum over tokens and coordinates in $r_{\mathrm E,i}$ (\cref{eq:fprm-halt}): a few unconverged but decisive tokens should keep the iteration running. What FPRM's denominator and $\epsilon$ provide is scale invariance, and on the unit sphere that is automatic, so neither is needed. The two tests remain inequivalent: $\rho_{i,t}$ sees only the geodesic angle and so accepts a disk of radius
$\tau_{\mathrm{ang}}$ around the iterate, whereas $r_{\mathrm E,i}$ sees the largest-magnitude
coordinate and so accepts an axis-aligned $\ell_\infty$ region whose angular extent depends on the
direction of the update, and two proposals at the same angle can get opposite verdicts (\Cref{fig:halting-criteria}). We adopt the
chordal form for its simplicity and frame-independence.

\textbf{Geodesic damping.}
If the residual is above tolerance, the state is advanced by a rotation toward the proposal, in place of FPRM's linear interpolation (\cref{eq:fprm-damping}):
\begin{equation}
  \label{eq:slfpr-step}
  \mathbf z_{i+1}
  :=\Slerp_{\eta_i}(\mathbf z_i,\widetilde{\mathbf z}_i),
  \qquad 0<\eta_i\le1 .
\end{equation}
By \Cref{lem:slerp-geometry}, token $t$ turns by exactly
$\eta_i\,\angle(z_{i,t},\widetilde z_{i,t})$, so $\eta_i$ is an \emph{angular step size}: $\eta_i=1$ applies the raw map and smaller values shrink the rotation without leaving the sphere, so the block is always evaluated on the geometry it was trained on. The update never changes the equilibria: for $\eta_i>0$ the step returns $\mathbf z_i$ if and only if $\widetilde{\mathbf z}_i=\mathbf z_i$, so the damped iteration and $F_\theta$ have exactly the same fixed points wherever the arcs are defined. As in FPRM (\cref{eq:fprm-damping}), $\eta_i$ is per sample, starts at $1$, and is multiplied by $\gamma<1$ whenever the residual has not improved for $P$ iterations. On the sphere this patience rule is literally a shrinking of the per-iteration rotation angle, a derivative-free search for a step inside the stability window of \Cref{prop:converge}.

\begin{algorithm}[t]
\caption{FPRM (left) and \methodname (right) in one skeleton. \textbf{Black}${=}$shared control flow; \dfp{orange}${=}$FPRM-specific; \dsl{blue}${=}$\methodname-specific. Both start from a random state $\mathbf z_0$ and re-inject the input before every application of $f_\theta$; the coloured lines are the only differences.}
\label{alg:slfpr-simple}
\begin{minipage}[t]{0.485\textwidth}
\centering\textbf{FPRM} {\footnotesize\dfp{(Euclidean fixed point)}}
\hrule height 0.3pt \vspace{2pt}
{\footnotesize
\begin{algorithmic}[1]
\Require Euclidean block $f_\theta$ \dfp{(Transformer)}, head $h_\phi$; tol.\ $\tau_{\mathrm E}$;
  decay $\gamma$; patience $P$; cap $N_{\max}$
\State $\mathbf z \gets \mathbf z_0$ \Comment{random init}
\State $\eta{\gets}1$;\ $p{\gets}P$;\ $r_{\mathrm{best}}{\gets}\infty$
\Repeat
  \State $\widetilde{\mathbf z} \gets \dfp{\alpha_2\,\mathbf z + \beta_2\,\mathbf x}$
    \Comment{\dfp{affine input injection}}
  \State $\widetilde{\mathbf z} \gets f_\theta(\widetilde{\mathbf z})$
    \Comment{\dfp{$2D$ sub-layers, $(\alpha_1,\beta_1)$}}
  \State $r \gets \dfp{\norm{\mathbf z-\widetilde{\mathbf z}}_\infty / (\norm{\widetilde{\mathbf z}}_\infty+\epsilon)}$
    \Comment{\dfp{relative-$\ell_\infty$}}
  \State \textbf{if} $r{<}r_{\mathrm{best}}$ \textbf{then} $r_{\mathrm{best}}{\gets}r$;\ $p{\gets}P$
    \textbf{else} $p{\gets}p{-}1$
  \State \textbf{if} $p{\le}0$ \textbf{and} $r{>}\tau_{\mathrm E}$ \textbf{then} $\eta{\gets}\gamma\eta$;\ $p{\gets}P$
  \State $\mathbf z \gets \dfp{(1{-}\eta)\,\mathbf z + \eta\,\widetilde{\mathbf z}}$ \Comment{\dfp{linear damping}}
  \State \emph{(train)} every $K$: backprop $\mathrm{CE}\big(h_\phi(\mathbf z),y\big)$; detach $\mathbf z$
\Until{$r{<}\tau_{\mathrm E}$ \ \textbf{or}\ $N_{\max}$ steps}
\State \Return $h_\phi(\mathbf z)$
\end{algorithmic}}
\end{minipage}%
\hfill\vrule\hfill
\begin{minipage}[t]{0.485\textwidth}
\centering\textbf{\methodname} {\footnotesize\dsl{(spherical fixed point)}}
\hrule height 0.3pt \vspace{2pt}
{\footnotesize
\begin{algorithmic}[1]
\Require spherical block $f_\theta$ \dsl{(nGPT)}, head $h_\phi$; tol.\ $\tau_c$;
  decay $\gamma$; patience $P$; cap $N_{\max}$
\State $\mathbf z \gets \mathbf z_0\,\dsl{\in\mathcal S}$ \Comment{random unit-norm init}
\State $\eta{\gets}1$;\ $p{\gets}P$;\ $r_{\mathrm{best}}{\gets}\infty$
\Repeat
  \State \label{alg:line-inject} $\widetilde{\mathbf z} \gets \dsl{\Slerp_{\lambda}(\mathbf z,\mathbf x)}$
    \Comment{\dsl{geodesic injection}}
  \State \label{alg:line-block} $\widetilde{\mathbf z} \gets f_\theta(\widetilde{\mathbf z})$
    \Comment{\dsl{tied spherical block}}
  \State \label{alg:line-residual} $r \gets \dsl{\max_t\big(1-\inner{z_t}{\widetilde z_t}\big)}$
    \Comment{\dsl{chordal residual}}
  \State \label{alg:patience} \textbf{if} $r{<}r_{\mathrm{best}}$ \textbf{then} $r_{\mathrm{best}}{\gets}r$;\ $p{\gets}P$
    \textbf{else} $p{\gets}p{-}1$
  \State \label{alg:eta}\textbf{if} $p{\le}0$ \textbf{and} $r{>}\tau_c$ \textbf{then} $\eta{\gets}\gamma\eta$;\ $p{\gets}P$
  \State \label{alg:line-damp} $\mathbf z \gets \dsl{\Slerp_{\eta}(\mathbf z,\widetilde{\mathbf z})}$ \Comment{\dsl{geodesic damping}}
  \State \emph{(train)} every $K$: backprop $\mathrm{CE}\big(\dsl{s_z}h_\phi(\mathbf z),y\big)$;
  detach $\mathbf z$
\Until{$r{<}\tau_c$ \ \textbf{or}\ $N_{\max}$ steps}
\State \Return $\dsl{s_z}\,h_\phi(\mathbf z)$ \Comment{\dsl{scaled readout}}
\end{algorithmic}
}
\end{minipage}
\end{algorithm}


\textbf{Training.}
Training follows FPRM: deep supervision with truncated BPTT, i.e., each optimizer step backpropagating through one segment of $K$ geodesic steps with the state detached between segments. The residual, patience counter and step-size schedule are computed from detached states, so they steer the solver without entering the gradient; differentiating $\Norm$ projects state cotangents onto the tangent space, parameters stay Euclidean, and weight vectors are renormalized after every optimizer step. In practice, we use a guarded $\Slerp$ (\cref{eq:gslerp}), which replaces the sine formula by normalized linear interpolation near coincident endpoints and by an explicit arc near antipodes.

\subsubsection{Theoretical guarantees}
\label{sec:guarantees}

We now investigate what the geometry buys analytically. All statements concern the exact recurrence
\crefrange{eq:slfpr-map}{eq:slfpr-step} with shortest-arc interpolation. Full proofs are in Appendix \ref{app:proofs}.

\textbf{Boundedness by construction.}
Because injection and damping return unit token vectors (\Cref{lem:slerp-geometry}), the block does so by construction (\cref{eq:ngpt-retraction}), and so does every branch of the guarded step (\Cref{lem:gslerp}), the state stays on $\mathcal S$ for every $i$ and every $\theta$: recurrent magnitudes can neither
explode nor vanish, which is what FPRM's boundedness theorem secures through the coefficient
constraints in \cref{eq:fprm-coeff}. However, boundedness alone says nothing about whether a
fixed point exists or is reached, which is what we address next.

\textbf{Input injection guarantees a fixed point.}
On a sphere, continuity alone does not guarantee a fixed point since rotating a circle moves every point. Input injection repairs this by moving each token a fraction $\lambda$ of the way to its input, keeping every injected state in the cap $\mathcal C_{\mathbf x}
  :=\left\{
    \mathbf y\in\mathcal S:\
    \angle(y_t,x_t)\le(1-\lambda)\pi
    \text{ for every token }t
  \right\}$,
which, unlike the sphere, can be flattened into a closed ball, where every continuous self-map has a fixed point.

\begin{proposition}[Fixed-point existence]
\label{prop:existence}
Fix $\mathbf x$ and $0<\lambda<1$. Assume $f_\theta$ is continuous on $\mathcal C_{\mathbf x}$ and
  $f_\theta(\mathbf y)_t\ne-x_t
  \text{ for every }\mathbf y\in\mathcal C_{\mathbf x}
  \text{ and token }t$.
Then the recurrent map $F_\theta(\cdot\,;\mathbf x)$ in \cref{eq:slfpr-map} has at least one fixed
point $\mathbf z^\star=F_\theta(\mathbf z^\star;\mathbf x)$ in $\mathcal S$.
\end{proposition}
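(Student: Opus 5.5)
The plan is to apply Brouwer's fixed-point theorem to the composite map $G(\mathbf y):=\Slerp_\lambda(f_\theta(\mathbf y),\mathbf x)$ on the cap $\mathcal C_{\mathbf x}$, and then transport the fixed point back to $F_\theta$. The two maps are conjugate through a cyclic shift: if $\mathbf y^\star=G(\mathbf y^\star)$, set $\mathbf z^\star:=f_\theta(\mathbf y^\star)$. Then $\Slerp_\lambda(\mathbf z^\star,\mathbf x)=G(\mathbf y^\star)=\mathbf y^\star$, so
\[
F_\theta(\mathbf z^\star;\mathbf x)=f_\theta(\mathbf y^\star)=\mathbf z^\star ,
\]
which is the claimed fixed point in $\mathcal S$. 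Working with $G$ instead of $F_\theta$ is the point of the construction: $f_\theta$ is only assumed continuous on $\mathcal C_{\mathbf x}$, and $G$ is evaluated only there.

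The first step is to show that $G$ maps $\mathcal C_{\mathbf x}$ into itself. By \Cref{lem:slerp-geometry}, for non-antipodal unit vectors $u,v$ the point $\Slerp_\lambda(u,v)$ lies on the shortest arc from $u$ to $v$. Hence $\angle(\Slerp_\lambda(u,v),v)=(1-\lambda)\angle(u,v)<(1-\lambda)\pi$. The hypothesis $f_\theta(\mathbf y)_t\neq -x_t$ guarantees that each token pair is non-antipodal, so $G(\mathbf y)_t$ is given by the ordinary shortest-arc formula and lies in the cap.

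The second step is continuity of $G$ on $\mathcal C_{\mathbf x}$. The map $(u,v)\mapsto\Slerp_\lambda(u,v)$ is continuous on $\{(u,v):u\neq -v\}$. The sine formula is smooth for $\Omega\in(0,\pi)$, and at $u=v$ it extends continuously by $u$. One clean way to see this is to write $\sin(a\Omega)/\sin\Omega$ as an analytic function of $\cos\Omega$ near $\Omega=0$, or to cite the limit statement used to define $\Slerp_\eta(u,u)$. Composing with the continuous $f_\theta$, whose image avoids the antipodes, gives continuity of $G$.

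The third step is topological: show that $\mathcal C_{\mathbf x}$ is homeomorphic to a closed ball. Each factor $\{y\in\sph^{d-1}:\angle(y,x_t)\le(1-\lambda)\pi\}$ is a closed geodesic cap of radius strictly less than $\pi$, since it excludes a neighbourhood of $-x_t$. Stereographic projection from $-x_t$, or equivalently the inverse exponential map at $x_t$, sends this cap homeomorphically onto a closed Euclidean ball in $\reals^{d-1}$. The product of $T$ closed balls is a compact convex set in $\reals^{T(d-1)}$ with nonempty interior, hence homeomorphic to a closed ball. Brouwer then yields $\mathbf y^\star$ with $G(\mathbf y^\star)=\mathbf y^\star$, and the conjugation above finishes the proof.

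I expect the main obstacle to be the continuity of $\Slerp$ near coincident endpoints, together with keeping every evaluation of $f_\theta$ inside the cap. The conjugation handles the second issue. For the first, I would give a short argument via the representation $\Slerp_\lambda(u,v)=\Norm\big(\tfrac{\sin((1-\lambda)\Omega)}{\sin\Omega}u+\tfrac{\sin(\lambda\Omega)}{\sin\Omega}v\big)$, whose coefficients are continuous in $\cos\Omega\in(-1,1]$. The case $d=1$ is excluded implicitly; for $d\ge2$ the argument goes through unchanged.
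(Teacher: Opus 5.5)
Your proposal is correct and follows essentially the same route as the paper: Brouwer applied to the reversed composition $\Slerp_\lambda(f_\theta(\mathbf y),\mathbf x)$ on the cap $\mathcal C_{\mathbf x}$ (the paper's $H$), invariance of the cap via \Cref{lem:slerp-geometry}, identification of the cap with a product of closed tangent balls through the inverse exponential (spherical logarithm) map, and transport back to $F_\theta$ via $\mathbf z^\star:=f_\theta(\mathbf y^\star)$. Your extra argument for the continuity of $\Slerp$ at coincident endpoints fills in a step that the paper only asserts.
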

The full proof (Appendix \ref{app:proof-existence}) applies Brouwer's theorem \citep{Brouwer1911,Ortega1970}
to the reversed composition $H(\mathbf y):=\Slerp_{\lambda}(f_\theta(\mathbf y),\mathbf x)$. Unlike
$F_\theta$, which injects before the block, $H$ ends with the injection, so it maps the cap into
itself, and the antipode assumption keeps it well defined and continuous; its fixed point
$\mathbf y^\star$ gives the fixed point $\mathbf z^\star:=f_\theta(\mathbf y^\star)$ of $F_\theta$. No Lipschitz
constant, contraction modulus, or spectral bound on $f_\theta$ is needed. FPRM proves existence
only via global convergence, hence under contraction \citep[Theorem~2]{Movahedi2026FPRM};
Brouwer alone would need the coefficient coupling that keeps a ball invariant. On the sphere compactness is structural and injection supplies the missing ball-shaped domain. Only $\mathbf y^\star$, however, provably lies in $\mathcal C_{\mathbf x}$, and existence implies neither uniqueness nor attraction, which the next two results address.

\textbf{Angular damping stabilizes a fixed point.}
Let $\mathbf z^\star$ be a fixed point of $F_\theta(\cdot\,;\mathbf x)$ and let $J:=D_{\mathbf z}F_\theta(\mathbf z^\star;\mathbf x)$
be its Jacobian in the recurrent state at $\mathbf z^\star$, with $\theta$ and $\mathbf x$ held
fixed. Since $F_\theta$ maps $\mathcal S$ to itself, $J$ sends tangent perturbations
$\mathbf u=(u_t)_{t=1}^T$ with $\inner{u_t}{z_t^\star}=0$ to tangent perturbations, and we regard
it as an endomorphism of that $T(d-1)$-dimensional tangent space. It contains all coupling between
tokens through attention and need be neither orthogonal nor diagonalizable. 

\begin{theorem}[Local convergence of angular damping]
\label{prop:converge}
Suppose $F_\theta(\cdot\,;\mathbf x)$ is differentiable at a fixed point
$\mathbf z^\star$ and every $\zeta\in\sigma(J)$, where $\sigma(J)$
is the Jacobian's set of eigenvalues, satisfies $\Real(\zeta)<1$. Then for every fixed
$0<\eta\le1$ with
\begin{equation}
\textstyle
  \label{eq:eta-stability-window}
  \eta<\eta_{\max}
  :=\min_{\zeta\in\sigma(J)}
  \frac{2\bigl(1-\Real(\zeta)\bigr)}{|\zeta-1|^2},
\end{equation}
the damped iteration
$\mathbf z_{i+1}=\Slerp_{\eta}\!\left(\mathbf z_i,F_\theta(\mathbf z_i;\mathbf x)\right)$
converges to $\mathbf z^\star$ from every sufficiently close initial state, with
$\norm{\mathbf z_i-\mathbf z^\star}\le Cq^i\norm{\mathbf z_0-\mathbf z^\star}$ for some $C>0$ and
$q\in(0,1)$. Moreover, wherever the shortest arcs are defined, the damped iteration and $F_\theta$
have exactly the same fixed points.
\end{theorem}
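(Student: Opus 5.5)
The plan is to linearize the damped map $G_\eta(\mathbf z):=\Slerp_\eta(\mathbf z,F_\theta(\mathbf z;\mathbf x))$ at $\mathbf z^\star$ and show that its tangent Jacobian is $(1-\eta)I+\eta J$. Once that identity is in hand, the rest is spectral. The eigenvalues of $(1-\eta)I+\eta J$ are $1-\eta+\eta\zeta$ for $\zeta\in\sigma(J)$. A direct computation gives
\[
|1-\eta(1-\zeta)|^2 = 1-2\eta\bigl(1-\Real(\zeta)\bigr)+\eta^2|\zeta-1|^2 .
\]
This is $<1$ exactly when $\eta<2(1-\Real(\zeta))/|\zeta-1|^2$. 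Here $\Real(\zeta)<1$ forces $\zeta\ne1$, so the bound is positive and finite. Hence under \cref{eq:eta-stability-window} the spectral radius $\rho$ of $(1-\eta)I+\eta J$ is below one.

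\textbf{Step 1: the derivative.} Write $\Phi(\mathbf u,\mathbf v)=\Slerp_\eta(\mathbf u,\mathbf v)$ token-wise. Near the diagonal $\mathbf u=\mathbf v$ (Lemma~\ref{lem:gslerp}), $\Phi$ agrees with the normalized linear interpolation $\Norm((1-\eta)u+\eta v)$ up to $O(\Omega^3)$. Its differential at $(z^\star,z^\star)$ is therefore that of the normalized interpolation. For tangent perturbations $u,v\perp z^\star$ this differential is $(1-\eta)u+\eta v$, because the normalization contributes only a normal component, which vanishes to first order. One can also see it intrinsically: in exponential coordinates at $z^\star$, $\Slerp$ is $\Exp_{a}(\eta\Log_{a} b)$, whose differential at $a=b=z^\star$ is $(1-\eta)\,\mathrm{d}a+\eta\,\mathrm{d}b$.

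The chain rule with $\mathbf v=F_\theta(\mathbf z)$ then gives $DG_\eta(\mathbf z^\star)=(1-\eta)I+\eta J$ on the tangent space. Only differentiability of $F_\theta$ at $\mathbf z^\star$ is assumed, so I will avoid needing $C^1$ and argue with first-order expansions only:
\[
G_\eta(\mathbf z)=\mathbf z^\star+M(\mathbf z-\mathbf z^\star)+o(\norm{\mathbf z-\mathbf z^\star}),\qquad M:=(1-\eta)I+\eta J .
\]
This is legitimate because $\Phi$ is smooth near the diagonal and $F_\theta(\mathbf z)\to\mathbf z^\star$. Because $\mathbf z^\star$ is a fixed point, $F_\theta(\mathbf z)$ is close to $\mathbf z$ for $\mathbf z$ near $\mathbf z^\star$, so the shortest arcs are defined there.

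\textbf{Step 2: contraction in an adapted norm.} Pick $q$ with $\rho<q<1$. Standard linear algebra (a Jordan form with scaled off-diagonals) gives a norm $\norm{\cdot}_*$ on the tangent space with $\norm{M}_*\le(\rho+q)/2$. I will work in a chart, for instance the orthogonal projection of $\mathcal S$ near $\mathbf z^\star$ onto $\mathbf z^\star+T_{\mathbf z^\star}\mathcal S$. This chart is a local diffeomorphism whose derivative at $\mathbf z^\star$ is the identity, so the expansion above transfers to it. Choosing the neighbourhood small enough to absorb the $o(\cdot)$ term gives $\norm{G_\eta(\mathbf z)-\mathbf z^\star}_*\le q\norm{\mathbf z-\mathbf z^\star}_*$ on a $\norm{\cdot}_*$-ball. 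That ball is therefore invariant, and induction gives geometric decay. Converting between $\norm{\cdot}_*$, the chart, and the ambient norm produces the constant $C$.

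\textbf{Step 3: fixed points coincide.} Using Lemma~\ref{lem:slerp-geometry}, token $t$ moves by the angle $\eta\,\angle(z_t,\widetilde z_t)$. This angle is zero if and only if $\widetilde z_t=z_t$, since $\eta>0$. Hence $G_\eta(\mathbf z)=\mathbf z$ if and only if $F_\theta(\mathbf z)=\mathbf z$.

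The main obstacle is Step 1 done carefully: establishing the first-order expansion of the composed map with only pointwise differentiability of $F_\theta$, together with the tangent-space identification and the chart bookkeeping. The spectral computation itself is routine.
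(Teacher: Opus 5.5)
Your proposal is correct and follows essentially the same route as the paper: linearize the damped step to $(1-\eta)\operatorname{Id}+\eta J$, place its spectrum inside the unit disk via $|1+\eta(\zeta-1)|^2=1-2\eta(1-\Real(\zeta))+\eta^2|\zeta-1|^2$, absorb the $o(\norm{\mathbf w})$ remainder in an adapted norm, and obtain the fixed-point equivalence from the positive-fraction arc motion. The differences are cosmetic: you use an orthogonal-projection chart and compute the derivative of $\Slerp$ from the $\nlerp$ comparison of \Cref{lem:gslerp} (or from exponential coordinates), whereas the paper uses the central-projection chart $z_t=(z_t^\star+w_t)/\norm{z_t^\star+w_t}_2$ and the first-order rule of \Cref{lem:slerp-linearization}.
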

This is the spherical counterpart of FPRM's stabilization result
\citep[Theorem~3]{Movahedi2026FPRM}, with an explicit window and with unit norm and fixed points
preserved. The window is certified for a fixed $\eta$; the patience rule
(lines~\ref{alg:patience}--\ref{alg:eta} of \Cref{alg:slfpr-simple}) searches for one
heuristically, without guarantee. The proof (Appendix \ref{app:proof-convergence}) linearizes the damped
step at $\mathbf z^\star$ to $(1-\eta)\operatorname{Id}+\eta J$ (\Cref{lem:slerp-linearization}),
which moves each eigenvalue $\zeta$ of $J$ a fraction $\eta$ from $1$ toward $\zeta$; the window
of \cref{eq:eta-stability-window} is exactly what keeps them all inside the unit disk. A spectral
radius below one then gives geometric convergence in tangent coordinates, which the coordinate map
transfers to the state. Such an $\eta$ exists only if $\Real(\zeta)<1$ for every $\zeta$: damping
tames modes that oscillate or spiral away under the raw map, not expansive ones with
$\Real(\zeta)\ge1$, which training must remove (\Cref{rem:damping}). 

\textbf{Global convergence under strong injection.}
Finally, we identify a regime where the raw recurrence ($\eta=1$) converges from every admissible initialization. $\mathbf x$ and $f_\theta$ are held fixed while $\lambda$ varies.

\begin{theorem}[Global convergence under strong input injection]
\label{thm:global-convergence}
Suppose $f_\theta:\mathcal S\to\mathcal S$ is continuously differentiable in a neighborhood of
$\mathbf x$ and $f_\theta(\mathbf x)_t\ne-x_t$ for every token $t$. Then there exists a
$\lambda_0\in(0,1)$, depending on $f_\theta$ and $\mathbf x$, such that for every
$\lambda\in(\lambda_0,1)$ the map $F_\theta(\cdot\,;\mathbf x)$ has a fixed point $\mathbf z^\star$,
unique among all states at which the input interpolation is defined. For each such $\lambda$ and
every $\mathbf z_0\in\mathcal S$ with $z_{0,t}\ne-x_t$ for all $t$, the iterates
$\mathbf z_{i+1}=F_\theta(\mathbf z_i;\mathbf x)$ are well defined and satisfy
$\norm{\mathbf z_i-\mathbf z^\star}\le Cq^{i-1}$ for all $i\ge1$, with $C>0$ and $q\in(0,1)$
independent of $\mathbf z_0$.
\end{theorem}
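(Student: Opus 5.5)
The plan is to show that for $\lambda$ close to $1$ the injection $\Slerp_\lambda(\cdot,\mathbf x)$ squeezes every admissible state into a small neighborhood $U_\delta:=\{\mathbf y\in\mathcal S:\angle(y_t,x_t)\le\delta\ \forall t\}$ of $\mathbf x$. On $U_\delta$ the map $f_\theta$ is $C^1$ with a bounded derivative, and the injection itself contracts angles to $\mathbf x$ by the factor $1-\lambda$. Composing the two makes $F_\theta(\cdot;\mathbf x)$ a contraction on a suitable invariant set, and Banach's fixed-point theorem then gives existence, uniqueness and geometric convergence.

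\textbf{Step 1: choose a neighborhood.} By continuity and $f_\theta(\mathbf x)_t\ne-x_t$, pick $\delta>0$ such that four things hold on $U_\delta$: $f_\theta$ is $C^1$ with $\sup_{U_\delta}\norm{Df_\theta}\le L$, every output satisfies $\angle(f_\theta(\mathbf y)_t,x_t)\le\pi-\kappa$ for some $\kappa>0$, $\delta<\pi/2$, and $U_\delta$ is geodesically convex. Geodesic convexity ensures that the mean value inequality, applied along a product of minimizing arcs, gives $\norm{f_\theta(\mathbf y)-f_\theta(\mathbf y')}\le L'\,d(\mathbf y,\mathbf y')$, with $d$ the product geodesic distance.

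\textbf{Step 2: the injection contracts.} By \Cref{lem:slerp-geometry}, $\angle(\Slerp_\lambda(z_t,x_t),x_t)=(1-\lambda)\angle(z_t,x_t)\le(1-\lambda)\pi$ whenever $z_t\ne-x_t$. Hence for $1-\lambda<\delta/\pi$, any admissible $\mathbf z$ maps into $U_\delta$ after one injection. For the Lipschitz estimate of $\mathbf z\mapsto\Slerp_\lambda(\mathbf z,\mathbf x)$, write it per token as $\Exp_{x_t}\!\big((1-\lambda)\Log_{x_t}z_t\big)$. On the restricted domain $W:=\{\mathbf z:\angle(z_t,x_t)\le\pi-\kappa\}$, this map is Lipschitz with constant $c_\kappa(1-\lambda)$, where $c_\kappa$ depends only on $\kappa$. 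The constant $c_\kappa$ comes from the bounded distortion of $\Log_{x_t}$ away from the cut locus, together with the fact that $\Exp_{x_t}$ is $1$-Lipschitz on small radii up to a constant.

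\textbf{Step 3: contraction and conclusion.} Step 1 gives $f_\theta(U_\delta)\subset W$, and for small $1-\lambda$, Step 2 gives $\Slerp_\lambda(W,\mathbf x)\subset U_\delta$. Consequently $F_\theta$ maps $W$ into $f_\theta(U_\delta)\subset W$, and
\[
d\big(F_\theta(\mathbf z),F_\theta(\mathbf z')\big)\le L'c_\kappa(1-\lambda)\,d(\mathbf z,\mathbf z')\quad\text{on }W .
\]
Choose $\lambda_0$ so that $q:=L'c_\kappa(1-\lambda_0)<1$ and the inclusions above hold. Since $W$ is closed and hence complete, Banach's theorem yields a unique fixed point $\mathbf z^\star\in W$.

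Two further points complete the argument. First, any fixed point at which the interpolation is defined satisfies $\mathbf z^\star=f_\theta(\Slerp_\lambda(\mathbf z^\star,\mathbf x))\in f_\theta(U_\delta)\subset W$, so uniqueness among all such states follows. Second, for an arbitrary admissible $\mathbf z_0$, we have $\mathbf z_1\in f_\theta(U_\delta)\subset W$, and from then on contraction gives $d(\mathbf z_i,\mathbf z^\star)\le q^{i-1}d(\mathbf z_1,\mathbf z^\star)\le q^{i-1}\pi\sqrt T$. This bound is uniform in $\mathbf z_0$, and the chordal norm is at most the geodesic distance.

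\textbf{Main obstacle.} The delicate step is the Lipschitz bound in Step 2 for the injection. The factor $1-\lambda$ is clean for distances to $x_t$, but distances between two injected points need the Jacobi-field estimate for $\Exp\circ(1-\lambda)\Log$. That estimate blows up near the antipode $-x_t$, because $\Log_{x_t}$ has derivative of size roughly $\Omega/\sin\Omega$. This is exactly why the argument restricts to $W$ and waits one iteration before invoking contraction, rather than contracting on all admissible states.
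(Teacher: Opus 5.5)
Your proposal is correct, and it is the paper's argument up to the change of variables $\mathbf w=(1-\lambda)\Log_{\mathbf x}(\mathbf z)$: the paper applies Banach's theorem to the injected states in logarithmic coordinates, where injection is exact multiplication by $1-\lambda$ on the convex ball $\mathcal B_r$, and then transfers the bound back through $f_\theta\circ\Exp_{\mathbf x}$, whereas you contract $F_\theta$ itself on $W$ in the geodesic metric. The one step to tighten is the Lipschitz bound for $\Log_{x_t}$ on $W$ in Step~2: for $\kappa<\pi/2$ that cap is not geodesically convex (a minimizing arc between two of its points can pass through $-x_t$), so the derivative bound $\Omega/\sin\Omega$ along arcs is not enough, and you should instead use the standard fact that a map which is $C^1$ on a neighborhood of a compact set is Lipschitz on that set; the paper's coordinates avoid this issue because its mean-value estimate runs over the convex set $\mathcal B_r$.
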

The proof (Appendix \ref{app:proof-global-convergence}, with an explicit $\lambda_0$) uses logarithmic coordinates at $\mathbf x$, where injection multiplies by $1-\lambda$: for $\lambda$ near one, the first injection pulls every admissible state into a small ball around $\mathbf x$ on which the map is a Banach contraction, so only that ball matters. FPRM's global result \citep[Theorem~2]{Movahedi2026FPRM} instead rests on a \emph{global} Lipschitz bound and a small residual coefficient, which the sphere lacks: injection does not contract near the input's antipode. Together the three results cover every injection strength: a fixed point exists for all $\lambda$, strong injection makes it a unique global attractor, and damping gives local convergence under weaker injection.

\section{Empirical Evaluation}
\label{sec:experiments}

\begin{wrapfigure}[20]{r}{0.47\linewidth}
  \vspace{-13mm}
  \centering
  \includegraphics[width=\linewidth]{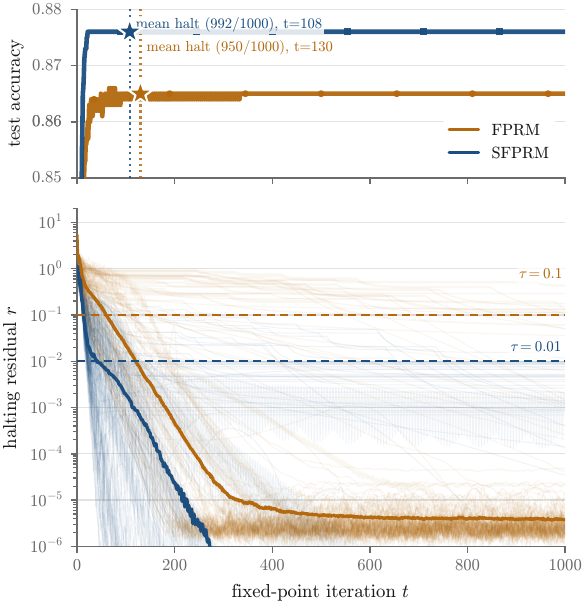}
  \vspace{-6mm}
  \caption{\textbf{Exact accuracy and halting on Maze-Hard} (Sudoku: \cref{fig:halting-sudoku}). \methodname vs.\ FPRM evaluation on $1000$ test puzzles. \emph{Top:} exact accuracy; mean halt and number of examples halting. \emph{Bottom:} per-sequence residual (thin), median (heavy).}
  \label{fig:halting}
\end{wrapfigure}
With the following experiments we answer three research questions:
\begin{enumerate}[label=\textbf{(RQ\arabic*)}, leftmargin=*]
  \item \emph{Accuracy.} How does \methodname compare to FPRM and other recurrent reasoner baselines?
  Does constraining the recurrence on the sphere cost accuracy? (\Cref{sec:exp-puzzles})
  \item \emph{Convergence and halting.} How effectively does \methodname utilize the learned recurrence under the angular halting criterion and is the fixed-point unique? (\Cref{sec:exp-halting})
  \item \emph{Geometry of the computation.} What does the fixed-point iteration write into the state, and does the spherical parameterization make it readable? (\Cref{sec:exp-geometry})
\end{enumerate}

\begin{table}[t]
\centering
\caption{\textbf{Comparison with published recurrent reasoners.} Reported numbers are taken from the
respective papers; Mini-ARC numbers are from \citet{Huang2026EqR} (Table~14). Numbers in
parentheses are our own runs under the same setting as SFPRM. Tf $=$ Transformer,
Mixer $=$ MLP-Mixer, rec.\ $=$ recurrent refinement, FP $=$ fixed-point solve. Within the $5$--$7$M group
the best result is bold and the second best underlined.
}
\vspace{-2mm}
\small
\setlength{\tabcolsep}{4pt}
\resizebox{\linewidth}{!}{
\begin{tabular}{c@{\hspace{4pt}}llcccccc}
\toprule
& Model & Architecture & \# params. & \shortstack{Single Loop\\(No Hier.)} &
\shortstack{Sudoku-Ext.\\Pass@1} & \shortstack{Maze-Hard\\Pass@1} &
\shortstack{Mini-ARC\\Pass@1} & \shortstack{ARC-1\\Pass@2} \\
\midrule
\multirow{3}{*}{\rotatebox[origin=c]{90}{\emph{Larger}}}
& HRM            & rec.\ Tf                   & 27M   & \xmark & 55.0\% & 74.5\% & 44.9\% (51.7\%) & 40.3\% \\
& Attractor Model& Tf $+$ FP loop             & 27M   & \xmark & 91.4\% & 93.1\% & --- & --- \\
& URM            & rec.\ Tf $+$ conv          & 14M   & \xmark & 77.6\% & --- & --- & --- \\
\midrule
\multirow{5}{*}{\rotatebox[origin=c]{90}{\emph{$5$--$7$M}}}
& TRM            & rec.\ Tf                   & 7M    & \xmark & 74.7\% & 85.3\% & 48.4\% (36.9\%) & 44.6\% \\
& EqR            & rec.\ Mixer $+$ restarts      & 5-7M    & \xmark & 93.0\% & --- & \underline{55.3\%} (40.3\%) & --- \\
& Attractor Model& FP Tf           & 7M    & \xmark & 54.3\% & 46.7\% & --- & --- \\
& FPRM           & FP Tf                      & 7M    & \cmark & \underline{94.2\%} (78.4\%) & \underline{87.0\%} (83.9\%) & --- (55.0\%) & \textbf{47.5\%} \\
\rowcolor{lightgray}
& \textbf{\methodname (ours)} & spherical FP Tf & 7M & \cmark & \textbf{96.6\%} & \textbf{87.9\%} & \textbf{59.1\%} & \underline{47.4\%} \\
\bottomrule
\end{tabular}
}
\label{tab:main-results}
\vspace{-2mm}
\end{table}


\textbf{Setup.}
We evaluate \methodname on puzzle benchmarks: Maze-Hard, Sudoku-Extreme \citep{Wang2025HRM}, Mini-ARC \citep{kim2022playgrounds} and ARC-1 \citep{chollet2025arcprize2024technical}, used to assess generalization when trained with a limited number of examples.
\methodname and FPRM share a $6.83$M-parameter Transformer backbone and follow the hyperparameters of \citet{Movahedi2026FPRM} (details in Appendix \ref{app:exp-details}). \methodname trains its weight matrices with Muon \citep{jordan2024muon} and its embeddings, normalization scales, and scalars with AdamW, regularizes with variational and weight dropout, and learns the injection fraction $\lambda$, except on Sudoku ($\lambda$ fixed at $0.1$). It halts once $r_{\mathrm c,i}<\tau_c=0.01$ (\cref{eq:slfpr-residual}), i.e.\ once every token's proposal lies within $8.1^\circ$ of its iterate.

\begin{figure}[b]
\centering
\begin{subfigure}[b]{0.23\linewidth}
  \centering
  \includegraphics[width=\linewidth]{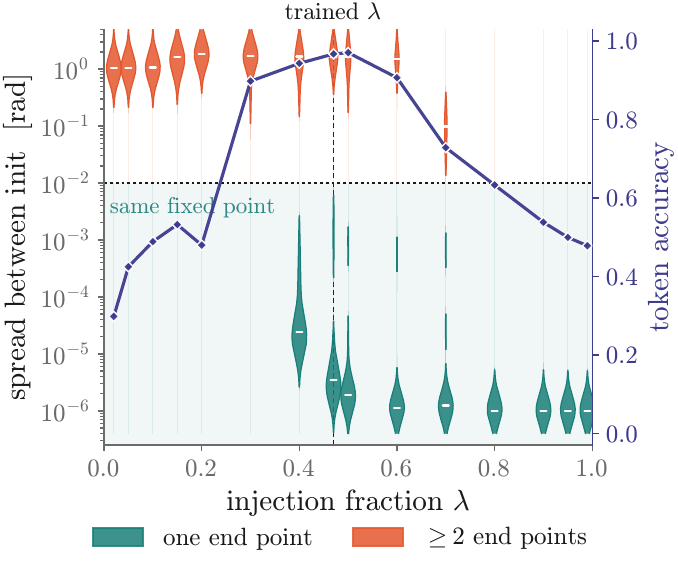}
  \caption{Uniqueness vs.\ $\lambda$.}
  \label{fig:global-conv-a}
\end{subfigure}\hfill
\begin{subfigure}[b]{0.76\linewidth}
  \centering
  \includegraphics[width=\linewidth]{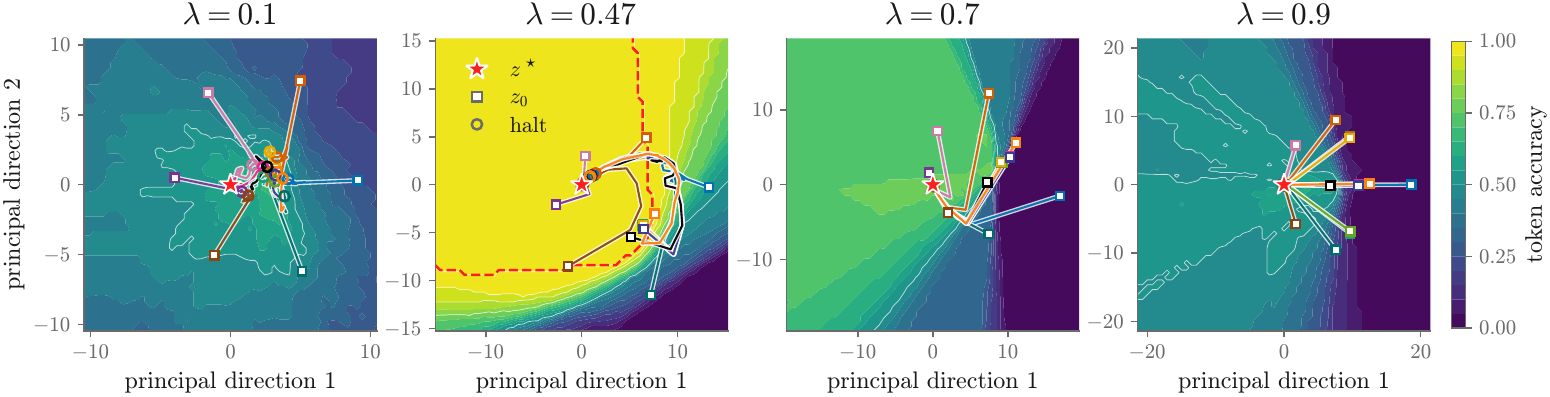}
  \caption{Trajectories in the top-2 principal plane at four values of $\lambda$.}
  \label{fig:global-conv-b}
\end{subfigure}
\caption{\textbf{Global convergence under input injection (\methodname, Sudoku).}
\textbf{(a)} Largest angle between the end states of eight initializations, over $32$ puzzles, against $\lambda$ (teal: all coincide, coral: $\ge2$ end states); indigo, right axis: token accuracy.
\textbf{(b)} Trajectories from the eight $\mathbf z_0$ (squares) to $\mathbf z^\star$ (star), coloured by the token accuracy of the decoded state; circles mark the first iteration with $r_{\mathrm c,i}<\tau_c$, dashed red the exact-match region.}
\label{fig:global-conv}
\end{figure}

\subsection{RQ1: accuracy on puzzle benchmarks}
\label{sec:exp-puzzles}



\Cref{tab:main-results} compares \methodname with published recurrent reasoners. On Sudoku-Extreme, \methodname reaches $96.6\%$ exact accuracy, $2.4$ points above FPRM published results. Since we cannot resample the same $1$k base puzzles augmented $1000\times$ from FPRM, we train it on $5$k puzzles augmented $200\times$ instead: the same number of training examples, but five times as many distinct puzzles. On Maze-Hard, \methodname reaches $87.9\%$ on the $1{,}000$ test mazes, above TRM's and FPRM's published results
and behind only the $27$M-parameter Attractor model \citep{FeinAshley2026Attractor}. 
On Mini-ARC, \methodname reaches $59.1\%$ Pass@1, above both the best published result in its size group (EqR, $55.3\%$ \citep{Huang2026EqR}) and all baselines retrained under our protocol (FPRM $55.0\%$, HRM $51.7\%$, EqR $40.3\%$, TRM $36.9\%$). On ARC-1, it matches FPRM.

\keybox{\textbf{Answer to RQ1.} Confining the state to the sphere improves accuracy: \methodname outperforms FPRM and other baselines on Sudoku-Extreme ($96.6\%$ vs.\ $94.2\%$), Maze-Hard ($87.9\%$ vs.\ $87.0\%$) and Mini-ARC ($59.1\%$ vs.\ $55.0\%$), and matches it on ARC-1 ($47.4\%$ vs.\ $47.5\%$).
}


\subsection{RQ2: convergence and halting}
\label{sec:exp-halting}

\textbf{Loop utilization and halting.}
To see how the two solvers converge and when they stop, we run each trained model with halting
disabled for $N_{\max}$ iterations, decode every iterate $\mathbf z_i$, and record the residual its
solver tests. Halting is counted per sequence, i.e., a sequence halts once its residual stays below the threshold (Appendix \ref{app:exp-rollouts}). \Cref{fig:halting} shows the result on Maze and \Cref{fig:halting-sudoku} on Sudoku. Both residuals fall quickly, and both models saturate their
accuracy within $20$--$35$ iterations, so every later loop is spent after the answer stops changing. \methodname converges faster and stops sooner: it halts on $992/1000$ mazes against FPRM's $950$, and on average $22$ iterations earlier ($108$ vs. $130$).



\textbf{Fixed-point uniqueness.}
\Cref{thm:global-convergence} predicts that for $\lambda>\lambda_0$ the raw recurrence
$\mathbf z_{i+1}=F_\theta(\mathbf z_i;\mathbf x)$ contracts to one fixed point from every
initialization. We test this by overriding $\lambda$ at inference in the trained model (learned $\lambda=0.47$) and running the raw map, without damping, in fp32 for $400$ steps on $32$ evenly spaced test puzzles, from eight initializations each: six random states, the input $\mathbf x$, and its antipode $-\mathbf x$, which the theorem excludes. The largest geodesic angle
between the eight end states measures uniqueness without a reference point
(\Cref{fig:global-conv}a). For $\lambda\ge0.8$ all eight end states coincide on every puzzle. At the
trained $\lambda=0.47$ the raw map reaches an attracting fixed point on $91\%$ of the puzzles, and on
the puzzle of \Cref{fig:global-conv}b all eight paths end inside the exact-match region. At
$\lambda=0.1$ the raw iteration does not converge, while for $\lambda\ge0.7$ it converges within
a few steps to a unique fixed point that no longer solves the puzzle.

\textbf{Varying the fixed-point solver.}
At inference we replace the damped step of \Cref{alg:slfpr-simple} by two standard solvers, keeping the model and the halting rule: Anderson acceleration \citep{anderson1965iterative,walker2011anderson}
and limited-memory Broyden \citep{broyden1965class}, the solver of deep equilibrium models \citep{Bai2019DEQ}. Both cost one evaluation of $F_\theta$ per iteration, like the damped step (details in Appendix \ref{appsec:solvers}). For both models Anderson halts earlier than the default solver, on every Maze example, while staying within one point of its accuracy (\Cref{tab:solvers} and \Cref{fig:solvers}). Broyden halts about as early on Maze but diverges afterwards, so it relies on stopping in time. Without damping ($\eta=1$), $37$--$100\%$ of examples never reach $\tau$, and on Sudoku accuracy drops as well. We leave training with these solvers, rather than swapping them in at inference, for future work.

\begin{figure}[t]
  \centering
  \begin{subfigure}[b]{0.4\linewidth}
    \centering
    \includegraphics[width=\linewidth]{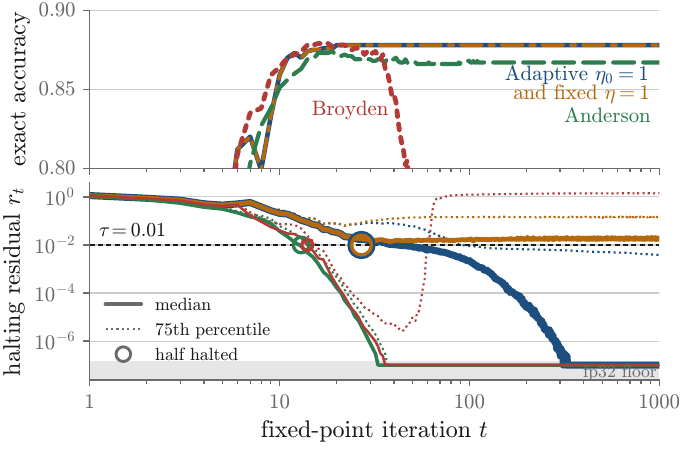}
    \caption{Accuracy per solver.}
    \label{fig:solvers-a}
  \end{subfigure}\hfill
  \begin{subfigure}[b]{0.59\linewidth}
    \centering
    \includegraphics[width=\linewidth]{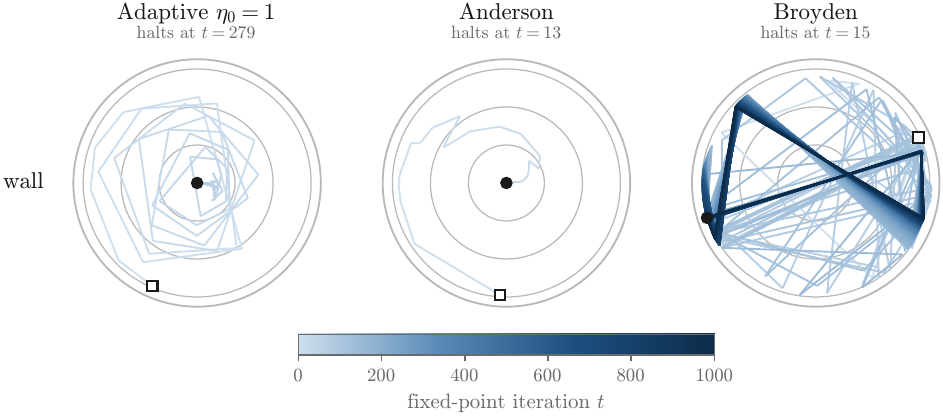}
    \caption{Path-token trajectory per solver.}
    \label{fig:solvers-b}
  \end{subfigure}
\caption{\textbf{Fixed-point solvers} (\methodname, Maze).
\textbf{(a)} Accuracy, halting disabled.
\textbf{(b)} Radius is the geodesic angle (log-scaled) to $\mathbf z^\star$, rings at $0.1/1/10/100^\circ$.}
\vspace{-4mm}
\label{fig:solvers}
\end{figure}

\keybox{\textbf{Answer to RQ2.} \methodname uses its loop iterations more effectively than FPRM: its angular criterion halts earlier on Sudoku-Extreme and Maze-Hard at equal or higher accuracy. Under strong input injection the raw iteration converges to a unique fixed point, as \Cref{thm:global-convergence} predicts.}

\subsection{RQ3: the geometry of the computation}
\label{sec:exp-geometry}

\begin{wrapfigure}[22]{r}{0.54\linewidth}
  \vspace{-4mm}
  \centering
  \includegraphics[width=\linewidth]{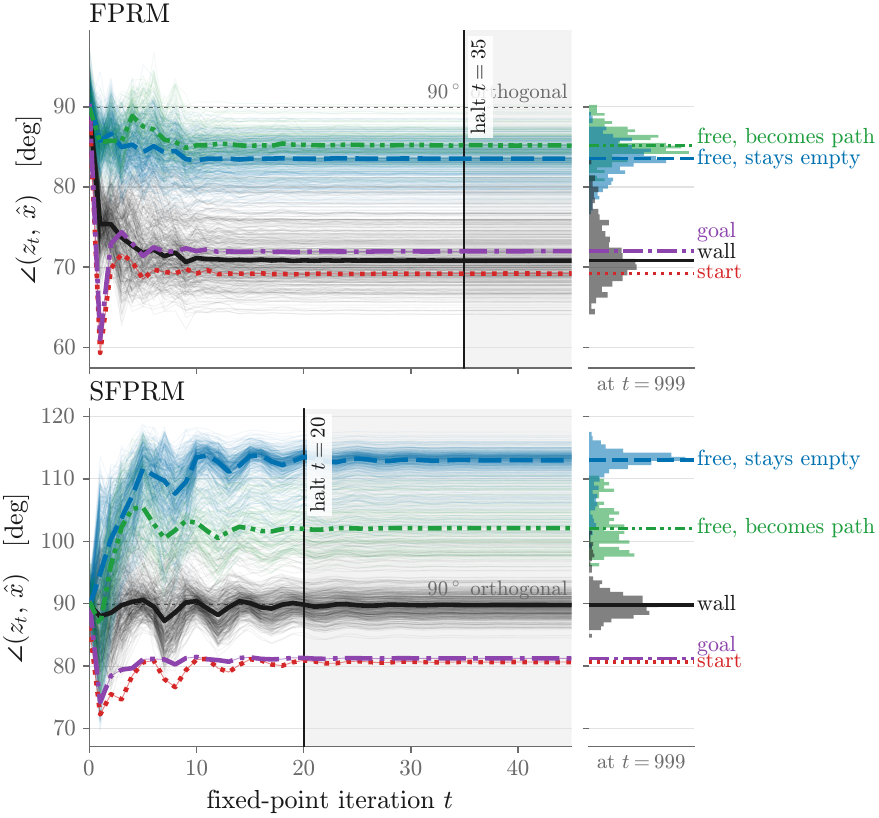}
  \vspace{-5mm}
  \caption{\textbf{Anchor angles by cell class on Maze}: cells (thin
  lines), class means (heavy), the halt iteration (vertical line), and the final distribution (margin).}
  \label{fig:anchor-angles-maze}
\end{wrapfigure}
\textbf{The state reveals the task structure.} On the sphere a token state's only free quantity is its direction, so we probe the computation through the \emph{anchor angle} $\angle(z_{i,t},x_t)$ between each cell's iterate and its unit-normalized input embedding, the quantity that injection acts on (\cref{eq:slfpr-map}).

\Cref{fig:filmstrip} shows four frames of one rollout per task (FPRM and more puzzles in \Cref{fig:filmstrips-sudoku,fig:filmstrips-maze}), and \Cref{fig:anchor-angles-maze,fig:anchor-angles} show the angle over loops for every cell of a test puzzle, clustered by cell class. On Sudoku, we can see that, for \methodname, the clue tokens rotate toward their anchors while the blanks end near orthogonal. By $i{=}39$ the anchor map is a binary image of the clue mask. Each loop iteration writes \emph{the cell class information} into the direction of the state. On the other hand, FPRM encodes the mask mainly in the norm of its state (\Cref{fig:interpretab_angles}), a scale set by training rather than by the geometry. Its anchor map shows only small differences between clues and blanks.
Similarly, on Maze, \methodname settles walls at ${\approx}90^\circ$, empty cells at ${\approx}113^\circ$, and path cells at ${\approx}102^\circ$. Empty and path cells receive the same input token but still end at different angles, so the looping itself decides which cells lie on the path instead of reading it off the input. FPRM separates walls from free cells but not path from empty cells.
On Mini-ARC (\Cref{fig:filmstrips-arc0,fig:filmstrips-arc,fig:filmstrips-arc2}), the cells a task rewrites end furthest from their anchors under \methodname (mean AUC $0.81$), while FPRM's ordering varies across tasks (AUC $0.51$).

\textbf{The final angle predicts correctness.}
On unsolved Sudoku puzzles both \methodname cell classes end farther from their anchors (\Cref{fig:interpretab_angles}): the clues at ${\approx}65^\circ$ instead of ${\approx}52^\circ$, almost disjoint from the solved ones, and the blanks at ${\approx}92^\circ$ instead of ${\approx}88^\circ$. The converged angle thus flags a wrong answer without labels, shared by every cell and puzzle. FPRM's final norms separate the two outcomes far less: its clue and blank norms overlap widely.

\keybox{\textbf{Answer to RQ3.} \methodname's loop writes task structure into the state, and on the sphere it is legible as an angle to the input: the clue mask on Sudoku, the path on Maze, and, on Sudoku, whether the answer is correct. FPRM's angles and state norm carry this structure only faintly or partially.}

\begin{figure}[t]
  \centering
  \begin{subfigure}[t]{0.49\linewidth}
    \centering
    \includegraphics[width=\linewidth]{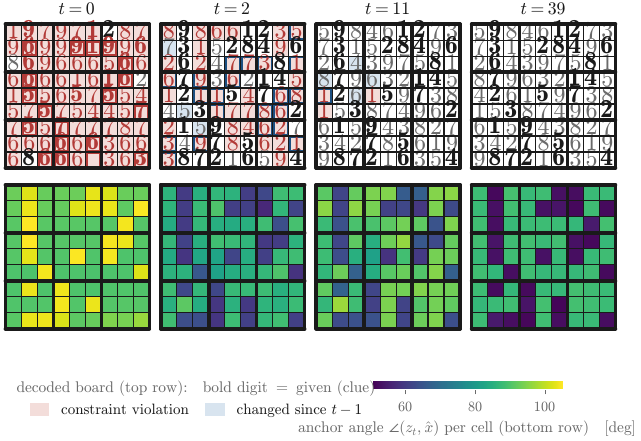}
  \end{subfigure}\hfill
  \begin{subfigure}[t]{0.49\linewidth}
    \centering
    \includegraphics[width=\linewidth]{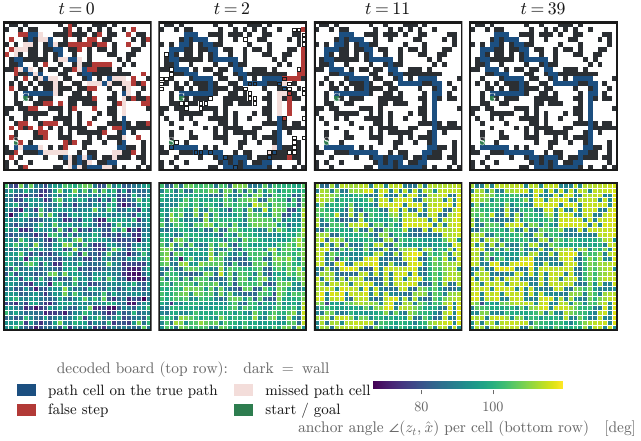}
  \end{subfigure}
  \caption{\textbf{\methodname rollout frames} (FPRM in \Cref{fig:filmstrips-sudoku,fig:filmstrips-maze}). \emph{Top row}: the board decoded from $\mathbf z_i$; \emph{bottom row}: each cell's angle to its own input anchor. \emph{left}: Sudoku-Extreme; \emph{right}: Maze-Hard.}
  \label{fig:filmstrip}
  \vspace{-2mm}
\end{figure}

\begin{figure}[t]
  \centering
  \begin{tabular}{@{}c@{\hspace{5pt}}c@{}}
    \dfp{\textbf{FPRM}} & \dsl{\textbf{\methodname}} \\[1pt]
    \includegraphics[width=.535\linewidth]{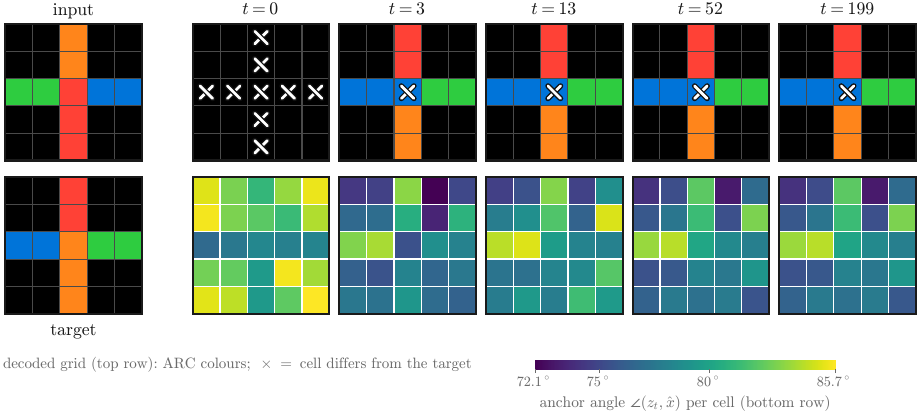} &
    \begin{minipage}[b]{.4334\linewidth}
      \adjincludegraphics[trim={{.19\width} {.20\height} 0 0}, clip, width=\linewidth]{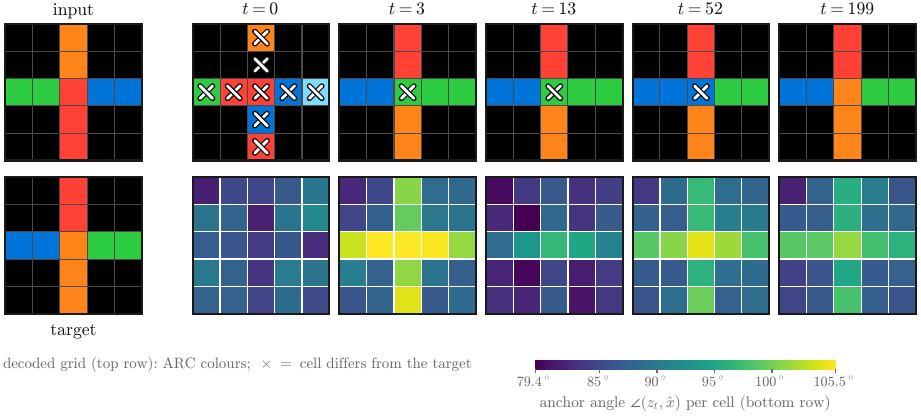}\par\nointerlineskip
      \hfill\adjincludegraphics[trim={{.55\width} 0 0 {.80\height}}, clip, width=.5556\linewidth]{figures/miniarc/filmstrips_more/sfprm/filmstrip_p9.pdf}
    \end{minipage}
  \end{tabular}
\vspace{-2mm}
\caption{\textbf{Rollout frames on Mini-ARC} for FPRM (left) and \methodname (right). Input (top) and target (bottom), then iterations $t=0$ to $199$: decoded grid ($\times$: wrong cell) and angle to the input anchor, $\angle(z_t,\hat x)$. Only \methodname solves this puzzle, and its angle map singles out the cells the task rewrites.}
\label{fig:filmstrips-arc0}
\vspace{-2mm}
\end{figure}

\section{Conclusion}
\label{sec:conclusion}

\begin{wrapfigure}[15]{r}{0.48\linewidth}
  \vspace{-8mm}
  \centering
  \includegraphics[width=\linewidth]{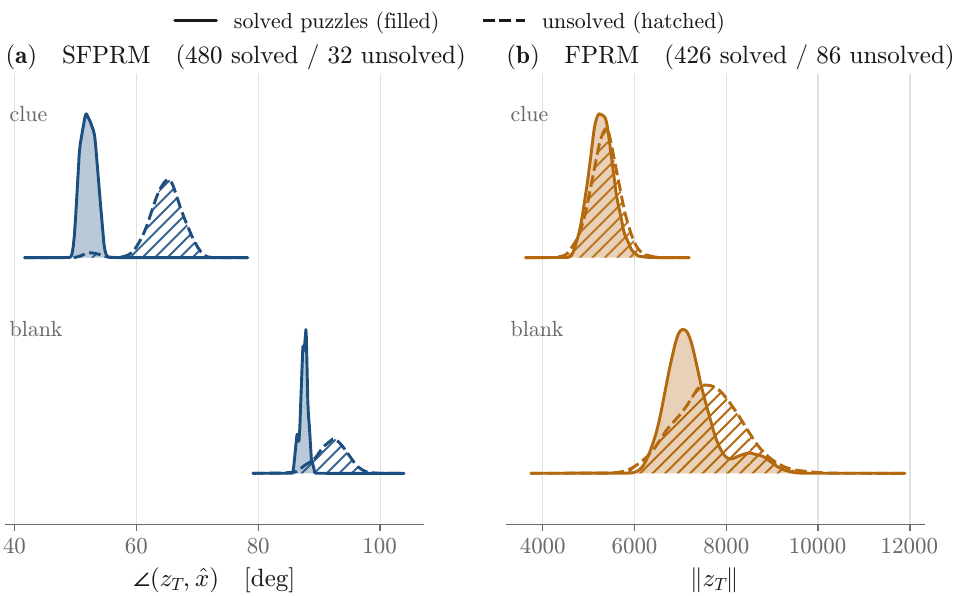}
  \caption{\textbf{Final anchor angle vs.\ correctness} on $512$ Sudoku puzzles: angle (\methodname) and state norm (FPRM) of clue and blank cells, for solved (filled) and unsolved (hatched) puzzles.}
  \label{fig:interpretab_angles}
\end{wrapfigure}
We introduced \methodname, which runs FPRM's damped fixed-point loop on nGPT's hypersphere. This change of geometry assures norm boundedness by construction and eliminates the need for FPRM's coupled residual-rescaling coefficients. We provide theoretical guarantees under weaker assumptions than its Euclidean counterpart.
Empirically, on puzzle and reasoning tasks, \methodname halts more effectively at equal or higher accuracy than FPRM. 
Finally, we showed that the angle each token keeps to its input turns out to be a readable encoding of the task's structure and, on Sudoku, a label-free predictor of correctness.
Natural next steps are to scale \methodname to language modeling and train with better solvers, which we have so far used only at inference.

\newpage
\clearpage

\textbf{Reproducibility.}
We provide the experimental protocol in Appendix \ref{app:exp-details} and in \Cref{tab:slfpr-hparams} a list of the used hyperparameters in our experiments. The inference-time solver description and comparison from \Cref{sec:exp-halting} is provided in Appendix \ref{appsec:solvers}. We implemented \methodname on top of the public FPRM code base \citep{Movahedi2026FPRM}. The code is provided at \url{https://github.com/arberzela/sfprm.git}. All runs were trained using Nvidia A100 GPUs for a single seed each.

\textbf{LLM Usage.}
Running FPRM's damped fixed-point solver on nGPT's unit hypersphere, with geodesic injection, angular damping, and a chordal halting residual, was the authors' idea. For the theory, the authors provided proof strategies and guidance, while AI tools helped develop, write, and check parts of the proofs. The \methodname implementation on the FPRM code base was assisted by Claude and verified by the authors, who also produced the experimental design, analyses, and figures with LLM assistance. Claude helped restructure and edit the manuscript for length and readability.

\paragraph*{Acknowledgments.}
This work was supported by EPFL AI Center \& Swiss AI Initiative Postdoctoral Fellowship Programme. This work was supported as part of the Swiss AI Initiative, through a grant from the ETH Domain and computational resources provided by the Swiss National Supercomputing Centre (CSCS) under the Alps infrastructure.

\bibliography{iclr2027_conference}
\bibliographystyle{iclr2027_conference}

\clearpage
\newpage

\appendix

\section{Proofs for the \methodname guarantees}
\label{app:proofs}

Throughout this appendix, the already token-wise normalized input $\mathbf x\in\mathcal S$ is
fixed and
\[
  \mathcal S=(\sph^{d-1})^T,
  \qquad
  \angle(u,v)
  :=\arccosop(\inner{u}{v}),
  \qquad u,v\in\sph^{d-1}.
\]
The integer $T$ is the number of tokens. Every spherical operation is applied independently to
the $T$ token vectors. In particular, we use the sequence-level convention from
\cref{eq:sequence-slerp}. All operands passed to $\Norm$ are assumed nonzero. Exact
$\Slerp$ means interpolation along the unique shortest arc, so its endpoints are assumed
non-antipodal. Numerical guards require separate care because an arbitrary antipodal fallback
need not be continuous.

\subsection{Basic geometry of spherical interpolation}

\begin{lemma}[Geometry of \textsc{slerp}]
\label{lem:slerp-geometry}
Let $u,v\in\sph^{d-1}$ with $v\ne-u$, and let $s\in[0,1]$. Then
\begin{align*}
  \norm{\Slerp_s(u,v)}_2&=1, \\
  \angle\!\left(u,\Slerp_s(u,v)\right)
    &=s\,\angle(u,v), \\
  \angle\!\left(\Slerp_s(u,v),v\right)
    &=(1-s)\angle(u,v).
\end{align*}
Moreover, if $\Omega:=\angle(u,v)>0$, then
\[
  \Slerp_s(u,v)=\cos(s\Omega)\,u+\sin(s\Omega)\,e,
  \qquad
  e:=\frac{v-\cos\Omega\,u}{\sin\Omega},
\]
where $e$ is a unit vector orthogonal to $u$.
\end{lemma}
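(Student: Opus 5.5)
The plan is to prove the ``moreover'' representation first, since the three identities then follow from it directly; the degenerate case $\Omega=0$ is handled separately. If $\Omega=\angle(u,v)=0$, then $v=u$. By the convention $\Slerp_s(u,u)=u$, the point has unit norm and both angles vanish, which matches $s\cdot 0$ and $(1-s)\cdot 0$. Because $v\ne-u$, in the remaining case $\Omega\in(0,\pi)$, so $\sin\Omega>0$ and $e$ is well defined.

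For $\Omega\in(0,\pi)$, I first check that $e$ is a unit vector orthogonal to $u$. Using $\inner{u}{v}=\cos\Omega$ and $\norm{u}=\norm{v}=1$,
\[
\inner{e}{u}=\frac{\cos\Omega-\cos\Omega}{\sin\Omega}=0,
\qquad
\norm{e}^2=\frac{1-2\cos^2\Omega+\cos^2\Omega}{\sin^2\Omega}=1 .
\]
Next I substitute $v=\cos\Omega\,u+\sin\Omega\,e$ into \cref{eq:slerp}. The coefficient of $e$ becomes $\sin(s\Omega)$. The coefficient of $u$ is
\[
\frac{\sin((1-s)\Omega)+\sin(s\Omega)\cos\Omega}{\sin\Omega}.
\]
Expanding $\sin((1-s)\Omega)=\sin\Omega\cos(s\Omega)-\cos\Omega\sin(s\Omega)$ reduces this coefficient to $\cos(s\Omega)$. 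This gives the representation $\Slerp_s(u,v)=\cos(s\Omega)\,u+\sin(s\Omega)\,e$.

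The identities now follow from orthonormality of $\{u,e\}$. The norm is $\cos^2(s\Omega)+\sin^2(s\Omega)=1$. The inner product with $u$ is $\cos(s\Omega)$, and since $s\Omega\in[0,\pi)$ lies in the range of $\arccosop$, the angle to $u$ is exactly $s\Omega$. For the angle to $v$, I compute
\[
\inner{\Slerp_s(u,v)}{v}=\cos(s\Omega)\cos\Omega+\sin(s\Omega)\sin\Omega=\cos((1-s)\Omega),
\]
and $(1-s)\Omega\in[0,\pi)$, so the angle is $(1-s)\Omega$.

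There is no real obstacle here. The only point needing care is to note that the angles $s\Omega$ and $(1-s)\Omega$ lie in $[0,\pi]$. Without this, $\arccosop(\cos\cdot)$ would not simply return its argument.
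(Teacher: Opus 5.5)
Your proof is correct and follows the paper's argument essentially step for step: handle $\Omega=0$ via the convention $\Slerp_s(u,u)=u$, then substitute the orthonormal decomposition $v=\cos\Omega\,u+\sin\Omega\,e$ into \cref{eq:slerp} and apply the sine addition rule. Your explicit check that $s\Omega$ and $(1-s)\Omega$ lie in $[0,\pi)$, so that $\arccosop$ inverts $\cos$, is a detail the paper leaves implicit.
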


\begin{proof}
Set $\Omega=\angle(u,v)$. If $\Omega=0$, then $v=u$ and
$\Slerp_s(u,u)=u$ by definition. Its norm is one and both angles are zero, proving
all three claims in this case.
For $0<\Omega<\pi$, define
\[
  e
  :=\frac{v-\cos\Omega\,u}{\sin\Omega}.
\]
Then $\inner{u}{e}=0$, $\norm{e}_2=1$, and
$v=\cos\Omega\,u+\sin\Omega\,e$. Substituting this identity into the
\textsc{slerp} formula and using the sine addition rule gives
\[
  \Slerp_s(u,v)
  =\cos(s\Omega)u+\sin(s\Omega)e.
\]
The two terms are orthogonal, so the result has unit norm. The same expression places it at angle
$s\Omega$ from $u$ and $(1-s)\Omega$ from $v$.
\end{proof}

\subsection{Guarded interpolation}
\label{sec:app-slerp}

The implementation replaces exact \textsc{slerp} by a guarded step. The sine formula of
\cref{eq:slerp} gives $0/0$ at $\Omega=0$ and is ill conditioned nearby, especially in bf16,
although $\Slerp_\eta(u,u)=u$; at $\Omega=\pi$ the shortest arc is not unique. Write
$c=\inner{u}{v}$ and $\nlerp_\eta(u,v):=\Norm\big((1-\eta)u+\eta v\big)$, and near an antipode fix
an explicit arc through a unit vector $e(u,v)\perp u$, namely $e(u,v)=\Norm(P_u v)$ with
$P_u=I-uu^\top$ when $\norm{P_u v}_2>\epsilon_v$ and a fixed unit vector $e_0(u)\perp u$ otherwise.
The guarded interpolation is
\begin{equation}
  \label{eq:gslerp}
  \gSlerp_\eta(u,v)=
  \begin{cases}
    \nlerp_\eta(u,v), & c>1-\epsilon_\Omega,\\[2pt]
    \operatorname{Arc}_\eta(u,v):=\cos(\eta\Omega)\,u+\sin(\eta\Omega)\,e(u,v), & c<-1+\epsilon_\Omega,\\[2pt]
    \Slerp_\eta(u,v), & \text{otherwise.}
  \end{cases}
\end{equation}
The last branch is the exact step, and the middle one makes the unavoidable choice of arc near an
antipode explicit. The first branch is what the solver executes in the convergence regime, and the
following lemma is what the boundedness argument of \Cref{sec:guarantees} needs from it.

\begin{lemma}[The guarded step stays on the sphere and on the arc]
\label{lem:gslerp}
Let $u,v\in\sph^{d-1}$ and $\eta\in[0,1]$. Every branch of \cref{eq:gslerp} returns a unit vector.
On the first branch, $\nlerp_\eta(u,v)$ lies on the same great circle as $\Slerp_\eta(u,v)$, at
angle $\theta=\eta\Omega+O(\Omega^3)$ from $u$, where $\Omega=\angle(u,v)$.
\end{lemma}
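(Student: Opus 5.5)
The plan is to treat the three branches of \cref{eq:gslerp} separately, since each is a closed-form expression whose unit norm can be read off directly. We then do a short planar computation for the angular claim on the first branch.

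\emph{Unit norm, branch by branch.} The exact branch is covered by \Cref{lem:slerp-geometry}: there $c\ge-1+\epsilon_\Omega$, so $v\ne-u$, and the lemma gives $\norm{\Slerp_\eta(u,v)}_2=1$.

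For the arc branch, $e(u,v)$ is a unit vector orthogonal to $u$ in both subcases. If $\norm{P_uv}_2>\epsilon_v$, then $\Norm(P_uv)$ is unit and lies in the range of $P_u$, hence is orthogonal to $u$. Otherwise the fixed $e_0(u)\perp u$ is unit by choice. Therefore $\norm{\cos(\eta\Omega)u+\sin(\eta\Omega)e}_2^2=\cos^2+\sin^2=1$.

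For the nlerp branch, $\Norm$ returns a unit vector provided its argument is nonzero. We check this explicitly: $\norm{(1-\eta)u+\eta v}_2^2=(1-\eta)^2+\eta^2+2\eta(1-\eta)c$. When $c>1-\epsilon_\Omega>0$ (assuming $\epsilon_\Omega<1$), this quantity is at least $(1-\eta)^2+\eta^2\ge\tfrac12$.

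\emph{Same great circle.} On the first branch, if $\Omega=0$ both expressions equal $u$ and there is nothing to prove. For $\Omega>0$, write $v=\cos\Omega\,u+\sin\Omega\,e$ with the same $e$ as in \Cref{lem:slerp-geometry}. Then
\[
(1-\eta)u+\eta v=(1-\eta+\eta\cos\Omega)\,u+\eta\sin\Omega\,e ,
\]
a vector in $\operatorname{span}\{u,e\}$ with nonnegative coefficients. After normalization it equals $\cos\theta\,u+\sin\theta\,e$, where
\[
\tan\theta=\frac{\eta\sin\Omega}{1-\eta+\eta\cos\Omega},\qquad \theta\in[0,\Omega].
\]
This lies on the same great circle through $u$ and $e$ as $\Slerp_\eta(u,v)=\cos(\eta\Omega)u+\sin(\eta\Omega)e$, and on the same shortest arc. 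The denominator is positive, being at least $1-\eta+\eta(1-\epsilon_\Omega)>0$.

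\emph{The expansion, which is the main step.} We need $\theta=\eta\Omega+O(\Omega^3)$, and the point to watch is uniformity in $\eta\in[0,1]$. Expanding $\sin\Omega=\Omega-\Omega^3/6+O(\Omega^5)$ and $\cos\Omega=1-\Omega^2/2+O(\Omega^4)$ gives
\[
\tan\theta=\eta\Omega\bigl(1-\tfrac{\Omega^2}{6}\bigr)\bigl(1+\tfrac{\eta\Omega^2}{2}\bigr)+O(\Omega^5)=\eta\Omega+\eta\Omega^3\bigl(\tfrac{\eta}{2}-\tfrac16\bigr)+O(\Omega^5).
\]
Applying $\arctan x=x-x^3/3+O(x^5)$ then yields
\[
\theta=\eta\Omega+\tfrac{\eta(1-\eta)(1-2\eta)}{6}\,\Omega^3+O(\Omega^5).
\]
The cubic coefficient is bounded by an absolute constant on $[0,1]$, so the error is $O(\Omega^3)$ uniformly in $\eta$. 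It also vanishes at $\eta\in\{0,\tfrac12,1\}$, as symmetry predicts.

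If one prefers to avoid series bookkeeping, an alternative is a mean-value bound. The map $\Omega\mapsto\theta(\Omega)-\eta\Omega$ is smooth and odd in $\Omega$, and it vanishes to first order at $\Omega=0$. Its Taylor remainder is therefore cubic, with a constant that is uniform over the compact parameter set $\eta\in[0,1]$ and $\Omega\le\arccosop(1-\epsilon_\Omega)$.
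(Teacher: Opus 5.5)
Your proof is correct and has the same skeleton as the paper's: unit norm branch by branch, then the observation that $(1-\eta)u+\eta v$ lies in the plane of $u$ and $v$. The difference is how you get the angle.

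\begin{itemize}
\item \emph{The paper's route.} It expands $\cos\theta=\bigl((1-\eta)+\eta\cos\Omega\bigr)/\norm{(1-\eta)u+\eta v}_2$ to $\cos\theta=1-\eta^2\Omega^2/2+O(\Omega^4)$ and reads off $\theta$. That inversion is immediate for each fixed $\eta>0$, which is all the statement asks. Making it uniform in $\eta$ would require noting that the quartic remainder carries a factor $\eta^2$.
\item \emph{Your route.} You expand $\tan\theta$ in the orthonormal frame $(u,e)$ and invert with $\arctan$, or use the odd-function and compactness argument. This gives the $O(\Omega^3)$ bound uniformly over $\eta\in[0,1]$ and identifies the cubic term explicitly.
\item \emph{Extra check.} You also verify that the argument of $\Norm$ is nonzero on the first branch, which the paper only assumes globally.
\end{itemize}

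There is one slip. Your $\tan\theta$ expansion is right, but combining it with $\arctan x=x-x^3/3+O(x^5)$ gives the cubic coefficient $\eta\bigl(\tfrac{\eta}{2}-\tfrac16-\tfrac{\eta^2}{3}\bigr)=-\tfrac{\eta(1-\eta)(1-2\eta)}{6}$. The sign in your display is therefore flipped. A quick check: for small $\eta$ one has $\theta\approx\eta\sin\Omega<\eta\Omega$, so nlerp lags slerp for $\eta<\tfrac12$. The slip does not affect the $O(\Omega^3)$ conclusion, its uniformity in $\eta$, or the vanishing at $\eta\in\{0,\tfrac12,1\}$.
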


\begin{proof}
The first branch normalizes explicitly, the third has unit norm by \Cref{lem:slerp-geometry}, and
the middle branch is $\cos(\eta\Omega)u+\sin(\eta\Omega)e$ with $e\perp u$ of unit norm. For the
first branch, $(1-\eta)u+\eta v$ lies in $\mathrm{span}\{u,v\}$, so normalizing it lands on the
great circle through $u$ and $v$: the guard can change the position along the arc, not the arc.
Its angle $\theta$ from $u$ satisfies
\[
  \cos\theta
  =\frac{(1-\eta)+\eta\cos\Omega}{\norm{(1-\eta)u+\eta v}_2}
  =\frac{1-\eta\Omega^2/2+O(\Omega^4)}{1-\eta(1-\eta)\Omega^2/2+O(\Omega^4)}
  =1-\frac{\eta^2\Omega^2}{2}+O(\Omega^4),
\]
using $\cos\Omega=1-\Omega^2/2+O(\Omega^4)$ and
$\norm{(1-\eta)u+\eta v}_2^2=1-2\eta(1-\eta)(1-\cos\Omega)$. Hence $\theta=\eta\Omega+O(\Omega^3)$:
the guard's error is a third-order timing error along the correct arc. The guard fires for
$\Omega\lesssim\sqrt{2\epsilon_\Omega}$, where this discrepancy is of order $\epsilon_\Omega^{3/2}$,
while the $0/0$ is removed.
\end{proof}

\subsection{Euclidean mixing follows the same geodesic}
\label{app:proof-euclidean-injection}

This is the identity behind the geodesic injection of \cref{eq:slfpr-map}: FPRM's affine input
mixing, once normalized, moves along the same arc with a state-dependent fraction.

\begin{lemma}[Normalized Euclidean mixing follows the same geodesic]
\label{lem:fprm-slerp}
Let $u,v\in\reals^d\setminus\{0\}$, write $\widehat u=u/\norm{u}_2$ and
$\widehat v=v/\norm{v}_2$, and assume $\widehat v\ne-\widehat u$. For any $\alpha,\beta>0$, if
$\Omega=\angle(\widehat u,\widehat v)\in(0,\pi)$, then
\begin{equation}
  \label{eq:euclidean-slerp-equivalence}
  \begin{aligned}
  \Norm(\alpha u+\beta v)
  &=\Slerp_{\lambda_{\mathrm{eff}}}(\widehat u,\widehat v),\\
  \lambda_{\mathrm{eff}}
  &=\frac{1}{\Omega}\operatorname{atan2}\!\left(
    \beta\norm{v}_2\sin\Omega,\;
    \alpha\norm{u}_2+\beta\norm{v}_2\cos\Omega
  \right)\in(0,1).
  \end{aligned}
\end{equation}
If $\Omega=0$, then $\Norm(\alpha u+\beta v)=\widehat u=\widehat v$.
\end{lemma}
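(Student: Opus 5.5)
The plan is to reduce everything to the two-dimensional plane $\Pi:=\mathrm{span}\{\widehat u,\widehat v\}$ and compare polar angles there. First dispose of the degenerate case $\Omega=0$. Then $\widehat v=\widehat u$, so $\alpha u+\beta v=(\alpha\norm{u}_2+\beta\norm{v}_2)\widehat u$. Its coefficient is positive, hence normalizing gives $\widehat u$.

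For $\Omega\in(0,\pi)$, use the orthonormal frame from \Cref{lem:slerp-geometry}: $\widehat u$ and $e:=(\widehat v-\cos\Omega\,\widehat u)/\sin\Omega$, so that $\widehat v=\cos\Omega\,\widehat u+\sin\Omega\,e$. Writing $a:=\alpha\norm{u}_2>0$ and $b:=\beta\norm{v}_2>0$, we get
\[
  \alpha u+\beta v=a\widehat u+b\widehat v=(a+b\cos\Omega)\,\widehat u+(b\sin\Omega)\,e .
\]
This vector lies in $\Pi$. It is nonzero because its $e$-component $b\sin\Omega$ is strictly positive.

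Define $\phi:=\operatorname{atan2}(b\sin\Omega,\,a+b\cos\Omega)$. Since the second argument of $\operatorname{atan2}$ is positive, $\phi\in(0,\pi)$, and by construction
\[
  \Norm(\alpha u+\beta v)=\cos\phi\,\widehat u+\sin\phi\,e .
\]
On the other hand, \Cref{lem:slerp-geometry} gives $\Slerp_s(\widehat u,\widehat v)=\cos(s\Omega)\widehat u+\sin(s\Omega)e$. So the identity in \cref{eq:euclidean-slerp-equivalence} holds with $s=\phi/\Omega$, which is exactly the stated $\lambda_{\mathrm{eff}}$.

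It remains to show $\lambda_{\mathrm{eff}}\in(0,1)$, i.e.\ $0<\phi<\Omega$; this is the only step needing care. The lower bound $\phi>0$ is already established. For the upper bound, note that the point $(a+b\cos\Omega,\,b\sin\Omega)$ is a positive combination of $(1,0)$ and $(\cos\Omega,\sin\Omega)$. A positive combination of two vectors at polar angles $0$ and $\Omega<\pi$ lies strictly inside the convex cone between them, so its polar angle lies strictly in $(0,\Omega)$. To make this concrete, I would check the sign of a cross product: the determinant of $(\cos\Omega,\sin\Omega)$ against $(a+b\cos\Omega,\,b\sin\Omega)$ equals $-a\sin\Omega<0$. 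This places the mixed vector strictly clockwise of $\widehat v$ within the upper half-plane, which gives $\phi<\Omega$.

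The main obstacle is handling the branch of $\operatorname{atan2}$ correctly, since $a+b\cos\Omega$ may be negative when $\Omega>\pi/2$. The cone/cross-product argument sidesteps this because it never divides by that quantity.
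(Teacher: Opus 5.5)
Your proof is correct and follows the paper's argument. Both use the same frame $(\widehat u,e)$ and the same decomposition, and your determinant $-a\sin\Omega$ equals $-\norm{\alpha u+\beta v}_2\sin(\Omega-\phi)$, so it is exactly the paper's identity $\sin(\Omega-\lambda_{\mathrm{eff}}\Omega)=\alpha\norm{u}_2\sin\Omega/\norm{\alpha u+\beta v}_2>0$ combined with $\phi,\Omega\in(0,\pi)$. One slip needs fixing: $\phi\in(0,\pi)$ holds because the \emph{first} argument $b\sin\Omega$ of $\operatorname{atan2}$ is positive, not the second, which, as you note yourself, can be negative.
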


\begin{proof}
If $\Omega=0$, then $\widehat u=\widehat v$ and the positive mixture
$\alpha u+\beta v=(\alpha\norm{u}_2+\beta\norm{v}_2)\widehat u$ normalizes to
$\widehat u$. This is also $\Slerp_s(\widehat u,\widehat u)$ for every $s\in[0,1]$.

For $0<\Omega<\pi$, define
\[
  e
  :=\frac{\widehat{v}-\cos\Omega\,\widehat{u}}{\sin\Omega}.
\]
As in \Cref{lem:slerp-geometry}, $\widehat{u}$ and $e$ are orthonormal and
$\widehat{v}=\cos\Omega\,\widehat{u}+\sin\Omega\,e$. Hence
\[
  \alpha u+\beta v
  =\left(
    \alpha\norm{u}_2+\beta\norm{v}_2\cos\Omega
   \right)\widehat{u}
   +\beta\norm{v}_2\sin\Omega\,e.
\]
By the definition of $\operatorname{atan2}$ and $\lambda_{\mathrm{eff}}$ in the lemma,
the normalized mixture is therefore
\[
  \Norm(\alpha u+\beta v)
  =\cos(\lambda_{\mathrm{eff}}\Omega)\,\widehat{u}
   +\sin(\lambda_{\mathrm{eff}}\Omega)\,e.
\]
The first argument of $\operatorname{atan2}$ is positive, so
$0<\lambda_{\mathrm{eff}}\Omega<\pi$. Moreover,
\[
  \sin\!\left(\Omega-\lambda_{\mathrm{eff}}\Omega\right)
  =\frac{\alpha\norm{u}_2\sin\Omega}{\norm{\alpha u+\beta v}_2}>0.
\]
Since $\Omega-\lambda_{\mathrm{eff}}\Omega\in(-\pi,\pi)$, this implies
$0<\lambda_{\mathrm{eff}}\Omega<\Omega$, or $0<\lambda_{\mathrm{eff}}<1$.
The representation in \Cref{lem:slerp-geometry} now gives
\[
  \Norm(\alpha u+\beta v)
  =\Slerp_{\lambda_{\mathrm{eff}}}\!\left(\widehat{u},\widehat{v}\right),
\]
which is exactly \cref{eq:euclidean-slerp-equivalence}.

At $\Omega=\pi$, the shortest great-circle direction is not unique. If the two opposing weighted
vectors also have equal magnitudes, their Euclidean mixture is zero and cannot be normalized.
This explains why antipodal directions are excluded.
\end{proof}

\subsection{Detailed proof of fixed-point existence}
\label{app:proof-existence}

\begin{proof}[Proof of \Cref{prop:existence}]
Set $R:=(1-\lambda)\pi$. Since $0<\lambda<1$, we have $0<R<\pi$. Recall the cap
\[
  \mathcal C_{\mathbf x}
  =\left\{
    \mathbf y\in\mathcal S:
    \angle(y_t,x_t)\le R
    \text{ for every }t
  \right\}.
\]
The exact recurrent map $F_\theta$ does not generally map this cap into itself, because the network is
applied after the token-wise interpolation in \cref{eq:slfpr-map}. The proof therefore uses the
reversed order only in an auxiliary map:
\[
  H(\mathbf y)
  :=\Slerp_{\lambda}\!\left(f_\theta(\mathbf y),\mathbf x\right).
\]
For every $\mathbf y\in\mathcal C_{\mathbf x}$, the assumptions ensure that
$f_\theta(\mathbf y)_t$ is a unit vector different from $-x_t$. By
\Cref{lem:slerp-geometry},
\begin{align*}
  \angle\!\left(H(\mathbf y)_t,x_t\right)
  &=(1-\lambda)
    \angle\!\left(f_\theta(\mathbf y)_t,x_t\right) \\
  &<(1-\lambda)\pi=R.
\end{align*}
Thus $H(\mathcal C_{\mathbf x})\subseteq\mathcal C_{\mathbf x}$. Continuity of $f_\theta$ and
exclusion of the antipode make $H$ continuous throughout this cap.

We next give the Euclidean coordinates needed for Brouwer's theorem. The spherical logarithm
below is a vector-valued coordinate map, not a scalar logarithm. Given unit vectors
$u$ and $v\ne-u$, let
$\Omega=\angle(u,v)<\pi$ and define
\[
  \Log_{u}(v)
  :=
  \begin{cases}
    0, & v=u,\\[2pt]
    \displaystyle
    \frac{\Omega}{\sin\Omega}
    \left(v-\cos\Omega\,u\right),
      & v\ne u.
  \end{cases}
\]
The vector in parentheses is orthogonal to $u$ and has norm $\sin\Omega$. Therefore
\[
  \inner{\Log_{u}(v)}{u}=0,
  \qquad
  \norm{\Log_{u}(v)}_2=\Omega.
\]
In words, $\Log_{u}(v)$ points in the tangent direction of the shortest arc from $u$ to
$v$, and its length equals that arc's angle. Its inverse is
\[
  \Exp_{u}(w)
  :=
  \begin{cases}
    u, & w=0,\\[2pt]
    \displaystyle
    \cos(\norm{w}_2)u
    +\sin(\norm{w}_2)
      \frac{w}{\norm{w}_2},
      & w\ne0,
  \end{cases}
\]
for $\inner{w}{u}=0$ and $\norm{w}_2<\pi$. The two maps are continuous
inverses on every closed cap of radius strictly smaller than $\pi$.

Applying the spherical logarithm token by token identifies $\mathcal C_{\mathbf x}$ continuously
and bijectively with
\[
  \mathcal B
  :=\prod_{t=1}^{T}
  \left\{
    u_t\in\reals^d:
    \inner{u_t}{x_t}=0,
    \ \norm{u_t}_2\le R
  \right\}.
\]
Each factor is an ordinary closed Euclidean ball inside the hyperplane perpendicular to
$x_t$. Consequently, $\mathcal B$ is nonempty, compact, and convex and can be identified
with a subset of $\reals^{T(d-1)}$.

We now apply \emph{Brouwer's fixed-point theorem} \citep{Brouwer1911}, in the following
standard form: if $K$ is a nonempty, compact, convex subset of a finite-dimensional
Euclidean space and $G:K\to K$ is continuous, then
\[
  \text{there exists }y^\star\in K\text{ such that }G(y^\star)=y^\star.
\]
For a textbook statement, see \citet[Section~6.3, p.~161]{Ortega1970}.
Transforming $H$ into the logarithmic coordinates above produces just such a continuous
self-map of $\mathcal B$. Therefore
there exists $\mathbf y^\star\in\mathcal C_{\mathbf x}$ such that
\[
  \mathbf y^\star
  =H(\mathbf y^\star)
  =\Slerp_{\lambda}\!\left(f_\theta(\mathbf y^\star),\mathbf x\right).
\]
Now define
\[
  \mathbf z^\star:=f_\theta(\mathbf y^\star).
\]
The preceding fixed-point identity gives
$\mathbf y^\star=\Slerp_{\lambda}(\mathbf z^\star,\mathbf x)$, and hence
\begin{align*}
  F_\theta(\mathbf z^\star;\mathbf x)
  &=f_\theta\!\left(\Slerp_{\lambda}(\mathbf z^\star,\mathbf x)\right) \\
  &=f_\theta(\mathbf y^\star)
   =\mathbf z^\star.
\end{align*}
Thus $\mathbf z^\star$ is a fixed point of the exact injection-first recurrent map. Notice that the
proof places $\mathbf y^\star$, not necessarily $\mathbf z^\star$, in the input-centered cap.
\end{proof}

\subsection{The first derivative of \textsc{slerp} at coincident endpoints}

\begin{lemma}[First-order interpolation rule]
\label{lem:slerp-linearization}
Fix $0\le\eta\le1$ and $a\in\sph^{d-1}$. As unit vectors $u,v$ approach $a$,
\[
  \Slerp_\eta(u,v)
  =(1-\eta)u+\eta v+O(\norm{u-v}_2^2),
\]
with the exact value $\Slerp_\eta(u,u)=u$ at coincidence. In particular, the
interpolation is jointly differentiable at $(a,a)$, with tangent derivative
\[
  D\Slerp_\eta(a,a)[h,k]=(1-\eta)h+\eta k,
  \qquad \inner{a}{h}=\inner{a}{k}=0.
\]
\end{lemma}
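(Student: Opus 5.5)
The plan is to use the explicit representation from \Cref{lem:slerp-geometry} and expand it in the small angle $\Omega:=\angle(u,v)$. By the chordal identity, $\norm{u-v}_2^2=2(1-\cos\Omega)$, so $\Omega$ and $\norm{u-v}_2$ are comparable near coincidence: $\Omega=\norm{u-v}_2+O(\norm{u-v}_2^3)$. This lets me state every remainder as $O(\Omega^2)$ and convert it at the end. The coincident case $u=v$ holds by the definition $\Slerp_\eta(u,u):=u$. Near $a$ we have $\Omega<\pi$, so the exact formula \cref{eq:slerp} applies whenever $u\ne v$.

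For $0<\Omega<\pi$ I would write $v=\cos\Omega\,u+\sin\Omega\,e$ with $e$ a unit vector orthogonal to $u$, and $\Slerp_\eta(u,v)=\cos(\eta\Omega)u+\sin(\eta\Omega)e$. I then subtract the linear interpolant. Since $(1-\eta)u+\eta v=(1-\eta+\eta\cos\Omega)u+\eta\sin\Omega\,e$, the difference is
\[
\bigl(\cos(\eta\Omega)-1+\eta(1-\cos\Omega)\bigr)u+\bigl(\sin(\eta\Omega)-\eta\sin\Omega\bigr)e .
\]
Taylor expansion bounds the two coefficients:
\begin{itemize}
\item the $u$-coefficient is $\tfrac12\eta(1-\eta)\Omega^2+O(\Omega^4)$;
\item the $e$-coefficient is $\tfrac16\eta(1-\eta^2)\Omega^3+O(\Omega^5)$.
\end{itemize}
Both are bounded by $C\Omega^2$ uniformly in $\eta\in[0,1]$, because $u,e$ are unit vectors and $|\cos x-1|\le x^2/2$, $|\sin x - x|\le x^3/6$. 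This gives the expansion with a constant independent of $u,v$ and $\eta$. The constant does not depend on $e$, so the fact that $e$ is undefined at $u=v$ does not matter.

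For joint differentiability at $(a,a)$, I regard $\Slerp_\eta$ as a map on $\sph^{d-1}\times\sph^{d-1}$, or on a neighbourhood of it in $\reals^d\times\reals^d$ restricted to the sphere. Take $u=a+h+O(|h|^2)$ and $v=a+k+O(|k|^2)$ on the sphere with $h,k$ tangent at $a$. Then $(1-\eta)u+\eta v=a+(1-\eta)h+\eta k+O(|h|^2+|k|^2)$, and the remainder $O(\norm{u-v}^2)\le O(|h|^2+|k|^2)$. So $\Slerp_\eta(u,v)-a-[(1-\eta)h+\eta k]=o(|h|+|k|)$, which is exactly Fréchet differentiability with the stated tangent derivative. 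For curves through $a$ with velocities $h,k$, the same bound shows the derivative along the curves is $(1-\eta)h+\eta k$.

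The main obstacle is uniformity near coincidence. The sine-ratio formula degenerates as $0/0$, and the frame vector $e$ jumps with the direction of $v-u$. I will avoid the formula entirely by working only with the orthonormal representation and the coefficient bounds, which hold uniformly and do not depend on $e$. A minor point is that the linear interpolant leaves the sphere at second order; this is consistent with the statement because the discrepancy is absorbed into the $O(\norm{u-v}_2^2)$ term.
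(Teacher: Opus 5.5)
Your proof is correct and follows the paper's route: a small-angle expansion gives an $O(\Omega^2)$ remainder, the chord identity converts it to $O(\norm{u-v}_2^2)$, and tangent coordinates at $a$ then give the joint derivative $(1-\eta)h+\eta k$. The only cosmetic difference is that you bound the deviation in the orthonormal frame $(u,e)$ of \Cref{lem:slerp-geometry} with explicit constants, whereas the paper expands the two sine-ratio coefficients $\sin((1-\eta)\Omega)/\sin\Omega$ and $\sin(\eta\Omega)/\sin\Omega$ directly.
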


\begin{proof}
For $\Omega=\angle(u,v)>0$ small, the elementary estimate
$\sin(c\Omega)=c\Omega+O(\Omega^3)$, for fixed $c$, gives
\[
  \frac{\sin((1-\eta)\Omega)}{\sin\Omega}
  =(1-\eta)+O(\Omega^2),
  \qquad
  \frac{\sin(\eta\Omega)}{\sin\Omega}
  =\eta+O(\Omega^2).
\]
Substituting into the interpolation formula gives an error of order
$\Omega^2$, since both endpoints have unit norm. The chord identity
$\norm{u-v}_2=2\sin(\Omega/2)$ implies
$\Omega=O(\norm{u-v}_2)$, proving the stated estimate. If $u=v$, the error
is exactly zero because $\Slerp_\eta(u,u)=u$.

To obtain the joint derivative, express nearby endpoints using tangent
coordinates $h,k\perp a$:
\[
  u=\frac{a+h}{\norm{a+h}_2}=a+h+O(\norm{h}_2^2),
  \qquad
  v=\frac{a+k}{\norm{a+k}_2}=a+k+O(\norm{k}_2^2).
\]
Here $\norm{a+h}_2^2=1+\norm{h}_2^2$, so normalization has no first-order
effect. The uniform interpolation estimate now yields
\[
  \Slerp_\eta(u,v)
  =a+(1-\eta)h+\eta k
   +O(\norm{h}_2^2+\norm{k}_2^2).
\]
Thus the linear response to arbitrary simultaneous endpoint perturbations
is $(1-\eta)h+\eta k$, as claimed.
\end{proof}

\subsection{Proof of local convergence under angular damping}
\label{app:proof-convergence}

\begin{proof}[Proof of \Cref{prop:converge}]
\textbf{Tangent directions and fixed points.}
Let $J=D_{\mathbf z}F_\theta(\mathbf z^\star;\mathbf x)$. Consider a
differentiable spherical curve $\mathbf z(\varepsilon)$ with
$\mathbf z(0)=\mathbf z^\star$ and $\mathbf z'(0)=\mathbf u$.
Both its input and output tokens have unit norm, so differentiation gives
\begin{align*}
  0&=\left.\frac{d}{d\varepsilon}\norm{z_t(\varepsilon)}_2^2
       \right|_{\varepsilon=0}
    =2\inner{z_t^\star}{u_t},\\
  0&=\left.\frac{d}{d\varepsilon}
       \norm{F_\theta(\mathbf z(\varepsilon);\mathbf x)_t}_2^2
       \right|_{\varepsilon=0}
    =2\inner{z_t^\star}{(J\mathbf u)_t}.
\end{align*}
The output derivative uses $F_\theta(\mathbf z^\star;\mathbf x)=\mathbf z^\star$.
Hence $J$ sends tangent perturbations to tangent perturbations at the same
fixed point; it includes all coupling between tokens.

For $\eta>0$, interpolation moves a positive fraction of the shortest arc,
so it returns its first endpoint exactly when the endpoints coincide.
The damped update therefore has exactly the fixed points of $F_\theta$,
wherever these arcs are defined. At $\mathbf z^\star$ the endpoints coincide,
and continuity ensures that nearby endpoint pairs are non-antipodal.

\paragraph{Linearization and the damping range.}
Represent a nearby state by tangent coordinates $\mathbf w=(w_t)_{t=1}^T$:
\[
  z_t=\frac{z_t^\star+w_t}{\norm{z_t^\star+w_t}_2},
  \qquad \inner{z_t^\star}{w_t}=0.
\]
These coordinates measure displacement along the sphere to first order:
$\mathbf z=\mathbf z^\star+\mathbf w+O(\norm{\mathbf w}^2)$.
The inverse is
$w_t=z_t/\inner{z_t^\star}{z_t}-z_t^\star$, defined nearby, and also has
identity tangent derivative. Differentiability of $F_\theta$ gives
\[
  F_\theta(\mathbf z;\mathbf x)
  =\mathbf z^\star+J\mathbf w+o(\norm{\mathbf w}).
\]
In particular, the two interpolation endpoints differ by
$O(\norm{\mathbf w})$. By \Cref{lem:slerp-linearization}, replacing their
spherical interpolation by their weighted average incurs only
$O(\norm{\mathbf w}^2)$ error. Thus, writing $\Phi_\eta$ for the damped
update in these coordinates,
\[
  \Phi_\eta(\mathbf w)=A\mathbf w+o(\norm{\mathbf w}),
  \qquad A:=(1-\eta)\operatorname{Id}+\eta J.
\]

If $Jv=\zeta v$ for a nonzero, possibly complex eigenvector $v$, then
$Av=[1+\eta(\zeta-1)]v$. Conversely, every eigenvector of $A$ is one of
$J=(A-(1-\eta)\operatorname{Id})/\eta$. This identifies all eigenvalues
without assuming diagonalizability. As in Euclidean damping
\citep[Appendix~A.3]{Movahedi2026FPRM},
\[
  |1+\eta(\zeta-1)|^2
  =1+2\eta(\Real(\zeta)-1)+\eta^2|\zeta-1|^2<1
\]
exactly when $0<\eta<2(1-\Real(\zeta))/|\zeta-1|^2$.
The theorem's damping bound therefore places every eigenvalue of $A$
strictly inside the unit disk.

\paragraph{From the linearization to convergence.}
This eigenvalue condition need not make the Euclidean norm shrink at every
step. A standard finite-dimensional stability result instead provides a
vector norm $\norm{\cdot}_*$ with induced matrix norm
$a:=\norm{A}_*<1$ \citep{Ortega1970}. This norm is equivalent to the
Euclidean norm: positive constants $c_1,c_2$ satisfy
\[
  c_1\norm{\mathbf w}\le\norm{\mathbf w}_*
  \le c_2\norm{\mathbf w}.
\]
It therefore measures the same notion of convergence, up to fixed factors.

Choose $a<q<1$. The remainder $o(\norm{\mathbf w})$ means that its norm
divided by $\norm{\mathbf w}$ tends to zero. By norm equivalence, we may
choose a sufficiently small neighborhood where
\[
  \norm{\Phi_\eta(\mathbf w)-A\mathbf w}_*
  \le(q-a)\norm{\mathbf w}_*.
\]
The triangle inequality then gives the full one-step error bound:
\begin{align*}
  \norm{\Phi_\eta(\mathbf w)}_*
  &\le\norm{A\mathbf w}_*
      +\norm{\Phi_\eta(\mathbf w)-A\mathbf w}_*\\
  &\le a\norm{\mathbf w}_*+(q-a)\norm{\mathbf w}_*
   =q\norm{\mathbf w}_*.
\end{align*}
Because $q<1$, a sufficiently small ball in this norm stays within itself
after every update. Starting there, induction gives
$\norm{\mathbf w_i}_*\le q^i\norm{\mathbf w_0}_*$.
Finally, the coordinate map and its inverse are smooth nearby, so they
bound coordinate error and state error by constant multiples of each
other. Together with norm equivalence, this yields
\[
  \norm{\mathbf z_i-\mathbf z^\star}
  \le Cq^i\norm{\mathbf z_0-\mathbf z^\star}
\]
for some $C>0$, proving the asserted local geometric convergence.
\end{proof}

\paragraph{The guarded step.}
Near $\mathbf z^\star$ the endpoints $\mathbf z_i$ and $F_\theta(\mathbf z_i;\mathbf x)$ are
$O(\norm{\mathbf w})$ apart, so on a neighborhood of $\mathbf z^\star$ the solver executes the first
branch of \cref{eq:gslerp}. By \Cref{lem:gslerp} that branch lies on the same arc as the exact step,
at an angle from $u$ that differs by $O(\Omega^3)$, so it differs from $\Slerp_\eta(u,v)$ by
$O(\norm{u-v}_2^3)$ and shares the expansion $(1-\eta)u+\eta v+O(\norm{u-v}_2^2)$ of
\Cref{lem:slerp-linearization}. It also returns $u$ only when $v=u$, because for $\eta>0$ a
normalized nonnegative combination of two non-antipodal unit vectors equals $u$ only if $v=u$. The
proof above therefore applies verbatim, and \Cref{prop:converge} holds for the guarded step the
solver runs.

\begin{remark}[Optimal damping, and when the window is narrow]
\label{rem:damping}
Write $\zeta=a+bi$. The squared modulus $|1+\eta(\zeta-1)|^2=1-2\eta(1-a)+\eta^2|\zeta-1|^2$ is an
upward parabola in $\eta$ with value $1$ at $\eta=0$, so for $a<1$ the window
$0<\eta<2(1-a)/|\zeta-1|^2$ of \cref{eq:eta-stability-window} is exactly the interval on which it
lies below one, the vertex $\eta^\star=(1-a)/|\zeta-1|^2$ is the optimal damping for that mode, and
the smallest attainable squared modulus is $1-(1-a)^2/|\zeta-1|^2$, a contraction factor of
$\sqrt{1-(1-a)^2/|\zeta-1|^2}$ per step. Three consequences. \emph{(i)} Damping
handles rotation but not expansion: for $a\ge1$ every term is nonnegative, so no $\eta>0$ contracts
the mode and only training can remove it. \emph{(ii)} The guarantee is weak when $1-a$ is small compared with $|b|$, that is,
when $\zeta-1$ is nearly imaginary, whatever $|\zeta|$ is: for $\zeta=0.9+1.5i$ the undamped mode grows by $|\zeta|\approx1.75$ per step while
rotating by about $59^\circ$, the window is $0<\eta<0.089$, and even the optimal damping contracts
by only $0.22\%$ per step, so halving the error takes about $313$ iterations. Bounding the
rotation magnitude, not only the expansion, is therefore what makes halting fast. \emph{(iii)} The
patience rule of \Cref{alg:slfpr-simple}, which starts at $\eta=1$ and multiplies it by $\gamma$
whenever the residual stalls, is a derivative-free descent into this window: well-conditioned
modes ($\zeta=0.5$ admits every $\eta<4$) converge at once, and the geometric decay eventually
drops $\eta$ below the threshold of the rotational modes, at which point their tangent error
contracts geometrically and, whenever the residual then improves within $P$ iterations, the decay
pauses, without any spectrum ever being estimated. Because the contraction is in an adapted norm,
the residual need not fall monotonically, so the rule may keep shrinking $\eta$; this slows
convergence but cannot break it, since every smaller $\eta$ stays inside the window.
\end{remark}

\subsection{Global convergence under sufficiently strong input injection}
\label{app:proof-global-convergence}

\begin{proof}[Proof of \Cref{thm:global-convergence}]
We use the token-wise logarithm and exponential from
Appendix \ref{app:proof-existence}. They satisfy
\[
  \Slerp_\lambda(\mathbf z,\mathbf x)
  =\Exp_{\mathbf x}\!\left((1-\lambda)\Log_{\mathbf x}(\mathbf z)\right)
\]
whenever $z_t\ne-x_t$ for every token $t$. Indeed, \cref{eq:slerp} gives
$\Slerp_\lambda(z_t,x_t)=\Slerp_{1-\lambda}(x_t,z_t)$, which by the representation in
\Cref{lem:slerp-geometry} equals $\cos((1-\lambda)\Omega)\,x_t+\sin((1-\lambda)\Omega)\,e$ with
$\Omega=\angle(x_t,z_t)$ and $\Log_{x_t}(z_t)=\Omega e$, and this is
$\Exp_{x_t}((1-\lambda)\Log_{x_t}(z_t))$ by definition, also when $\Omega=0$.
Thus input injection multiplies the logarithmic coordinates by exactly
$1-\lambda$: it preserves the direction toward the current state while
reducing its angular distance from the input.

The geometric difficulty is that shrinking angles to the input does not
give a global contraction bound between states. Two states approaching an
input antipode along different arcs can be arbitrarily close while their
injected states remain separated for fixed $0<\lambda<1$. Thus the direct
global Lipschitz argument for Euclidean input mixing in FPRM
\citep[Appendix~A.2]{Movahedi2026FPRM} does not apply here.
Instead, every injected token has angle less than $(1-\lambda)\pi$ to its
input. We use this uniform bound to place all injected states in one fixed
neighborhood, prove contraction in its logarithmic coordinates, and then
transfer convergence back to the original states.

These are smooth inverse coordinate maps for $\norm{w_t}_2<\pi$. The
spherical exponential is smooth, and its differential has singular values
$1$ in the radial direction and $\sin r/r$ in the transverse directions at
radius $0<r<\pi$, with identity differential at the origin. Its differential
is therefore invertible throughout this domain, so the inverse function
theorem gives smoothness of the logarithm.

\paragraph{A fixed neighborhood of the input.}
Since $f_\theta$ is $C^1$ near $\mathbf x$ and
$f_\theta(\mathbf x)_t\ne-x_t$ for every $t$, continuity allows us to choose
$0<r<\pi/2$ such that $f_\theta\circ\Exp_{\mathbf x}$ is $C^1$ and none of
its output tokens equals $-x_t$ on an open neighborhood of
\[
  \mathcal B_r
  :=\left\{\mathbf w\in(\reals^d)^T:
       \inner{w_t}{x_t}=0,\quad \norm{w_t}_2\le r
       \text{ for every }t\right\}.
\]
Here $\mathcal B_r$ is a compact, convex product of ordinary tangent-space
balls. Define, on this neighborhood,
\[
  A(\mathbf w)
  :=\Log_{\mathbf x}\!\left(f_\theta(\Exp_{\mathbf x}(\mathbf w))\right).
\]
Here $\mathbf w$ represents an injected state, and $A(\mathbf w)$ represents
the network's output in the same coordinates. Multiplication by
$1-\lambda$ then performs the next input injection.
Both $A$ and $f_\theta\circ\Exp_{\mathbf x}$ are $C^1$, so their derivatives
are bounded on the compact set $\mathcal B_r$. Integrating these derivatives
along the line segment between any $\mathbf w,\mathbf v\in\mathcal B_r$
gives a common finite constant $L\ge1$ such that
\begin{align*}
  \norm{A(\mathbf w)-A(\mathbf v)}
  &\le L\norm{\mathbf w-\mathbf v},\\
  \norm{f_\theta(\Exp_{\mathbf x}(\mathbf w))
        -f_\theta(\Exp_{\mathbf x}(\mathbf v))}
  &\le L\norm{\mathbf w-\mathbf v}.
\end{align*}
All norms here are the product Euclidean norm defined in the main text.
Moreover, $\norm{A(\mathbf w)_t}_2<\pi$ for every token, because the
logarithm's length is the angle to $x_t$ and the antipode is excluded.

\paragraph{Strong injection gives a contraction.}
Choose
\[
  \lambda_0
  :=1-\min\!\left\{\frac r\pi,\frac1{2L}\right\}\in(0,1).
\]
Fix any $\lambda\in(\lambda_0,1)$, and set
$a:=1-\lambda$ and $q:=aL$. Then $a\pi<r$ and $0<q<1/2$.
The two terms in the threshold serve separate purposes: $a\pi<r$ keeps
all injected states in the chosen neighborhood, while $aL<1$ makes their
distances contract. The factor $1/2$ simply leaves a strict margin below one.
The map $\mathbf w\mapsto aA(\mathbf w)$ sends $\mathcal B_r$ into itself,
since
\[
  \norm{aA(\mathbf w)_t}_2<a\pi<r.
\]
It is also a contraction:
\[
  \norm{aA(\mathbf w)-aA(\mathbf v)}
  \le q\norm{\mathbf w-\mathbf v}.
\]
The set $\mathcal B_r$ is closed in a finite-dimensional Euclidean space,
hence complete. Banach's contraction theorem therefore gives a unique
$\mathbf w^\star\in\mathcal B_r$ satisfying
$\mathbf w^\star=aA(\mathbf w^\star)$, and every iteration
$\mathbf w_{i+1}=aA(\mathbf w_i)$ starting in $\mathcal B_r$ satisfies
\[
  \norm{\mathbf w_i-\mathbf w^\star}
  \le q^i\norm{\mathbf w_0-\mathbf w^\star}.
\]

\paragraph{Return to the original recurrence.}
Let $\mathbf z_0$ be any admissible initialization, with
$z_{0,t}\ne-x_t$ for every $t$, and set
\[
  \mathbf w_0:=a\Log_{\mathbf x}(\mathbf z_0).
\]
Because $\norm{w_{0,t}}_2<a\pi<r$, this initial coordinate lies in
$\mathcal B_r$, regardless of the distance from $\mathbf z_0$ to a fixed
point. The original recurrence and the coordinate iteration are related by
\[
  \mathbf z_{i+1}
  =f_\theta(\Exp_{\mathbf x}(\mathbf w_i)),
  \qquad
  \mathbf w_{i+1}
  =a\Log_{\mathbf x}(\mathbf z_{i+1})
  =aA(\mathbf w_i).
\]
These identities hold inductively for every $i\ge0$: each
$\mathbf w_i$ remains in $\mathcal B_r$, where the network's outputs avoid
the antipodes. Hence every subsequent interpolation is well-defined.

Define $\mathbf z^\star:=f_\theta(\Exp_{\mathbf x}(\mathbf w^\star))$.
The identity $\mathbf w^\star=aA(\mathbf w^\star)$ implies
$\mathbf w^\star=a\Log_{\mathbf x}(\mathbf z^\star)$, so
\[
  F_\theta(\mathbf z^\star;\mathbf x)
  =f_\theta(\Exp_{\mathbf x}(\mathbf w^\star))
  =\mathbf z^\star.
\]
This fixed point is admissible by the choice of $r$. Conversely, any
admissible fixed point $\mathbf z$ of $F_\theta$ gives
$\mathbf w=a\Log_{\mathbf x}(\mathbf z)\in\mathcal B_r$ with
$\mathbf w=aA(\mathbf w)$. Uniqueness of $\mathbf w^\star$ then implies
$\mathbf z=\mathbf z^\star$.

Finally, the Lipschitz bound above transfers convergence to the original
states:
\begin{align*}
  \norm{\mathbf z_{i+1}-\mathbf z^\star}
  &\le L\norm{\mathbf w_i-\mathbf w^\star}\\
  &\le Lq^i\norm{\mathbf w_0-\mathbf w^\star}
   \le 2Lr\sqrt T\,q^i,
  \qquad i\ge0.
\end{align*}
Thus the theorem holds with $C=2Lr\sqrt T$, uniformly over all admissible
initializations. The neighborhood, derivative bound, and injection threshold
depend on the fixed network and input.
\end{proof}

\section{Details on Experimental Setup}
\label{app:exp-details}

This appendix section provides the details on the experimental setup and experiments in \Cref{sec:experiments}. Details on tasks and data are in
Appendix \ref{app:exp-tasks}, on models in Appendix \ref{app:exp-models}, on training and solver settings in Appendix \ref{app:exp-training}. \Cref{tab:slfpr-hparams} reports all hyperparameters. The evaluation protocol is in Appendix \ref{app:exp-eval}.

\subsection{Tasks and data}
\label{app:exp-tasks}

\textbf{Sudoku-Extreme.}
This dataset consists of $9{\times}9$ Sudoku puzzles selected for difficulty, with a test split of $422{,}786$ puzzles. The standard training build has $1{,}000$ base puzzles, each augmented $1000\times$ by validity-preserving transformations (relabeling of the digits and permutations of rows, columns, bands, and stacks), giving $10^6$ training examples. We train instead on a $5$k$\times200$ build, $5{,}000$ base puzzles with $200$ augmentations each, which has the same number of training examples. Unless stated otherwise, every \methodname Sudoku run and every rollout of \Cref{sec:exp-halting,sec:exp-geometry} uses this build. A puzzle is a
sequence of $81$ cell tokens (blank or a digit) preceded by a learned per-puzzle embedding of $16$
tokens, and the model predicts the digit of every cell.

\textbf{Maze-Hard.}
The $30{\times}30$ maze benchmark of the same suite: $1{,}000$ training and $1{,}000$ test mazes, used without augmentation. A maze is a sequence of $900$ cell tokens (wall, free, start, goal) plus the puzzle embedding, and the model predicts for every cell whether it lies on the shortest path. 

\textbf{Mini-ARC.}
A $5{\times}5$-grid ARC variant \citep{kim2022playgrounds} with $149$ tasks. We train on the $528$
demonstration pairs augmented $100\times$ and report task-level Pass@1 on the $149$ test pairs,
voting over augmentations. A puzzle is $25$ cell tokens over a $12$-token vocabulary plus the
puzzle embedding.

\paragraph{ARC-AGI-1 and ARC-AGI-2.}
The abstraction-and-reasoning benchmarks of \citet{chollet2025arcprize2024technical}: few-shot tasks
in which a grid transformation must be inferred from demonstration pairs and applied to a test grid.
Following the HRM/TRM protocol, we train on $960$ (ARC-1) and $1{,}280$ (ARC-2) base puzzles, each
augmented $1000\times$ with the transformations of \citet{Wang2025HRM}, and report Pass@2 on the $400$
and $120$ evaluation tasks. Grids are padded to $30{\times}30$, so a puzzle is a sequence of $900$
cell tokens over a $12$-token vocabulary plus the puzzle embedding.

\subsection{Models}
\label{app:exp-models}

\methodname is implemented in the FPRM code base and shares an identical backbone with $6.83$M trainable parameters: hidden size $512$, $8$ attention heads, MLP expansion $4$, rotary position embeddings, and the configuration fields \texttt{H\_cycles}/\texttt{L\_cycles}/\texttt{L\_layers} $=3/6/2$ of the code base, run as a single weight-tied loop without hierarchy. The number of iterations backpropagated through, is the BPTT window $K=6$ of \Cref{sec:background}.

\textbf{FPRM.} FPRM uses a pre-norm block with the coupled residual-rescaling coefficients $(\alpha_1,\beta_1,\alpha_2,\beta_2)$ of \Cref{sec:background}, linear damping, and the relative residual $r_{\mathrm E}$ with tolerance $\tau_{\mathrm E}=0.1$.

\textbf{\methodname.} The spherical model replaces the block by the nGPT-style block of
\Cref{sec:slfpr-core}: QK-normalized attention and a SwiGLU MLP, each folded into the unit-norm
state by an eigen-learning-rate retraction, with all weight vectors renormalized after every
optimizer step. Three further choices are specific to \methodname. \emph{(i)} The input is injected by
the geodesic step of \cref{eq:slfpr-map} with a learned fraction $\lambda$, initialized at $0.1$.
\emph{(ii)} The readout is \emph{unnormalized}: the rows of the output head are not constrained to
unit norm, since with a unit-norm state that constraint would cap every logit at $\pm s_z$, and with
it the confidence of the predictions. \emph{(iii)} The solver is the
geodesic one of \Cref{alg:slfpr-simple}, with the chordal residual of \cref{eq:slfpr-residual} and
tolerance $\tau_c=0.01$, which corresponds to an angular tolerance of
$\tau_{\mathrm{ang}}=\arccos(0.99)\approx8.1^\circ$ per token.

\subsection{Training and solver settings}
\label{app:exp-training}

\Cref{tab:slfpr-hparams} lists every hyperparameter. Unless stated otherwise, \methodname trains with a constant learning rate after the warmup steps and an exponential moving average of the weights with rate $0.999$. We optimize \methodname's $2$-D weight matrices with Muon \citep{jordan2024muon,pethick2025training} at learning rate $5\times10^{-3}$ and momentum $0.95$, and its embeddings, normalization scales, and scalar parameters with AdamW at $10^{-3}$. Weight decay is $0.1$ for \methodname, and the per-puzzle embeddings have their own learning rate. \methodname additionally uses weight dropout and variational dropout of $0.1$ on the main build (variational dropout only on the $1$k$\times1000$ Sudoku run).
Following the HRM/FPRM code base, one \emph{epoch} draws one augmented example for every base puzzle; hence $12{,}000$ epochs on the $5$k$\times200$ build and $60{,}000$ epochs on the $1$k$\times1000$ build and on Maze-Hard all amount to the same $78{,}125$ optimizer steps at a global batch of $768$. For the final Sudoku results, we train for $15{,}000$ epochs.


\textbf{Deep supervision.} Training runs one segment of $K=6$ solver iterations per optimizer step and backpropagates through all $K$ geodesic steps of the segment; the recurrent state is carried across optimizer steps and detached between segments, and a sample is re-initialized once it halts or once it has been unrolled for \texttt{max\_iter} $=12$ iterations. The halting residual, patience counter, step-size schedule, and iterate bookkeeping are evaluated on detached states, so they steer the solver without entering the gradient.

\textbf{Solver.} Both geodesic steps of \methodname use the guarded interpolation $\gSlerp$ of \cref{eq:gslerp}. Per sample, the solver keeps the current step size $\eta$, the patience counter, the residual, and the residual state. As in FPRM, $\eta$ starts at $1$ and is multiplied by the decay factor $\gamma$ whenever the residual has not improved for $P=5$ consecutive iterations. The loop halts when the residual drops below tolerance, when $\eta$ falls below the floor $\eta_{\min}$, or when the iteration budget is exhausted.

\begin{table*}[t]
\centering
\small
\caption{\textbf{Hyperparameters and learned scalars of the \methodname runs in \Cref{sec:experiments}}. Sudoku-Extreme values are for the epoch-$15{,}000$ run, while values in parentheses are for the $12{,}000$-epoch constant-schedule run. All convolutions are depthwise and bias-free. For the per-channel step sizes $\alpha_A,\alpha_M$ and the per-dimension QK scale $s_{qk}$, cells give the mean for layer 1 / layer 2, and the range row gives the minimum and maximum over all channels of both layers. On ARC, a linear Q-head on a detached state copy only breaks ties in Pass@$K$.}
\setlength{\tabcolsep}{4pt}
\resizebox{\linewidth}{!}{
\begin{tabular}{@{}lcccc@{}}
\toprule
& Maze-Hard & Sudoku-Extreme & ARC-AGI-1 & Mini-ARC \\
\midrule
\multicolumn{5}{@{}l}{\emph{Data and schedule}} \\
base puzzles $\times$ augmentations & $1{,}000 \times 1$ & $5{,}000 \times 200$ & $960 \times 1{,}000$ & $149 \times 100$ \\
sequence length / vocabulary & $900$ / $6$ & $81$ / $11$ & $900$ / $12$ & $25$ / $12$ \\
test set & $1{,}000$ mazes & $422{,}786$ puzzles & $400$ tasks & $149$ tasks \\
epochs & $60{,}000$ & \shortstack{$15{,}000$ ($12{,}000$)} & $100{,}000$ & $100{,}000$ \\
optimizer steps & $78{,}125$ & \shortstack{$97{,}656$ ($78{,}125$)} & $518{,}226$ & $68{,}993$ \\
global batch size & \multicolumn{4}{c}{$768$} \\
\midrule
\multicolumn{5}{@{}l}{\emph{Optimization}} \\
optimizer & \multicolumn{4}{c}{Muon $+$ AdamW} \\
Muon lr / momentum & \multicolumn{3}{c}{$5\times10^{-3}$ / $0.95$} & $2.5\times10^{-3}$ / $0.95$ \\
AdamW lr & \multicolumn{4}{c}{$1\times10^{-3}$} \\
LR schedule & constant & \shortstack{cosine to $0.87\times$ (const.)} & constant & constant \\
warmup steps & $200$ & $200$ ($2{,}000$) & $0$ & $0$ \\
weight decay & \multicolumn{4}{c}{$0.1$} \\
puzzle-embedding lr / wd & $10^{-2}$ / $0$ & $10^{-3}$ / $0$ & $10^{-2}$ / $0$ & $10^{-2}$ / $0$ \\
EMA rate & \multicolumn{4}{c}{$0.999$} \\
\midrule
\multicolumn{5}{@{}l}{\emph{Backbone}} \\
hidden / heads / MLP expansion & \multicolumn{4}{c}{$512$ / $8$ / $4$} \\
\texttt{H\_cycles} / \texttt{L\_cycles} / \texttt{L\_layers} & \multicolumn{4}{c}{$3$ / $6$ / $2$} \\
BPTT window $K$ (\texttt{n\_back}) & \multicolumn{4}{c}{$6$} \\
puzzle-embedding length & $16$ & $16$ & $16$ & $1$ \\
positional encoding & \multicolumn{4}{c}{RoPE} \\
conv branch & 1-D causal, $k=4$ & none ($2$-D, $k=3$) & 1-D causal, $k=4$ & 1-D causal, $k=4$ \\
trainable parameters & \multicolumn{4}{c}{$6.83$M} \\
\midrule
\multicolumn{5}{@{}l}{\emph{Solver and regularization}} \\
halting tolerance $\tau$ & \multicolumn{4}{c}{$0.01$} \\
patience $P$ / floor $\eta_{\min}$ & \multicolumn{4}{c}{$5$ / $0.001$} \\
readout & \multicolumn{4}{c}{unnormalized head, scale $s_z$} \\
\texttt{max\_iter} (train / eval) & $24$ / $1{,}000$ & $12$ / $35{,}000$ & $8$ / $48$ & $8$ / $200$ \\
step-size decay $\gamma$ (train / eval) & $0.9$ / $0.99$ & $0.9$ / $0.99$ & $0.9$ / $0.9$ & $0.9$ / $0.9$ \\
Q-head loss weight & $0$ & $0$ & $0.5$ & $0.5$ \\
weight dropout & $0.1$ & $0.1$ & $0.05$ & $0.05$ \\
variational dropout & $0.1$ & $0.1$ & $0.1$ & $0.2$ \\
\midrule
\multicolumn{5}{@{}l}{\emph{Learned scalars at the evaluated checkpoint}} \\
input injection $\lambda$, init.\ $0.1$ & $0.575$ & $0.1$, fixed ($0.470$) & $0.334$ & $0.436$ \\
logit scale $s_z$, init.\ $16$ & $75.8$ & $88.9$ ($75.8$) & $206.0$ & $79.0$ \\
attention step size $\alpha_A$, init.\ $0.5$ & $0.50$ / $0.54$ & \shortstack{$0.41$ / $0.50$ \\ ($0.49$ / $0.42$)} & $0.44$ / $0.46$ & $0.41$ / $0.47$ \\
MLP step size $\alpha_M$, init.\ $0.5$ & $0.48$ / $0.58$ & \shortstack{$0.47$ / $0.47$ \\ ($0.48$ / $0.64$)} & $0.49$ / $0.67$ & $0.54$ / $0.71$ \\
QK scale $s_{qk}$, init.\ $1$ & $1.56$ / $1.40$ & \shortstack{$1.11$ / $0.95$ \\ ($0.86$ / $1.00$)} & $1.31$ / $1.53$ & $1.07$ / $1.06$ \\
\bottomrule
\end{tabular}
}
\label{tab:slfpr-hparams}
\vspace{-6mm}
\end{table*}

\subsection{Evaluation protocol}
\label{app:exp-eval}

All tasks are scored by exact accuracy (Pass@1 for Maze-Hard and Sudoku-Extreme and Pass@2 for ARC-1 and ARC-2): a puzzle counts only if every cell is correct, i.e.\ the whole board or the whole path. 
We report the final exact test accuracy, except that the \methodname run on the $1$k$\times1000$ Sudoku build is reported at both its best and its final evaluation because it collapses between them (\Cref{tab:settings}). The published rows of \Cref{tab:main-results} are taken from the respective papers.

\section{Additional Results, Ablations, and Analysis}
\label{secapp:ablations}

This section provides the results that complement \Cref{sec:experiments}: the Sudoku-Extreme training dataset ablation (Appendix \ref{app:build-ablation}), the Sudoku-Extreme and Maze-Hard counterparts of the convergence analyses of \Cref{sec:exp-halting} (Appendix \ref{app:conv-halting}), the learned scalars over training (Appendix \ref{secapp:lambda_scales}), additional results on the geometry of the computation (Appendix \ref{app:exp-rollouts}), and the study of alternative fixed-point solvers at inference (Appendix \ref{appsec:solvers}).

\subsection{Training-build ablation}
\label{app:build-ablation}

\Cref{tab:settings} compares \methodname with an FPRM control trained under the same budget on the two Sudoku-Extreme builds described in Appendix \ref{app:exp-tasks}. The $5$k$\times200$ does not seem to provide benefits for FPRM, however for \methodname it removes the collapse happening when training with the $1$k$\times1000$ build between its best and its final checkpoint.

\subsection{Convergence and halting}
\label{app:conv-halting}

\begin{wraptable}{R}{.56\linewidth}
\centering
\vspace{-4mm}
\caption{Exact accuracy on Sudoku-Extreme for \methodname vs. FPRM at matched backbone ($6.83$M trainable parameters), data, global batch ($768$), and optimizer steps/epochs ($78{,}125 / 12{,}000$). ``$1\text{k}\!\times\!1000$'' and ``$5\text{k}\!\times\!200$'' are the
number of distinct base puzzles times the augmentations per puzzle. The \methodname $1\text{k}$ run is given at both its best and
checkpoint because it collapses between them, which the $5\text{k}$ build fixes. For the $5\text{k}$ run the final checkpoint is also the best. The FPRM rows fall well below the $94.2\%$ reported for FPRM in \Cref{tab:main-results}.}
\label{tab:settings}
\small
\setlength{\tabcolsep}{6pt}
\resizebox{\linewidth}{!}{
\begin{tabular}{@{}llc@{}}
\toprule
Model & Setting & \shortstack{Sudoku-Ext.} \\
\midrule
FPRM  & Sudoku $1\text{k}\!\times\!1000$                 & 78.5\%  \\
FPRM  & Sudoku $5\text{k}\!\times\!200$                  & 78.4\% \\
\midrule
\methodname & Sudoku $1\text{k}\!\times\!1000$ (best checkpoint)  & 87.0\%  \\
\methodname & Sudoku $1\text{k}\!\times\!1000$ (final checkpoint) & 12.2\% \\
\methodname & Sudoku $5\text{k}\!\times\!200$          & \textbf{92.8\%} \\
\bottomrule
\end{tabular}
}
\end{wraptable}
For \Cref{sec:exp-halting,sec:exp-geometry} each trained model is rolled with halting disabled and without segment resets, in fp32, for a fixed budget of $N_{\max}=400$ iterations on $n=512$ uniformly sampled test sequences from Sudoku-Extreme and $N_{\max}=1000$ on the full Maze-Hard test data. At every iterate $\mathbf z_i$ the model's readout head decodes the board, and we record exact and per-token accuracy together
with the residual each solver tests: the worst-token chordal residual
$r_{\mathrm c,i}=\max_t\bigl(1-\inner{z_{i,t}}{\widetilde z_{i,t}}\bigr)$ with $\tau_c=0.01$ for \methodname, and the worst-token relative residual $r_{\mathrm E,i}=\max_t\norm{z_{i,t}-\widetilde z_{i,t}}_\infty/\norm{\widetilde z_{i,t}}_\infty$ with $\tau_{\mathrm E}=0.1$ for FPRM, where $\widetilde{\mathbf z}_i=F_\theta(\mathbf z_i;\mathbf x)$ is each model's proposal. A sequence counts as halted at the first iteration from which its residual stays below its own threshold. 
Halt iterations are reported per sequence (mean over the sequences that halt), whereas the deployed evaluator halts a whole batch on its worst sequence, so the per-sequence counts are idealized costs. Every FPRM rollout is generated using the released checkpoints from \citet{Movahedi2026FPRM}. 

\begin{figure}[t]
\centering
\begin{subfigure}[b]{.655\linewidth}
  \centering
  \includegraphics[width=.49\linewidth]{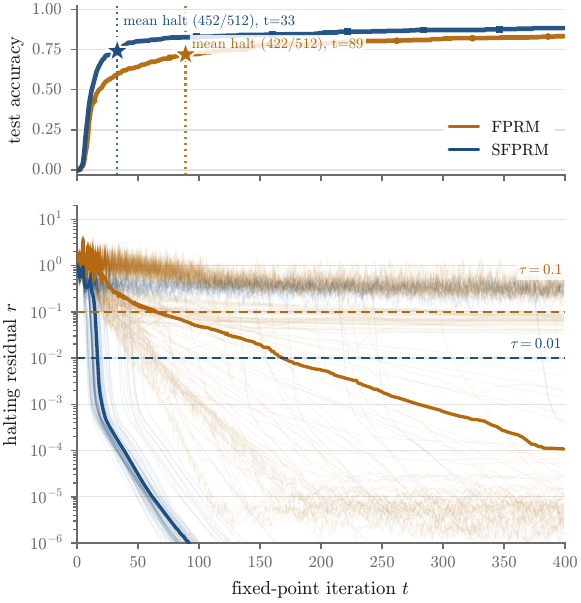}\hfill
  \includegraphics[width=.49\linewidth]{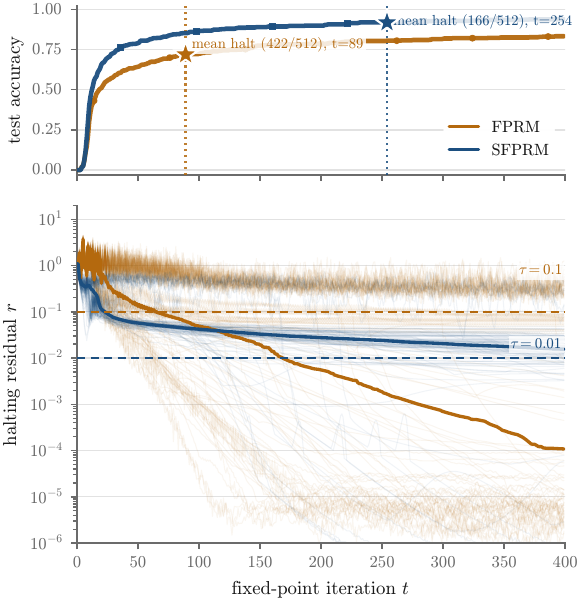}
  \caption{Sudoku-Extreme, learned $\lambda=0.47$ (left) and fixed $\lambda=0.1$ (right)}
  \label{fig:halting-sudoku-a}
\end{subfigure}\hfill
\begin{subfigure}[b]{.321\linewidth}
  \centering
  \includegraphics[width=\linewidth]{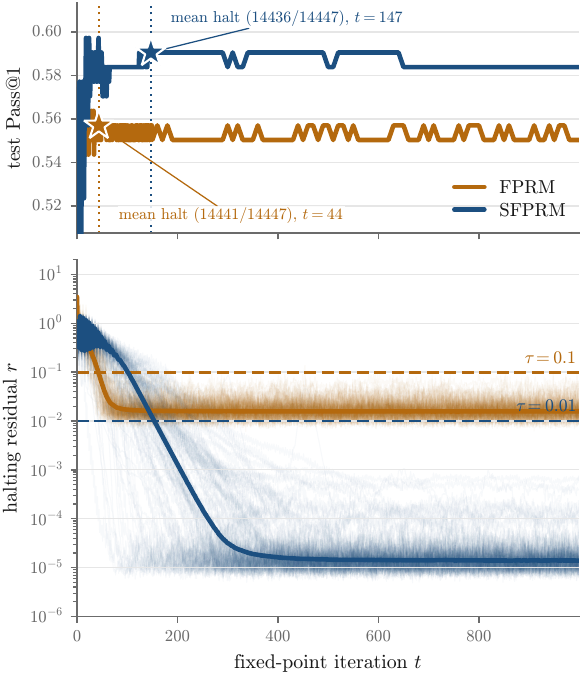}
  \caption{Mini-ARC}
  \label{fig:halting-miniarc}
\end{subfigure}
\caption{\textbf{Accuracy and halting on Sudoku-Extreme and Mini-ARC.} Open-loop rollouts of FPRM (amber) and \methodname (blue). (a) $512$ Sudoku test sequences of the released FPRM checkpoint and of \methodname with the learned $\lambda=0.47$ (left) and with $\lambda$ fixed at $0.1$ (right, the run in \Cref{tab:main-results}). (b) All $14{,}447$ Mini-ARC test rows ($149$ tasks, $100$ augmentations each) over $1000$ iterations, for the runs in \Cref{tab:main-results}. \emph{Top:} exact accuracy of the decoded state (a) and task-level Pass@1 (b). Dotted lines and stars mark each model's mean halt iteration, labelled with how many sequences halt. \emph{Bottom:} every sequence's halting residual (thin) and its median (heavy) against its own threshold ($\tau_c{=}0.01$ chordal for \methodname, $\tau_{\mathrm E}{=}0.1$ for FPRM).}
\label{fig:halting-sudoku}
\vspace{-6mm}
\end{figure}

\textbf{Halting.} \Cref{fig:halting-sudoku} shows the two Sudoku runs of \Cref{tab:slfpr-hparams} (one that reached $92.8\%$ and one $96.6\%$ exact accuracy) against FPRM. With the learned $\lambda=0.47$ (left), the median residual of \methodname drops below $\tau_c$ within a few dozen iterations, whereas FPRM's decays roughly geometrically over the whole window. $452$ of the $512$ sequences halt for \methodname, at a mean of $33$ iterations, against $422$ at a mean of $89$ for FPRM. Exact accuracy
is still rising after the mean halt for both models, so on this task the threshold fires before the
population has finished improving. With $\lambda$ fixed at $0.1$ (right) the residual of \methodname decays far more slowly and hovers near $\tau_c$, so only $166$ sequences halt within $400$ iterations, at a mean of $254$, although the decoded accuracy is higher. In both runs the puzzles a solver halts on are solved in $99.7$--$100\%$ of cases, so on Sudoku a halt certifies the answer, whereas a missing halt does not certify a failure.

\textbf{Fixed-point uniqueness.} \Cref{fig:global-conv-maze} repeats the global-convergence test of \Cref{fig:global-conv} on the Maze model (learned $\lambda=0.575$), running the raw map without damping for $1000$ steps. The contractive regime of \Cref{thm:global-convergence} exists here too, but it is narrower: all eight initializations reach the same fixed point only for $\lambda$ between about $0.6$ and $0.8$. The trained $\lambda$ sits at the lower edge of this band, where some mazes have a single end point and others do not, and below $\lambda\approx0.5$ no maze does. For $\lambda\ge0.9$ the end states spread again, while the token accuracy of the fixed point falls to $0.7$--$0.8$ as the input pulls the state onto itself. 
On the maze of panel (b) the raw map does not reach the tolerance at the trained $\lambda$ within $1000$ steps, although it decodes the maze exactly throughout; it is the damped solver that halts this model. At $\lambda=0.7$ the raw map converges inside the exact-match region, and at $\lambda=0.9$ the eight paths run straight to one point that no longer solves the maze.

\begin{figure}[t]
\centering
\includegraphics[width=0.23\linewidth]{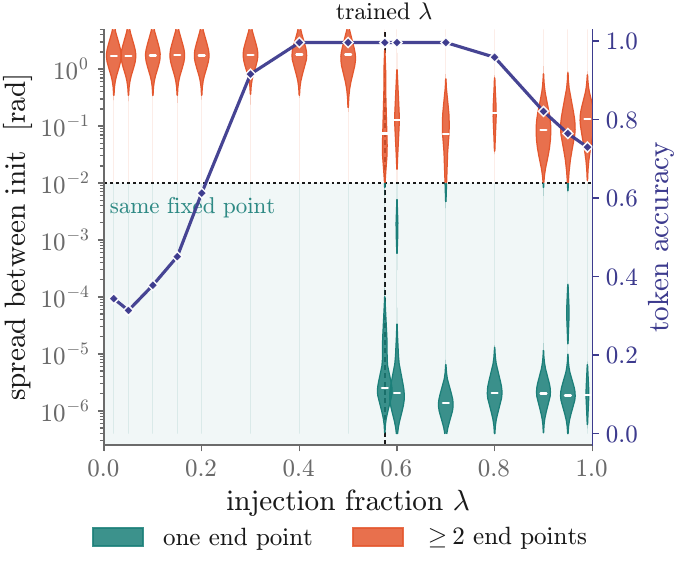}\hfill
\includegraphics[width=0.76\linewidth]{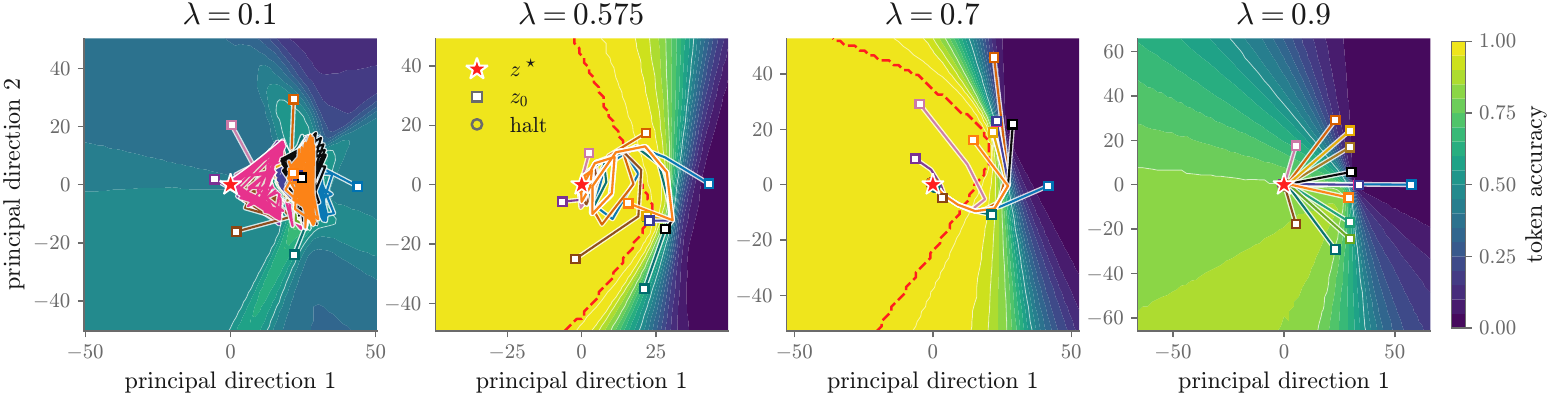}
\caption{\textbf{Global convergence under input injection on Maze-Hard} (\methodname, learned $\lambda=0.575$), analogue to \Cref{fig:global-conv}. \textit{(left)} The largest angle between the end states of eight initializations of the raw map (teal: all coincide, coral: $\ge2$ end states), and the token accuracy of the end state (indigo, right axis), as a function of $\lambda$. \textit{(right)} Trajectories at four values of $\lambda$ for a single Maze example: the top-2 principal plane of the trajectories from the eight initializations $\mathbf z_0$ (squares) through $\mathbf z^\star$ (star; circles: first iteration with $r_{\mathrm c,i}<\tau_c$). The color shows the token accuracy of the state decoded at each position (dashed red: exact-match region).}
\label{fig:global-conv-maze}
\end{figure}

\begin{figure}[t]
\centering
\begin{subfigure}{.49\linewidth}
  \centering
  \includegraphics[width=\linewidth]{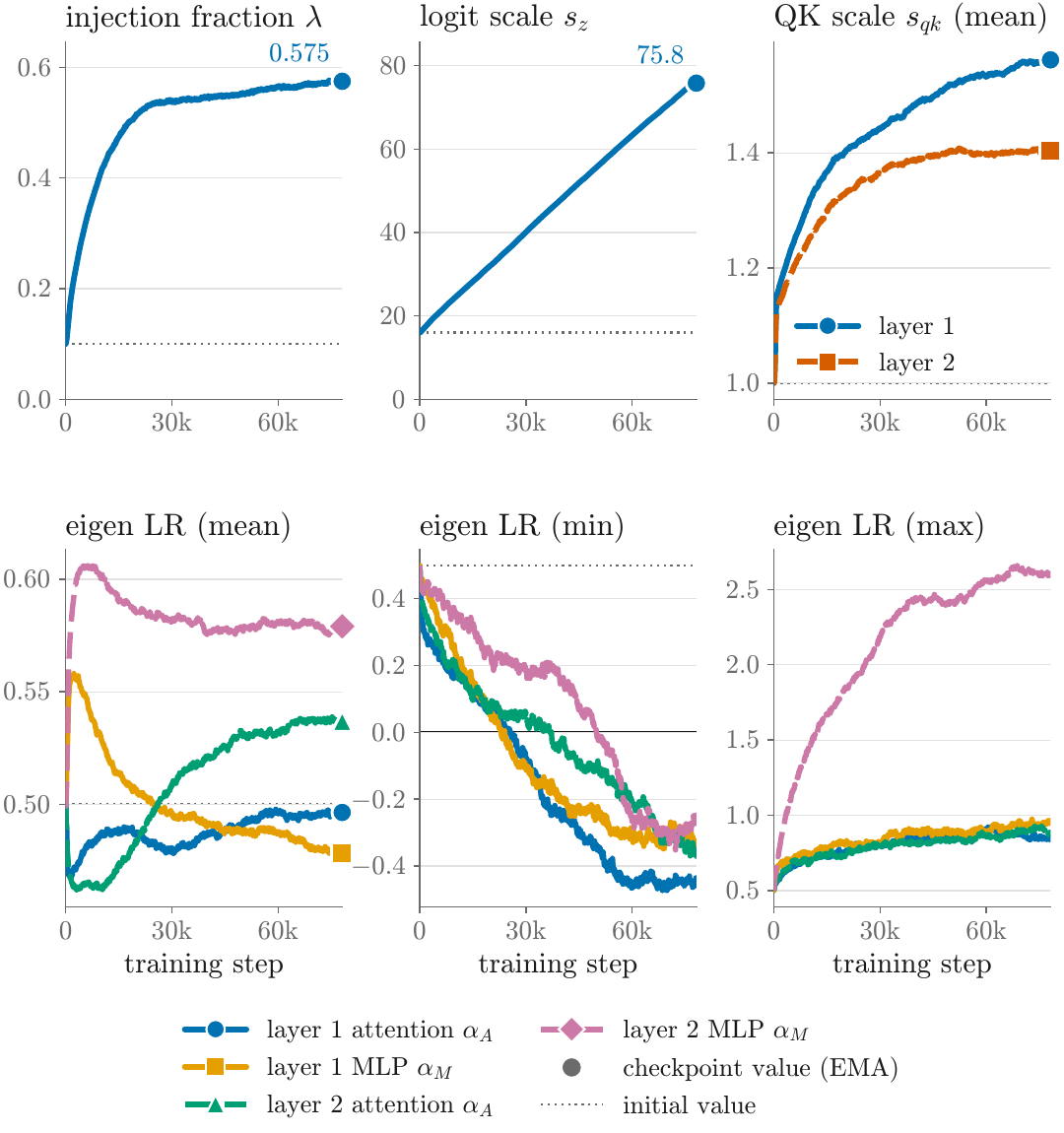}
  \caption{Maze-Hard}
\end{subfigure}\hfill 
\begin{subfigure}{.49\linewidth}
  \centering
  \includegraphics[width=\linewidth]{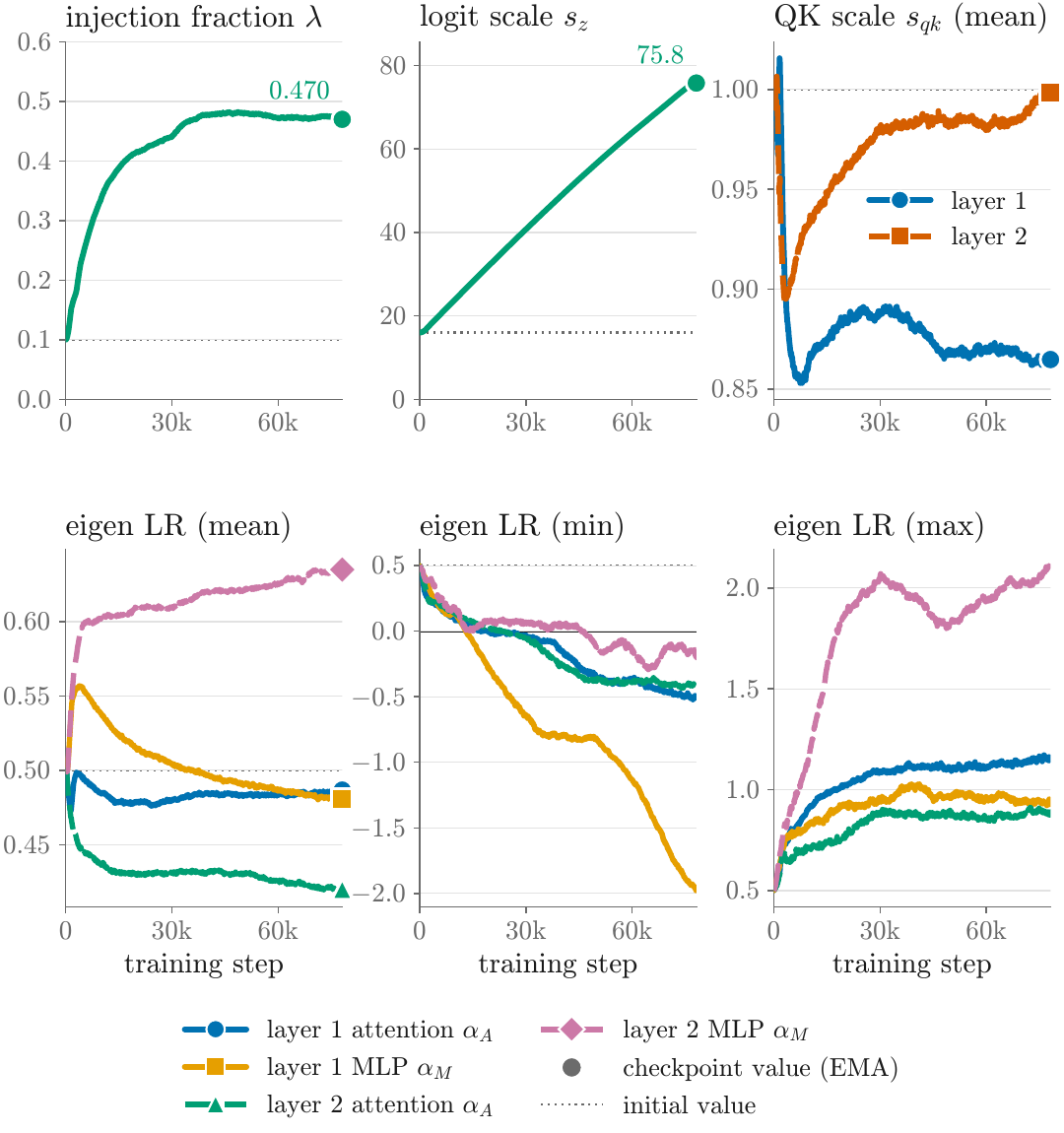}
  \caption{Sudoku-Extreme}
\end{subfigure}
\caption{\textbf{Learned parameters of SFPRM during training.}
Per task, top row: injection fraction $\lambda$, logit scale $s_z$, and QK scale $s_{qk}$ (mean over its 64 dimensions, per layer). Bottom row: the eigen learning rates $\alpha_A$ (attention) and $\alpha_M$ (MLP) of both layers, as the mean, minimum and maximum over the 512 hidden dimensions.}
\label{fig:learned-params}
\end{figure}

\subsection{Eigen LR and other learnable parameters during training}
\label{secapp:lambda_scales} 

\Cref{fig:learned-params} tracks the learned scalars of SFPRM during training on Maze and Sudoku. Both models raise the injection fraction from $0.1$ to $\lambda=0.575$ on Maze and $\lambda=0.470$ on Sudoku, with most of the increase in the first 30k of 78k training steps. This moves them toward the strong-injection regime of \Cref{thm:global-convergence}. On Sudoku, all eight initializations already reach the same fixed point on 72\% of puzzles at the learned $\lambda$ (\Cref{fig:global-conv}). The logit scale $s_z$ grows almost linearly from 16 to 76, at nearly the same rate on both tasks, and is still rising when training ends. The eigen learning rates keep means near their initial $0.5$, similarly as noted by \citet{Loshchilov2024nGPT}. However, individual dimensions spread widely, from negative values, which move the state away from the sub-layer output, to values up to $2.6$.

\subsection{The geometry of the computation}
\label{app:exp-rollouts}

\textbf{Qualitative results (\Cref{fig:filmstrip,fig:filmstrips-sudoku,fig:filmstrips-maze,fig:filmstrips-arc,fig:filmstrips-arc2}).} We unroll several test puzzles from each task open loop, for $40$ iterations on Maze and Sudoku (rendered at $t=0,2,11,39$) and $200$ on Mini-ARC (rendered at $t=0,3,13,52,199$). The decoded board is the per-cell argmax of the readout. The anchor map shows, per cell, the angle $\angle(z_{i,t},\hat x_i)$ between the iterate and its unit-normalized input embedding $\hat x_i$ (token plus position), on a colour scale normalized to the minimum and maximum angle over that puzzle's whole rollout, so that a colour means the same angle in every frame of a strip. The same quantity is computed for FPRM from its unnormalized state. \methodname's anchor angles separate the roles of the cells far more cleanly than FPRM's: on Sudoku, clue cells end $35^\circ$ closer to their anchors than blank cells, against $6^\circ$ for FPRM, and on Mini-ARC the cells a task must rewrite end furthest from their anchors (AUC $0.81$ for changed vs.\ unchanged cells, averaged over tasks), whereas FPRM's ordering flips from task to task (AUC $0.51$).

\begin{wrapfigure}[24]{R}{0.47\linewidth}
\vspace{-6mm}
\centering
\includegraphics[width=\linewidth]{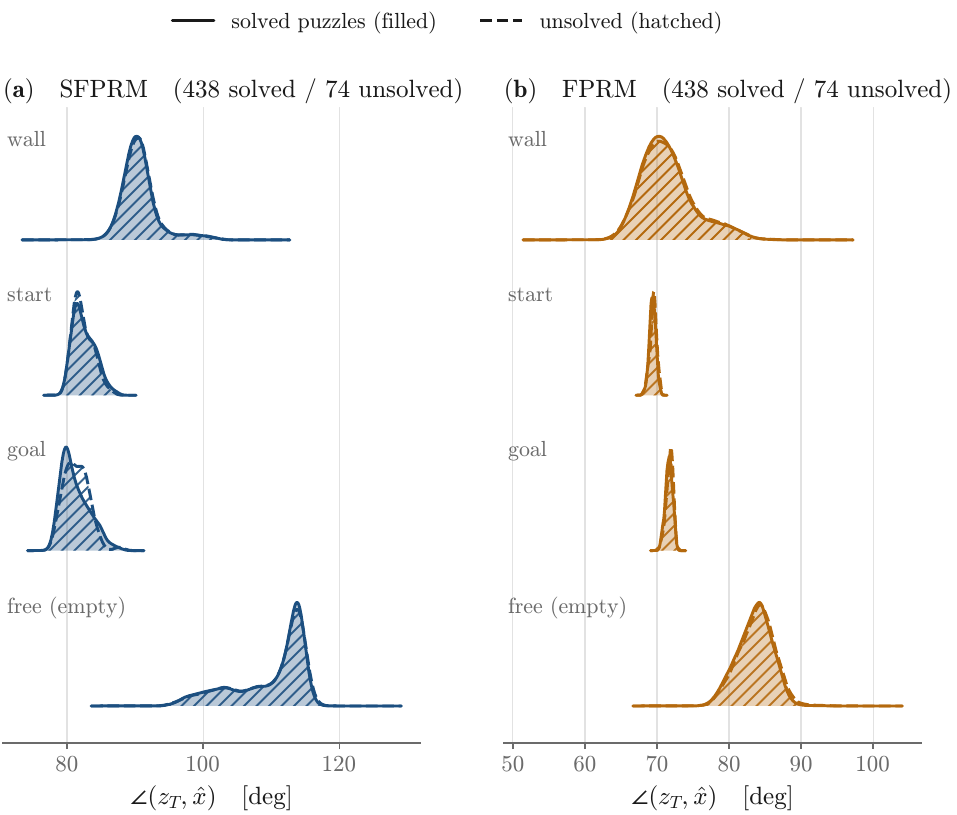}
\caption{\textbf{Final anchor angle on solved and unsolved mazes} (Maze-Hard analogue of \Cref{fig:interpretab_angles}). Per-class angle to the anchor at the end of the rollout over $512$ test mazes ($438$ solved, $74$ unsolved), for (a) \methodname and (b) FPRM's normalized state. Filled: solved; hatched: unsolved. Unlike on Sudoku, the distributions overlap for every class, so the final angle does not predict correctness.}
\label{fig:interpretab-angles-maze}
\end{wrapfigure}
\textbf{Anchor angles by cell class (\Cref{fig:anchor-angles,fig:anchor-angles-maze,fig:anchor-angles-maze-gallery}).} 
For four Sudoku and Maze test puzzles, we track the anchor angle of every cell throughout the unrolling, grouped by cell class (clues and blanks on Sudoku; walls, free cells that stay empty, free cells that become path, start, and goal on Maze). In the figures, the heavy lines are class means, whilst the thin lines show the individual cells' anchor angles.
The histograms on the right-hand side of each figure show the distribution at the end of the unrolling ($i=399$ on Sudoku, $999$ on Maze). For FPRM the angle is that of the normalized state.

\textbf{Trajectories on the sphere (\Cref{fig:sphere_traj}).} We draw each token's path in the frame spanned by the inference unrolling trajectory's two leading singular directions and its input anchor $\mathbf x$. This way the anchor is exact rather than a near-zero projection. We only show the cell trajectories that capture the most trajectory variance. Over $98\%$ of each trajectory's variance lies in that frame, so every token effectively moves on one great circle, as the geodesic steps of \Cref{alg:slfpr-simple} prescribe, and the anchor sits nearly orthogonal to that circle ($|\inner{x_t}{v_{1,2}}|\approx0.05$). The input injection pulls from far off the plane in which the
computation happens, explaining how a small $\lambda$ can anchor the hidden state without dictating it.

\textbf{Solved versus unsolved (\Cref{fig:interpretab_angles}).} For $128$ Sudoku test puzzles we compare the distribution of the final anchor angle (\methodname) or final norm (FPRM) of clue and blank cells between solved and unsolved puzzles. The same comparison on $512$ test mazes
(\Cref{fig:interpretab-angles-maze}) shows no separation for either model.


\begin{figure}[t]
\centering
\includegraphics[width=.99\linewidth]{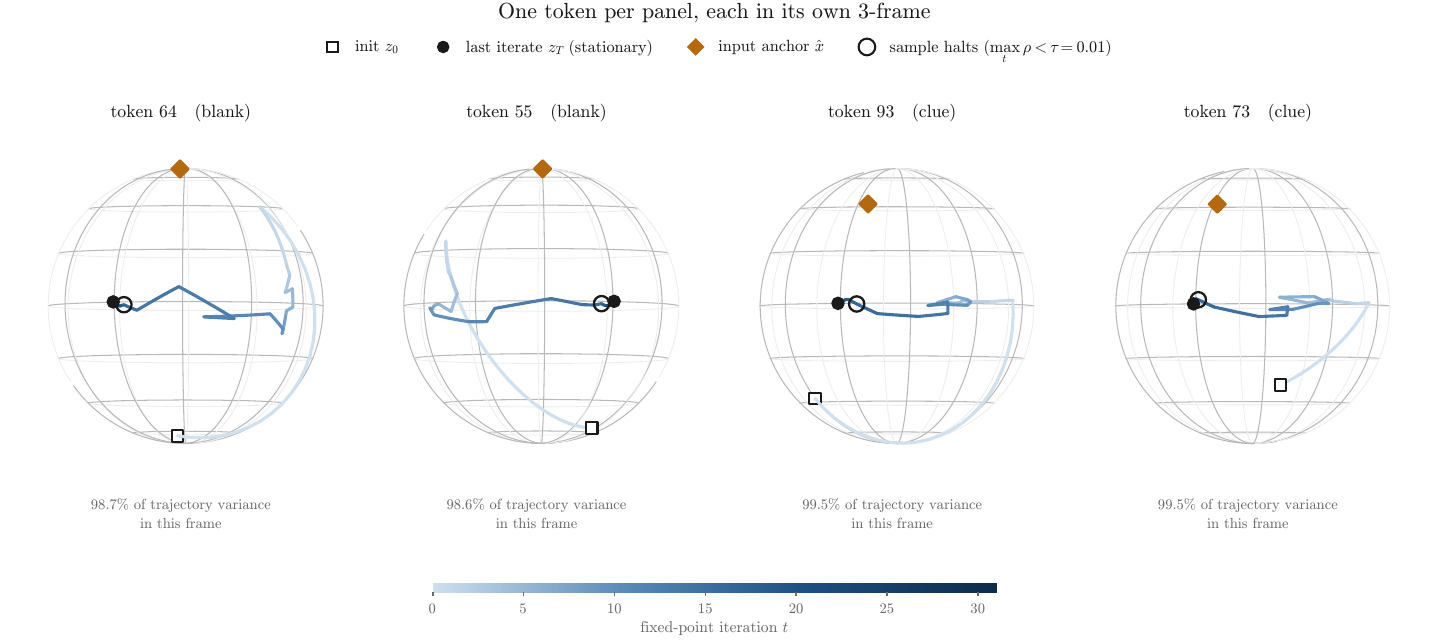}\\[10pt]
\includegraphics[width=.99\linewidth]{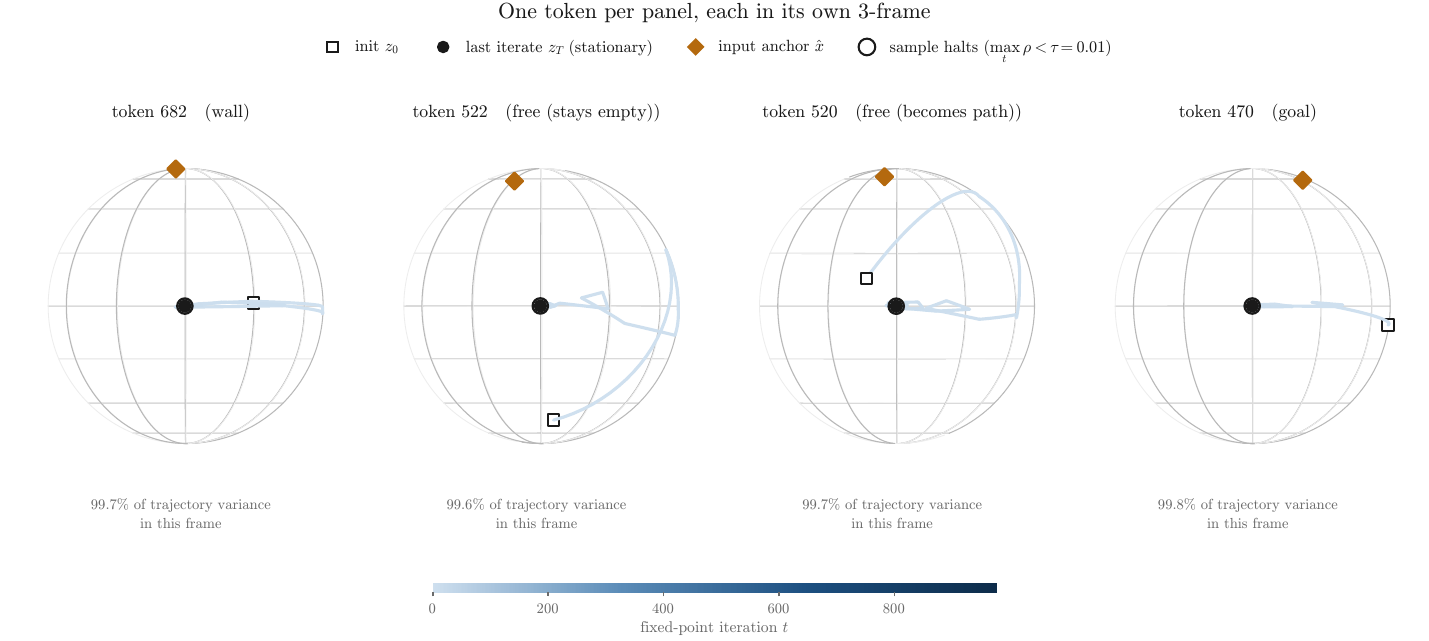}
\caption{\textbf{One \methodname trajectory per cell class, on the sphere} (Sudoku left, Maze right). For every cell class we draw the token whose $3$-frame captures the most trajectory variance. The frame is spanned by the trajectory's two leading singular directions and the anchor $x_t$, which sits at the pole. $x_t$ is nearly orthogonal to the plane of the trajectory ($|\inner{x_t}{v_{1,2}}|\approx0.05$). Arcs are great circles between projected iterates. The colour shows the iteration index.}
\label{fig:sphere_traj}
\end{figure}

\subsection{Fixed-point solvers at inference}
\label{appsec:solvers}

Both \methodname and FPRM are trained with a damped fixed-point iteration, but at inference any method that finds a fixed point of $F_\theta(\cdot\,;\mathbf x)$ can take its place. We run four solvers (including the default one) on the trained models, without retraining and without changing the architecture (see \Cref{sec:exp-halting}).

\subsubsection{Solvers}
\label{sec:solvers-methods}

All four solvers seek a fixed point of the same trained map.  The model weights are unchanged, and only the default solver (adaptive) was used during training. At test time the input embedding $\mathbf x$ is computed once and every solver runs the loop
\begin{equation*}
\widetilde{\mathbf z}_i=F_\theta(\mathbf z_i;\mathbf x),\qquad
\text{stop if } r_i<\tau,\qquad
\mathbf z_{i+1}=\mathrm{update}\bigl(\mathbf z_i,\widetilde{\mathbf z}_i,\text{history}\bigr),
\end{equation*}
starting from a random $\mathbf z_0$. Across solvers, only the update function differs. Each step evaluates $F_\theta$ once, and the same evaluation gives the halting residual, so the number of steps is the compute spent. \methodname halts on the
chordal residual $r_i=r_{\mathrm c,i}$ of \cref{eq:slfpr-residual} with $\tau_c=0.01$, and FPRM on the per-token
relative residual $r_i=r_{\mathrm E,i}=\max_t\norm{z_{i,t}-\widetilde z_{i,t}}_\infty/\norm{\widetilde z_{i,t}}_\infty$ (\cref{eq:fprm-halt}) with $\tau_{\mathrm E}=0.1$. We write $\mathbf g_i:=\widetilde{\mathbf z}_i-\mathbf z_i$
for the residual vector. Two operators adapt the updates to each model's state space: the interpolation
$\mathcal I_\eta(\mathbf a,\mathbf b)$ moves $\mathbf a$ a fraction $\eta$ of the way to $\mathbf b$, along the great
circle token by token for \methodname, $\mathcal I_\eta=\Slerp_\eta$ as in \cref{eq:sequence-slerp}, and along the
straight line for FPRM, $\mathcal I_\eta(\mathbf a,\mathbf b)=(1-\eta)\mathbf a+\eta\mathbf b$ as in
\cref{eq:fprm-damping}; the projection $\Pi$ applies $\Norm$ to every token for \methodname and is the identity for
FPRM.

\begin{figure}[t]
  \centering
  \begin{subfigure}[t]{0.48\linewidth}
    \centering
    \includegraphics[width=\linewidth]{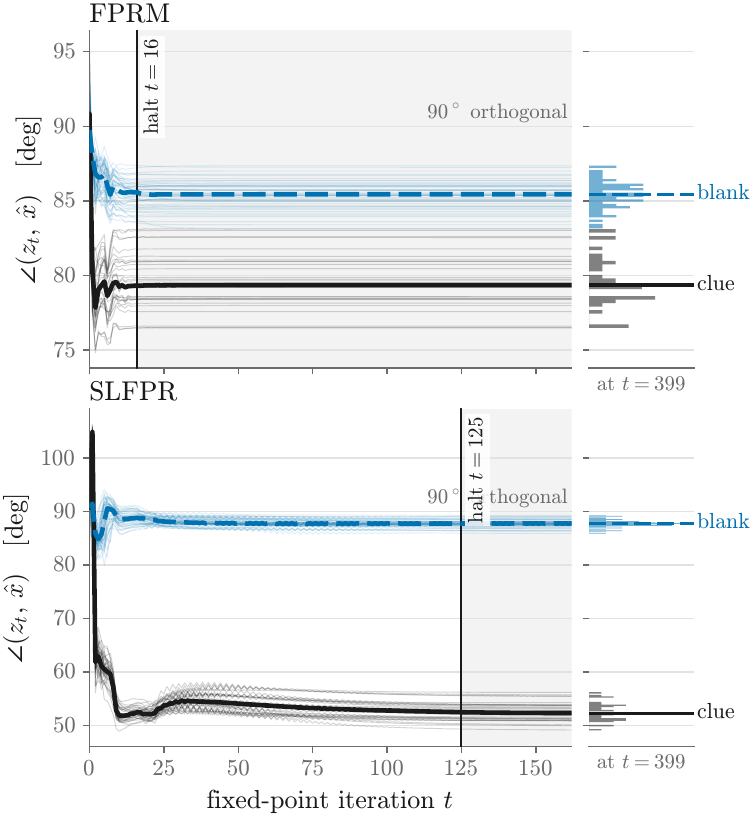}
    \caption{Sudoku test puzzle $0$}
  \end{subfigure}\hfill
  \begin{subfigure}[t]{0.48\linewidth}
    \centering
    \includegraphics[width=\linewidth]{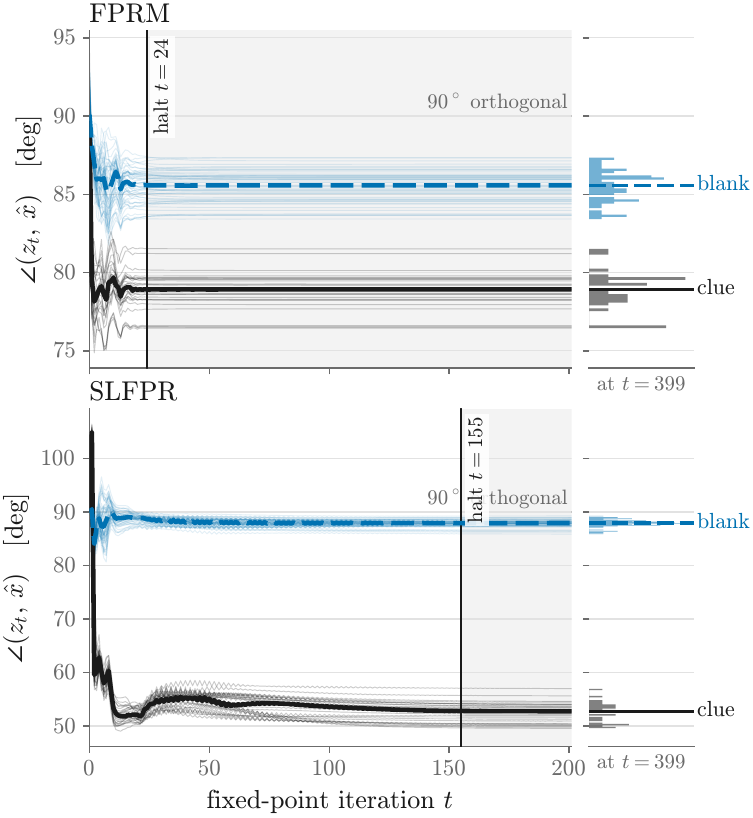}
    \caption{Sudoku test puzzle $1$}
  \end{subfigure}\\[4pt]
  \begin{subfigure}[t]{0.48\linewidth}
    \centering
    \includegraphics[width=\linewidth]{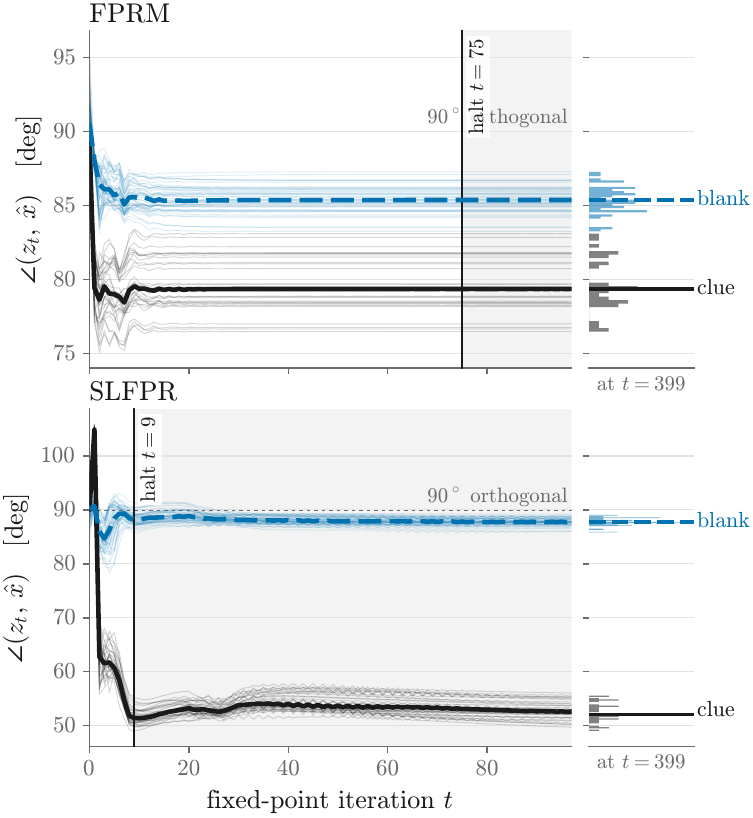}
    \caption{Sudoku test puzzle $2$}
  \end{subfigure}\hfill
  \begin{subfigure}[t]{0.48\linewidth}
    \centering
    \includegraphics[width=\linewidth]{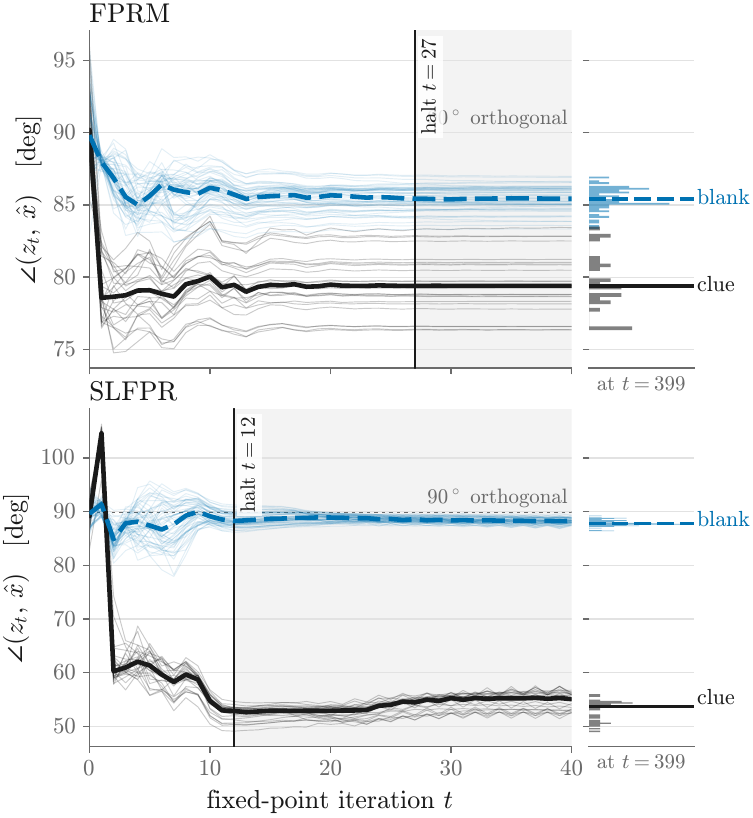}
    \caption{Sudoku test puzzle $3$}
  \end{subfigure}
  \caption{\textbf{Anchor angles by cell class on Sudoku-Extreme}. Four test puzzles (Maze in \Cref{fig:anchor-angles-maze,fig:anchor-angles-maze-gallery}). In each pair FPRM is above and \methodname below. Thin lines are individual cells, thick lines class means, the vertical line the halt iteration, and the margin the distribution at the end of the rollout. \methodname rotates the clues to ${\approx}52^\circ$ from their anchors within about $20$ iterations and leaves the blanks near
  orthogonal, whereas FPRM's normalized states settle within the first few iterations with clues and blanks only ${\approx}6^\circ$ apart. 
  }
  \label{fig:anchor-angles}
\end{figure}

\begin{figure}[t]
  \centering
  \begin{subfigure}[t]{0.48\linewidth}
    \centering
    \includegraphics[width=\linewidth]{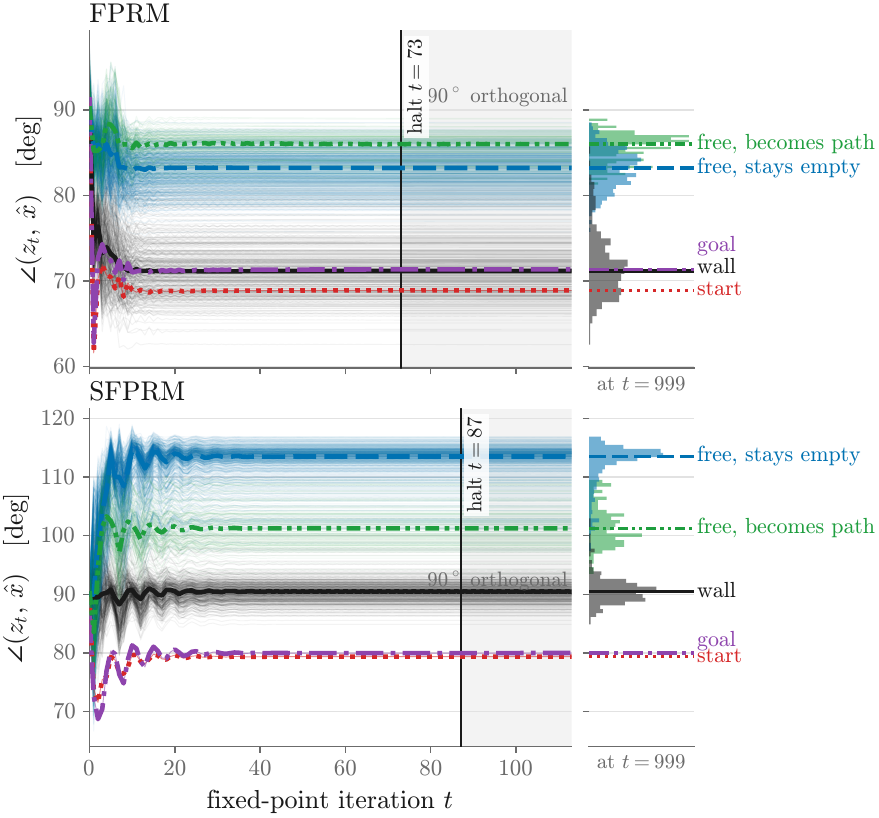}
    \caption{Maze test maze $0$}
  \end{subfigure}\hfill
  \begin{subfigure}[t]{0.48\linewidth}
    \centering
    \includegraphics[width=\linewidth]{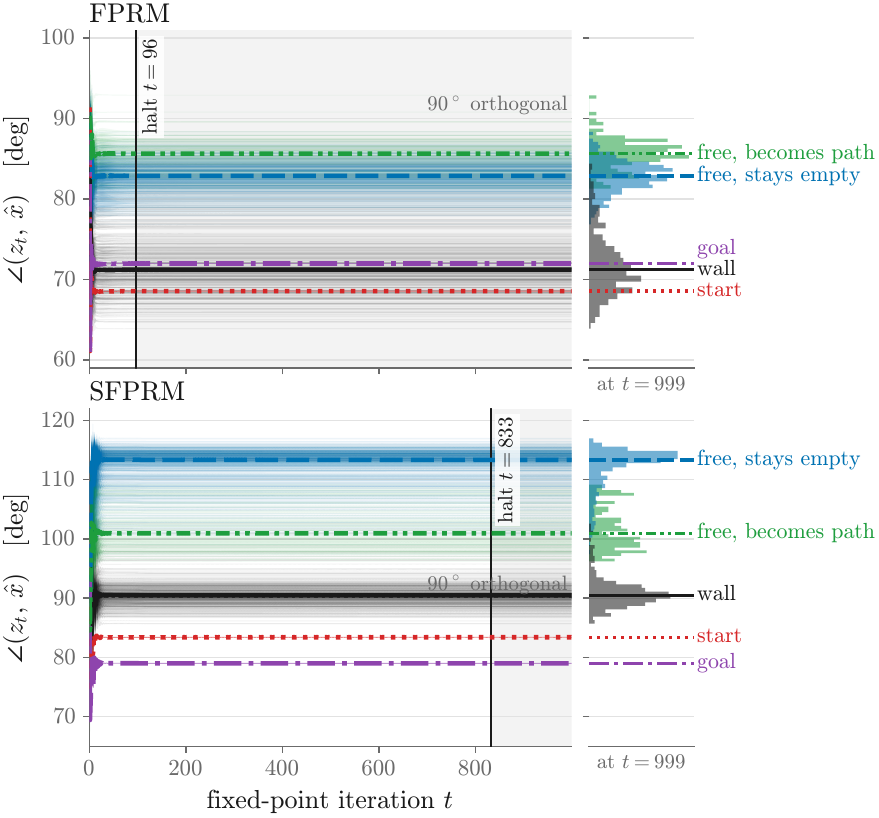}
    \caption{Maze test maze $1$}
  \end{subfigure}\\[4pt]
  \begin{subfigure}[t]{0.48\linewidth}
    \centering
    \includegraphics[width=\linewidth]{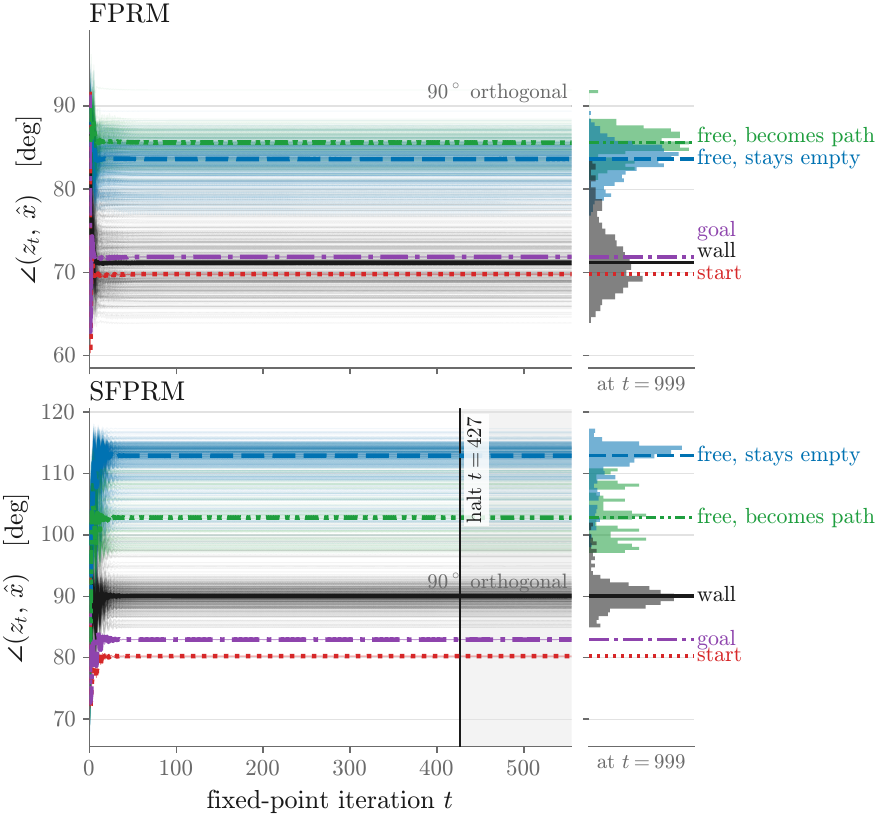}
    \caption{Maze test maze $2$}
  \end{subfigure}\hfill
  \begin{subfigure}[t]{0.48\linewidth}
    \centering
    \includegraphics[width=\linewidth]{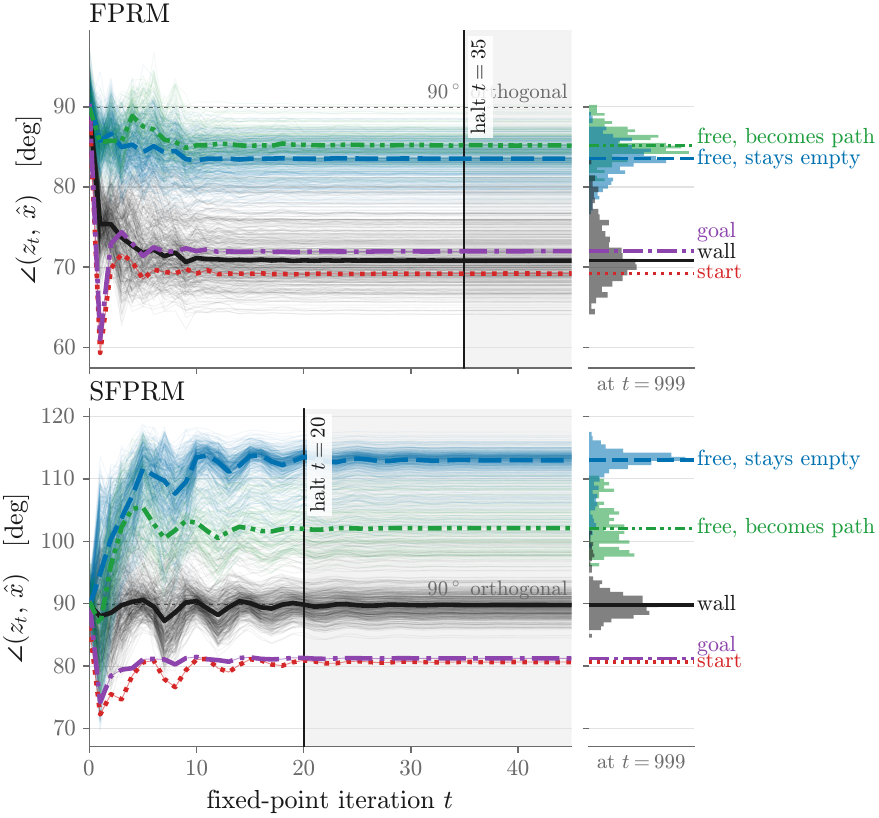}
    \caption{Maze test maze $3$ (\cref{fig:anchor-angles-maze})}
  \end{subfigure}
  \caption{\textbf{Anchor angles by cell class on Maze-Hard}. Four test puzzles, FPRM above and \methodname below in each pair, with the conventions of \Cref{fig:anchor-angles}. On every maze \methodname settles walls at ${\approx}90^\circ$, free cells that stay empty at ${\approx}113^\circ$, free cells that become path at ${\approx}102^\circ$, and start and goal at ${\approx}80^\circ$, whereas FPRM keeps all classes within $70$--$87^\circ$ and separates the two free-cell classes by only a few degrees. 
  }
  \label{fig:anchor-angles-maze-gallery}
\end{figure}

\textbf{Adaptive (default).} $\mathbf z_{i+1}=\mathcal I_{\eta_i}(\mathbf z_i,\widetilde{\mathbf z}_i)$ with
$\eta_0=1$. The step size is multiplied by $\gamma=0.99$ whenever the residual has not improved for $P=5$ consecutive
steps while it is still above $\tau$. Both models are trained and evaluated with this solver.

\textbf{Fixed step.} The same update with $\eta_i=1$ throughout, that is, plain fixed-point iteration
$\mathbf z_{i+1}=\widetilde{\mathbf z}_i$.

\textbf{Anderson acceleration} \citep{anderson1965iterative,walker2011anderson} replaces the latest state by the combination of the last $m+1$
states whose residuals cancel best. With the columns of $\Delta\mathbf G_i$ and $\Delta\widetilde{\mathbf Z}_i$ the $m$
differences of consecutive stored residuals and images,
\begin{equation*}
\boldsymbol\omega_i=\arg\min_{\boldsymbol\omega\in\reals^m}\,\norm{\mathbf g_i-\Delta\mathbf G_i\boldsymbol\omega}_2^2
+\delta\norm{\boldsymbol\omega}_2^2,\qquad
\mathbf z_{i+1}=\Pi\bigl(\widetilde{\mathbf z}_i-\Delta\widetilde{\mathbf Z}_i\boldsymbol\omega_i\bigr),
\end{equation*}
with a small ridge $\delta>0$.
The least-squares problem learns, from the recent iterates alone, how the residual responds to a change of state, 
and the combination removes the components that the plain step overshoots, so an oscillation across the fixed point, for example, cancels between consecutive states. We use memory $m=3$.

\textbf{Broyden's method} \citep{broyden1965class}, the solver commonly used for deep equilibrium models
\citep{Bai2019DEQ}, takes quasi-Newton steps on $\mathbf g(\mathbf z)=F_\theta(\mathbf z;\mathbf x)-\mathbf z$ with an
estimate $\mathbf B_i$ of the inverse Jacobian of $\mathbf g$:
\begin{equation*}
\mathbf z_{i+1}=\Pi\bigl(\mathbf z_i-\mathbf B_i\mathbf g_i\bigr),\qquad
\mathbf B_{i}=\mathbf B_{i-1}+\frac{(\mathbf s_i-\mathbf B_{i-1}\mathbf y_i)\,\mathbf s_i^{\top}\mathbf B_{i-1}}
{\mathbf s_i^{\top}\mathbf B_{i-1}\mathbf y_i},
\end{equation*}
where $\mathbf s_i=\mathbf z_i-\mathbf z_{i-1}$ and $\mathbf y_i=\mathbf g_i-\mathbf g_{i-1}$. Starting from $\mathbf B_0=-\mathbf I$, the estimate is stored as the last $m=5$ rank-one corrections.

Neither Anderson's nor Broyden's method computes a Jacobian or any gradient. Both are quasi-Newton methods that infer curvature from secants, the observed changes of state and residual between steps, so each step still costs one evaluation of $F_\theta$. With an empty memory both take exactly the fixed step. Both act on the whole state of an example at once, all tokens flattened into one vector, with one history per example in a batch.
Their extrapolated points leave the sphere and are projected back by $\Pi$. The extra work per step is a few vector
operations on the state and an $m\times m$ linear solve, and the extra memory is $m$ stored copies of the state, both
small next to an evaluation of $F_\theta$. As safeguards, a non-finite extrapolation falls back to the fixed step, and
Broyden skips an update whose denominator is near zero. Both methods use the step size $\eta=1$ (no damping).

\subsubsection{Results}
\label{sec:solvers-results}

We run every solver on all 1000 Maze test examples and on 1000 Sudoku test puzzles spaced evenly over the
422{,}786-puzzle test set, in fp32, with \methodname at its trained injection fraction ($\lambda=0.575$ on Maze,
$\lambda=0.1$ on Sudoku) and FPRM from its released checkpoints. Each example starts from its own random state (uniform
on the sphere for \methodname, the truncated normal of the FPRM solver for FPRM), shared by the four solvers, and is iterated
for 1000 steps. We then apply the halting rule to each example separately: the answer is the first iterate whose
residual is below $\tau$, and an example that never halts is answered by its iterate at step 1000.
\Cref{tab:solvers} reports the results, and \Cref{fig:solvers,fig:solver-pop} show accuracy and
residual over the iterations in the four settings.

\begin{table}[t]
\centering
\definecolor{accblue}{HTML}{0072B2}\definecolor{haltgreen}{HTML}{009E73}\definecolor{fastorange}{HTML}{D55E00}
\newcommand{\bestacc}[1]{\textbf{\textcolor{accblue}{#1}}}
\newcommand{\besthalt}[1]{\textbf{\textcolor{haltgreen}{#1}}}
\newcommand{\bestfast}[1]{\textbf{\textcolor{fastorange}{#1}}}
\caption{\textbf{Inference-time fixed-point solvers on the trained models.} Every example runs $1000$
steps from its own random start, shared by the four solvers; each step is one evaluation of $F_\theta$,
and an example halts at the first iterate with residual below $\tau$. \emph{Halting}: share of examples
that halt (\%), median and mean halting step over those, and mean $F$ evaluations over all examples.
\emph{Exact at halt}: accuracy over all examples, exact binomial test against the default solver ($p$),
and accuracy among halted examples. \emph{Open loop}: accuracy with halting disabled, at its best step and
at $t=1000$. Maze: all $1000$ test mazes; Sudoku: $1000$ puzzles evenly spaced over the test set.
Best within each block: \bestacc{exact accuracy}, \besthalt{share halted}, \bestfast{fewest halting steps}.}
\label{tab:solvers}
\scriptsize
\setlength{\tabcolsep}{3pt}
\resizebox{\linewidth}{!}{
\begin{tabular}{@{}l rrrr rrr rr@{}}
\toprule
 & \multicolumn{4}{c}{Halting} & \multicolumn{3}{c}{Exact at halt (\%)} & \multicolumn{2}{c}{Open loop (\%)} \\
\cmidrule(lr){2-5}\cmidrule(lr){6-8}\cmidrule(l){9-10}
Solver & \% & med.\ $t$ & mean $t$ & $F$ evals & all & $p$ & halted & peak ($t$) & $t{=}1000$ \\
\midrule
\multicolumn{10}{@{}l}{\textit{\methodname, Maze} \enspace ($\lambda{=}0.575$, $\tau_c{=}0.01$)} \\
\hspace{0.6em}Adaptive (default) & 99.2 & 27 & 96.3 & 104.5 & \bestacc{87.8} & -- & 87.9 & 87.8 (20) & 87.8 \\
\hspace{0.6em}Fixed, $\eta{=}1$ & 63.1 & 23 & 25.8 & 386.3 & \bestacc{87.8} & 1.00 & 87.3 & 87.8 (20) & 87.8 \\
\hspace{0.6em}Anderson, $m{=}3$ & \besthalt{100.0} & \bestfast{13} & \bestfast{13.8} & 14.8 & 86.8 & 0.04 & 86.8 & 87.4 (19) & 86.7 \\
\hspace{0.6em}Broyden, $m{=}5$ & 99.4 & 14 & 14.0 & 20.9 & 87.4 & 0.39 & 87.9 & 87.9 (16) & 51.5 \\
\midrule
\multicolumn{10}{@{}l}{\textit{\methodname, Sudoku} \enspace ($\lambda{=}0.1$, $\tau_c{=}0.01$)} \\
\hspace{0.6em}Adaptive (default) & 75.0 & 435 & 469.9 & 603.5 & \bestacc{95.9} & -- & 100.0 & 95.9 (933) & 95.9 \\
\hspace{0.6em}Fixed, $\eta{=}1$ & 0.0 & -- & -- & 1001.0 & 93.2 & $<$0.001 & -- & 93.2 (973) & 93.2 \\
\hspace{0.6em}Anderson, $m{=}3$ & \besthalt{95.4} & 37 & 71.3 & 115.1 & 95.3 & 0.34 & 99.9 & 95.3 (909) & 95.3 \\
\hspace{0.6em}Broyden, $m{=}5$ & 89.8 & \bestfast{24} & \bestfast{49.9} & 147.8 & 90.0 & $<$0.001 & 99.7 & 69.6 (27) & 1.5 \\
\midrule
\multicolumn{10}{@{}l}{\textit{FPRM, Maze} \enspace ($\tau_{\mathrm E}{=}0.1$)} \\
\hspace{0.6em}Adaptive (default) & 95.9 & 55 & 129.3 & 166.0 & 86.0 & -- & 86.8 & 86.1 (102) & 86.0 \\
\hspace{0.6em}Fixed, $\eta{=}1$ & 51.4 & 32 & 51.9 & 513.7 & 85.4 & 0.07 & 92.0 & 85.8 (24) & 85.4 \\
\hspace{0.6em}Anderson, $m{=}3$ & \besthalt{100.0} & 17 & \bestfast{18.0} & 19.0 & \bestacc{86.1} & 1.00 & 86.1 & 86.0 (64) & 86.0 \\
\hspace{0.6em}Broyden, $m{=}5$ & 98.6 & \bestfast{16} & 21.1 & 35.8 & 86.0 & 1.00 & 87.1 & 86.1 (19) & 32.8 \\
\midrule
\multicolumn{10}{@{}l}{\textit{FPRM, Sudoku} \enspace ($\tau_{\mathrm E}{=}0.1$)} \\
\hspace{0.6em}Adaptive (default) & \besthalt{91.5} & 50 & 125.0 & 200.3 & 88.0 & -- & 96.2 & 88.1 (972) & 88.1 \\
\hspace{0.6em}Fixed, $\eta{=}1$ & 43.8 & 39 & \bestfast{82.6} & 599.2 & 80.9 & $<$0.001 & 100.0 & 80.9 (932) & 80.9 \\
\hspace{0.6em}Anderson, $m{=}3$ & 89.2 & \bestfast{34} & 91.1 & 190.2 & \bestacc{88.8} & 0.38 & 99.6 & 89.0 (984) & 89.0 \\
\hspace{0.6em}Broyden, $m{=}5$ & 18.3 & 344 & 389.7 & 889.3 & 14.7 & $<$0.001 & 79.2 & 13.6 (8) & 0.6 \\
\bottomrule
\end{tabular}
}
\end{table}

\begin{figure}[t]
\centering
\includegraphics[width=0.325\linewidth]{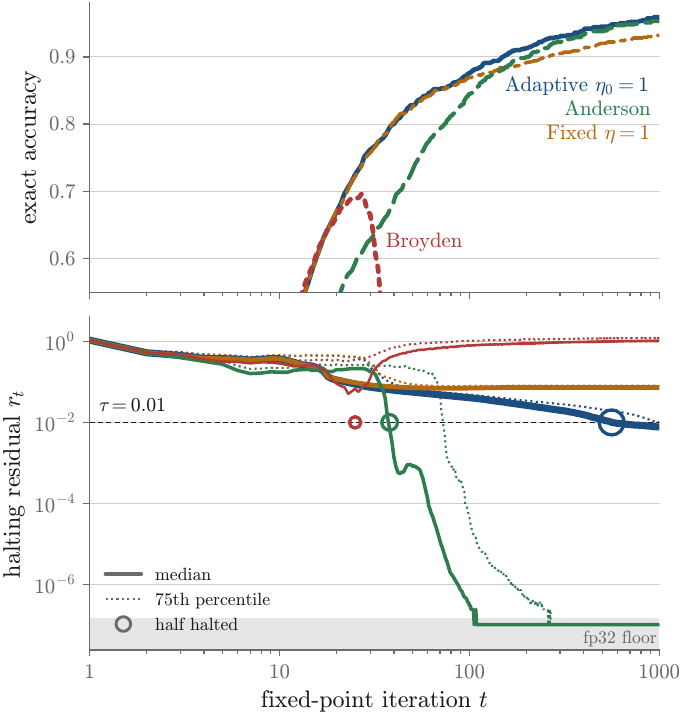}\hfill
\includegraphics[width=0.325\linewidth]{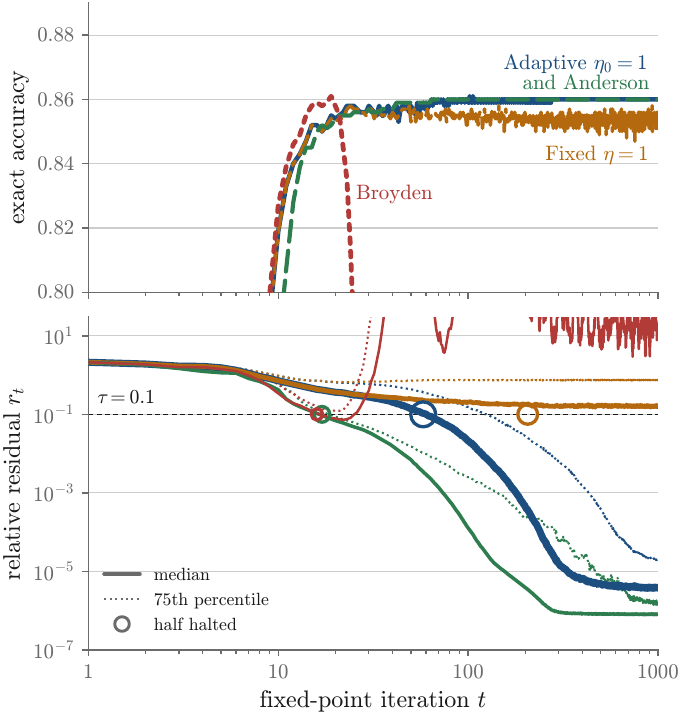}\hfill
\includegraphics[width=0.325\linewidth]{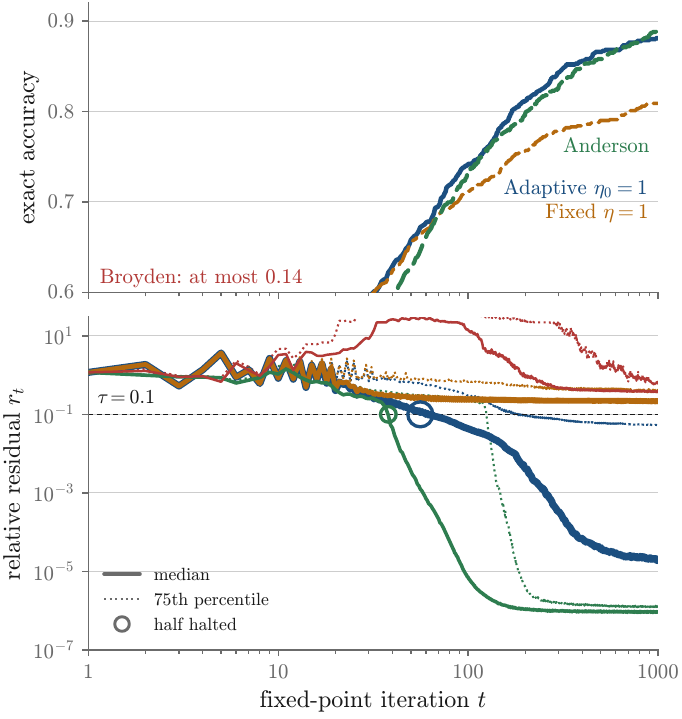}
\caption{\textbf{Accuracy and halting residual over the iterations for the remaining settings of
\Cref{tab:solvers}}. \methodname on Sudoku ($\lambda=0.1$, left), FPRM on Maze (middle) and FPRM on
Sudoku (right); \methodname on Maze is \Cref{fig:solvers}. Same $1000$ examples and solvers as the
table. \emph{Top:} exact accuracy of the iterate with halting disabled. \emph{Bottom:} each model's
halting residual, median (solid) and 75th percentile (dotted) over the examples, its threshold
(dashed), and the iteration by which half of the examples have halted (circle).}
\label{fig:solver-pop}
\end{figure}

\textbf{The undamped step does not halt reliably.}
The fixed step never reaches $\tau$ on 37\%, 100\%, 49\% and 56\% of the examples in the four settings. For \methodname on
Maze its answers are identical to the default solver's, but on Sudoku it is less accurate for both models. For \methodname on Sudoku a quarter of the puzzles do not halt within 1000 steps.

\textbf{Anderson acceleration matches the default solver at a fraction of the cost.}
Its accuracy at the halt is within one point of the default solver's in all four settings. Anderson halts earlier in every setting, with a striking 37 against 435 for \methodname on Sudoku. Averaged over all examples, this is 9$\times$ fewer evaluations of $F_\theta$. For FPRM on Sudoku the saving disappears (190 against 200 evaluations): 10.8\% of the puzzles never reach $\tau$ under Anderson, and all of them are answered incorrectly.

\textbf{Broyden's method reaches the tolerance threshold quickly but does not stay there.}
On Maze it halts as early as Anderson (14 and 16 median steps), and its halted answers are as accurate as the default solver's. By continuing to iterate beyond the tolerance, however, it leaves the fixed point and its accuracy degrades, making it usable only with halting.

\textbf{On Sudoku, halting certifies the answer.}
For \methodname on Sudoku, 99.7--100\% of the puzzles a solver halts on are solved exactly, so the accuracy differences come entirely from the puzzles it fails to halt on. The same holds less strictly for FPRM (96.2\% of the default solver's halted puzzles are solved, 99.6\% of Anderson's). On Maze, halting does not certify the answer: incorrect mazes converge too, and for the solvers that halt on almost every example the accuracy among halted examples is about the same as the overall accuracy.


\textbf{Trajectories in the principal plane.}
\Cref{fig:solver-planes} shows the four solvers at the trained $\lambda$ in the plane construction of \Cref{fig:global-conv}b, on a single test example from each task, from eight initializations. The reference $\mathbf z^\star$ is Anderson's end point, a fixed point to fp32 precision that decodes both examples exactly. On Maze, Anderson halts at iteration $13$ from all eight initializations, the default solver crosses the tolerance at a median of $140$
iterations but ends $0.06$~rad away, and fixed $\eta=1$ never crosses it. On the Sudoku puzzle only Anderson halts (median $40$), while the default and fixed solvers end $0.16$ and $0.33$~rad from $\mathbf z^\star$ and still decode the puzzle exactly. Broyden reaches the tolerance as early as Anderson and then leaves from most starts. 

\textbf{Trajectories in polar coordinates.}
\Cref{fig:solver-traj-slfpr,fig:solver-traj-sudoku} follow one token per cell class under each solver in the polar view of \Cref{fig:solvers}b, for the maze of that panel and for a Sudoku puzzle. The radius is the geodesic angle to a common reference, Anderson's end point after $2000$ iterations, a fixed point to fp32 precision that decodes both
examples exactly. Inward is convergence toward $\mathbf z^\star$, a held radius is orbiting, and outward is leaving. Anderson ends within
$10^{-4}$ degrees of the reference. The default solver ends $0.005$--$0.14^\circ$ away on the mazes and up to $2.1^\circ$ on the Sudoku puzzles, after orbiting for hundreds of iterations; fixed $\eta=1$ stays
up to $0.64^\circ$ and $4$--$17^\circ$ away and never halts; and Broyden reaches the tolerance first and then leaves to $60$--$103^\circ$.

\begin{figure}[t]
\centering
\includegraphics[width=\linewidth]{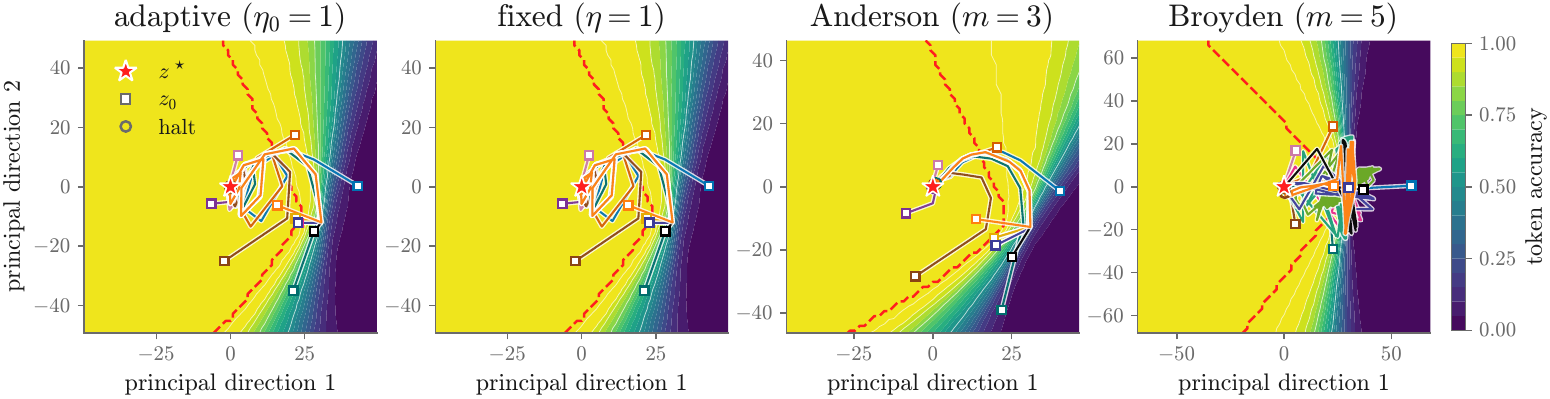}\\[2pt]
\includegraphics[width=\linewidth]{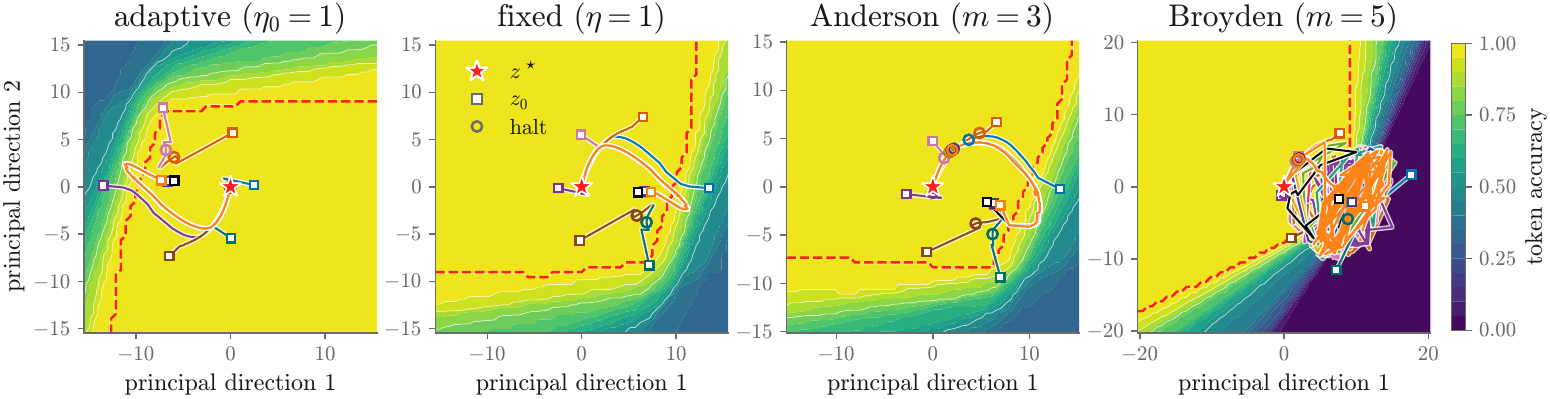}
\caption{\textbf{The four solvers in the principal plane through the fixed point}, for test example
$0$ of Maze-Hard (top, $\lambda=0.575$) and of Sudoku-Extreme (bottom, $\lambda=0.1$). Axes and markers
as in \Cref{fig:global-conv}b, with $\mathbf z^\star$ the end point of Anderson acceleration; every
solver runs from the same eight initializations (squares) and six further starts placed on its plane.
Each panel uses the top-2 principal directions of that solver's own iterates, so the background
landscapes differ between panels.}
\label{fig:solver-planes}
\end{figure}

\begin{figure}[p]
\centering
\includegraphics[width=0.99\linewidth]{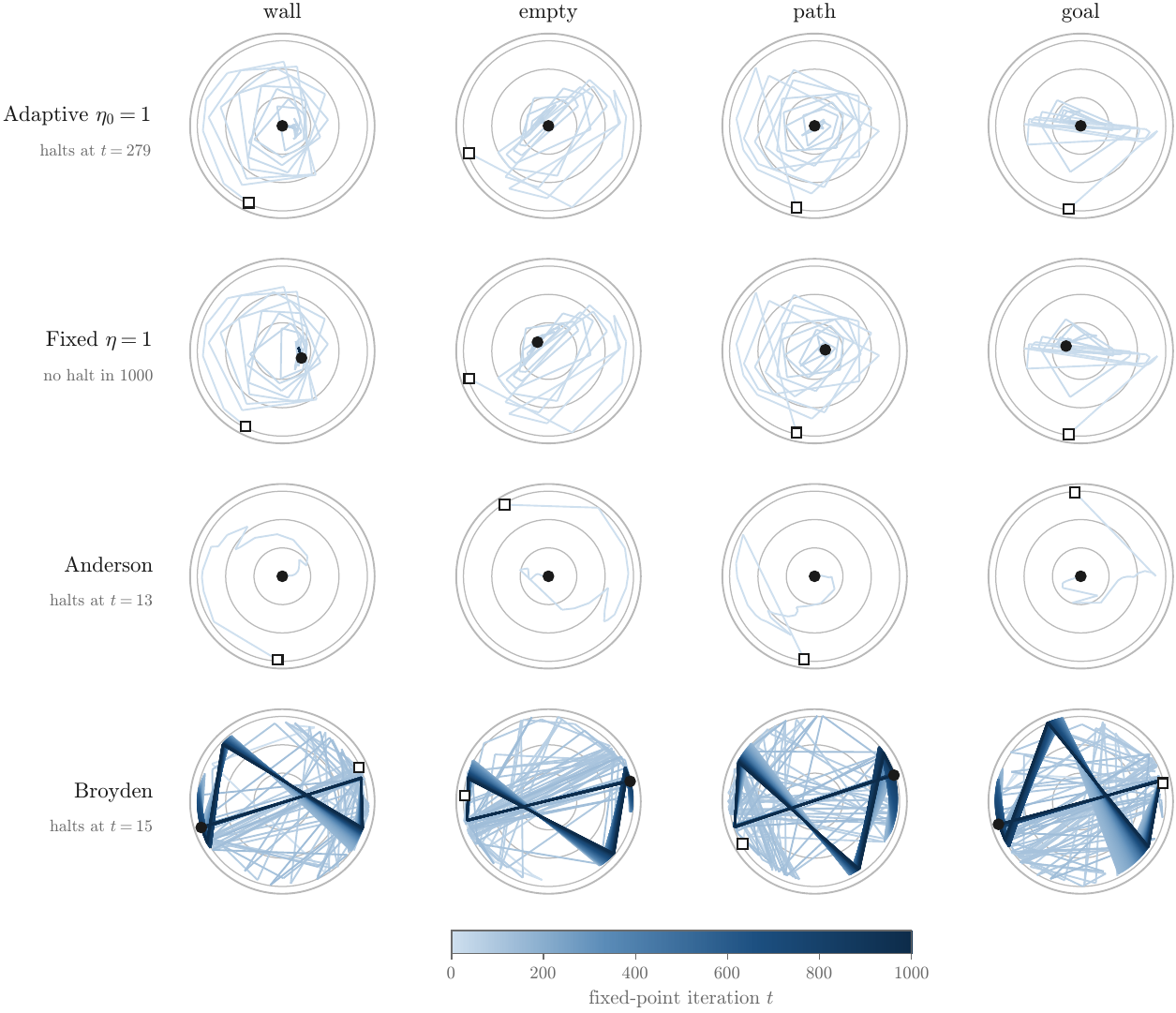}
\caption{\textbf{\methodname on Maze-Hard: State trajectories in polar coordinates.} The four solvers
(rows, with each one's halting step) on the maze of \Cref{fig:solvers}b, one token per cell class
(columns). Radius: geodesic angle to the reference fixed point, log-scaled from $0.1^\circ$ at the
centre to $180^\circ$ at the rim, rings at $1$, $10$ and $100^\circ$; azimuth: the top-2 principal
plane of that token's component orthogonal to the reference; square: start, dot: end of the
$1000$-iteration run; colour: iteration. Anderson ends at the centre, the default and fixed steps
orbit at a small radius, and Broyden leaves for the rim.}
\label{fig:solver-traj-slfpr}
\end{figure}

\begin{figure}[p]
\centering
\includegraphics[width=0.99\linewidth]{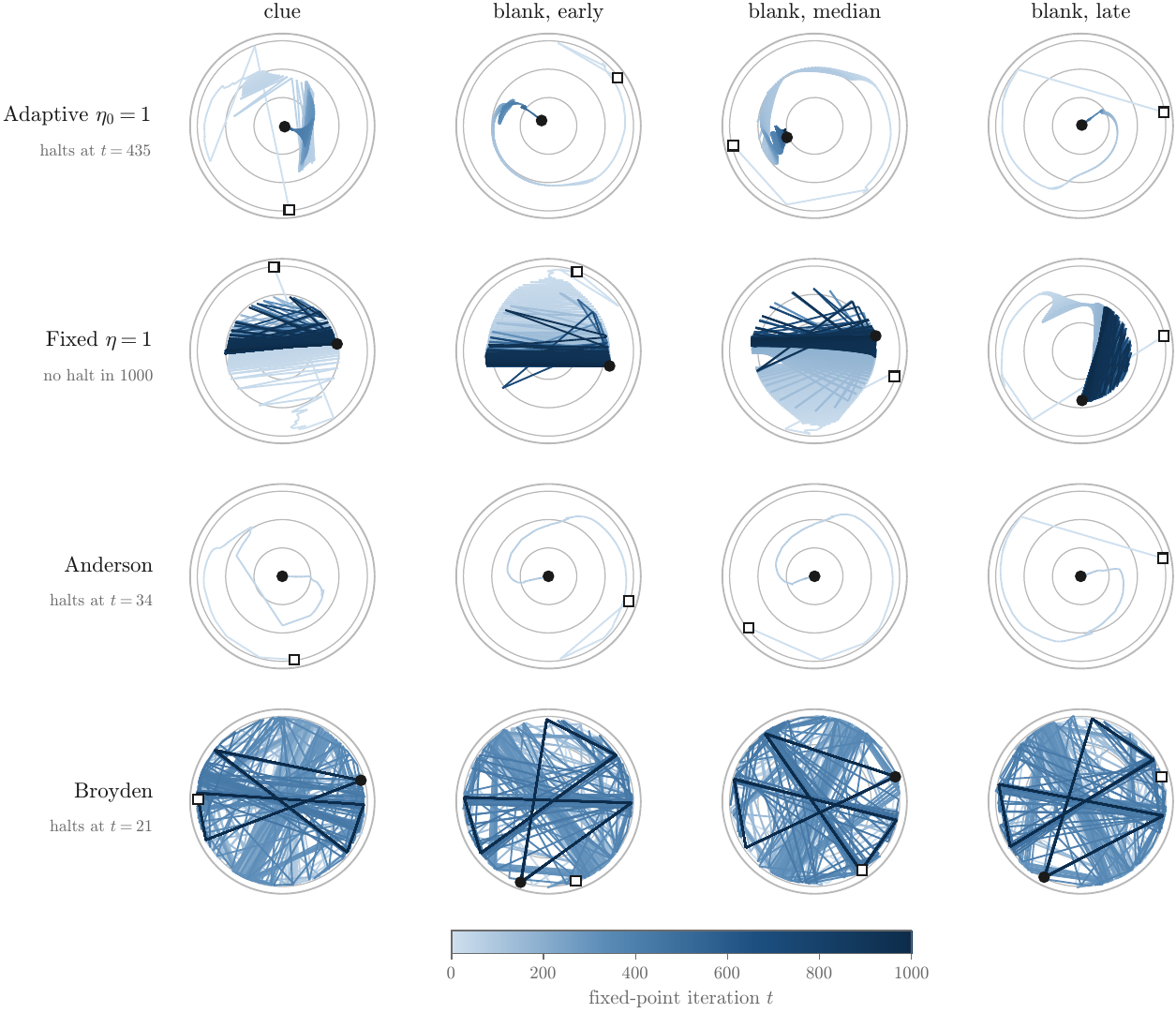}
\caption{\textbf{\methodname on Sudoku-Extreme: State trajectories in polar coordinates.} A puzzle on which the default solver halts at its median step ($435$); \textit{columns}: the middle clue cell and the blank cells whose decode settles earliest, at the median and last. Anderson and Broyden reach the tolerance early, the default solver orbits for hundreds of iterations before halting, fixed $\eta=1$ never halts, and Broyden fills the disc after leaving.}
\label{fig:solver-traj-sudoku}
\end{figure}


\subsection{ARC-AGI-2 Results}
\label{sec:arc2}

\begin{wraptable}[11]{R}{0.49\linewidth}
\vspace{-11mm}
\centering
\caption{\textbf{\methodname ablations on Mini-ARC.} Pass@1 accuracy at an iteration cap of 200. Each row changes one component of the full model. $\Delta$: change relative to the full model.}
\vspace{-2mm}
\label{tab:miniarc-ablation}
\resizebox{\linewidth}{!}{
\begin{tabular}{lcc}
\toprule
Variant & Pass@1 & $\Delta$ \\
\midrule
\textbf{\methodname (full)} & \textbf{59.1} & -- \\
\midrule
w/o QK normalization & 26.2 & $-32.9$ \\
w/o damping ($\eta = 1$) & 47.7 & $-11.4$ \\
Fixed injection fraction ($\lambda = 0.1$) & 49.7 & $-9.4$ \\
\midrule
ACT halting instead of chordal residual & 54.4 & $-4.7$ \\
Fixed eigen learning rates ($\alpha_A = \alpha_M = 0.5$) & 55.0 & $-4.0$ \\
w/o logit scale ($s_z = 1$) & 55.7 & $-3.4$ \\
w/o weight normalization & 56.4 & $-2.7$ \\
\bottomrule
\end{tabular}
}
\end{wraptable}

We compute the Pass@2 on the $120$ evaluation tasks of ARC-AGI-2. All models remain far from solving the benchmark. The best recurrent reasoner, TRM, reaches $7.8\%$, and \methodname reaches $3.8\%$. The gaps nonetheless amount to only a few tasks, so we do not draw strong conclusions from them. At $7$M parameters, \methodname still outperforms reasoning LLMs, including o3-mini-high ($3.0\%$), DeepSeek-R1 ($1.3\%$) and Claude~3.7 ($0.7\%$).


\subsection{Algorithmic Ablations}
Table~\ref{tab:miniarc-ablation} retrains the Mini-ARC model with one component of \methodname changed at a time. QK normalization is essential, and removing the step damping ($\eta = 1$) or fixing the input-injection fraction costs 11.4 and 9.4 points, respectively. The remaining changes, including ACT halting in place of the chordal residual, cost at most 4.7 points.

\section{Limitations}
\label{app:limitations}
On the theory side, \Cref{prop:converge} certifies a step-size window but not that the patience rule finds it, and \Cref{thm:global-convergence} covers only strong injection. On the experimental side, every run uses a
single seed and neither model was tuned beyond FPRM's recipe: a broader hyperparameter search, stronger regularization, and optimizers that
exploit the spherical geometry further would likely improve both. Two directions remain open. Attention-free MLP-Mixer blocks have been reported to beat Transformer blocks on some of these puzzles
\citep{JolicoeurMartineau2025TRM,sghaier2026ptrm,suleymanzade2026loopedflows}; the spherical loop is agnostic to the block and could be combined with them. And we have not applied \methodname to language modeling, where looped Transformers are gaining ground \citep{geiping2025recurrent,prairie2026parcaescalinglawsstable} and where halting on a fixed angular tolerance may be most valuable.

\clearpage


\begin{figure}[p]
  \centering
  \begin{tabular}{@{}c@{\hspace{3pt}}c@{\hspace{5pt}}c@{}}
    & \dfp{\textbf{FPRM}} & \dsl{\textbf{\methodname}} \\[1pt]
    \sudokutop{0}
    \sudokumid{1}
    \sudokumid{2}
    \sudokumid{3}
    \sudokubot{4}
  \end{tabular}
  \caption{\textbf{Filmstrip gallery on Sudoku-Extreme.} Test puzzles $0$--$4$. One test puzzle per row, rolled open-loop for $40$ iterations and rendered at $i=0,2,11,39$ (protocol in Appendix \ref{app:exp-rollouts}), for FPRM (left) and \methodname (right): decoded board on top, each cell's angle to its input anchor below.}
  \label{fig:filmstrips-sudoku}
\end{figure}


\begin{figure}[p]
  \centering
  \begin{tabular}{@{}c@{\hspace{3pt}}c@{\hspace{5pt}}c@{}}
    & \dfp{\textbf{FPRM}} & \dsl{\textbf{\methodname}} \\[1pt]
    \mazetop{0}
    \mazemid{1}
    \mazemid{2}
    \mazemid{3}
    \mazebot{4}
  \end{tabular}
  \caption{\textbf{Filmstrip gallery on Maze-Hard.} Test puzzles $0$--$4$. One $30{\times}30$ test maze per row, rolled open-loop for $40$ iterations and rendered at $i=0,2,11,39$ (protocol in Appendix \ref{app:exp-rollouts}), for FPRM (left) and \methodname (right):
  decoded path on top, each cell's angle to its input anchor below.}
  \label{fig:filmstrips-maze}
\end{figure}


\begin{figure}[p]
  \centering
  \begin{tabular}{@{}c@{\hspace{3pt}}c@{\hspace{5pt}}c@{}}
    & \dfp{\textbf{FPRM}} & \dsl{\textbf{\methodname}} \\[1pt]
    \arctop{0}
    \arcmid{1}
    \arcmid{2}
    \arcmid{3}
    \arcmid{4}
    \arcbot{5}
  \end{tabular}
  \caption{\textbf{Filmstrip gallery on Mini-ARC.} One test task per row, rolled out open-loop for $200$ iterations. Left: FPRM; right: \methodname. Each row starts with the test input (top) and its target (bottom), followed by five iterations from $t=0$ to $t=199$. At each iteration the decoded grid is shown on top, with $\times$ marking cells that differ from the target, and each cell's angle to its input anchor, $\angle(z_t,\hat x)$, below.}
  \label{fig:filmstrips-arc}
\end{figure}

\begin{figure}[p]
  \centering
  \begin{tabular}{@{}c@{\hspace{3pt}}c@{\hspace{5pt}}c@{}}
    & \dfp{\textbf{FPRM}} & \dsl{\textbf{\methodname}} \\[1pt]
    \arcetop{6}
    \arcemid{7}
    \arcemid{8}
    \arcemid{9}
    \arcemid{10}
    \arcemid{11}
    \arcebot{12}
  \end{tabular}
  \caption{\textbf{Filmstrip gallery on Mini-ARC (continued).} Same caption as in \Cref{fig:filmstrips-arc}.}
  \label{fig:filmstrips-arc2}
\end{figure}

\end{document}